\DeclarePairedDelimiter{\floor}{\lfloor}{\rfloor}
\newtheorem{lemma}{Lemma}
\newtheorem*{lemma*}{Lemma}
\newtheorem{theorem}{Theorem}
\newtheorem*{theorem*}{Theorem}
\theoremstyle{definition}
\newtheorem{defn}{Definition}
\theoremstyle{definition}
\newtheorem{remark}{Remark}
\theoremstyle{definition}
\newtheorem{corollary}{Corollary}
\newtheorem*{corollary*}{Corollary}
\theoremstyle{definition}
\newtheorem*{claim*}{Claim}
\newcommand{\same}{\text{same}}
\newcommand{\bbE}{\mathbb{E}}
\newcommand{\bbN}{\mathbb{N}}
\newcommand{\bbR}{\mathbb{R}}
\newcommand{\calA}{\mathcal{A}}
\newcommand{\calB}{\mathcal{B}}
\newcommand{\calC}{\mathcal{C}}
\newcommand{\calD}{\mathcal{D}}
\newcommand{\calH}{\mathcal{H}}
\newcommand{\calM}{\mathcal{M}}
\newcommand{\calO}{\mathcal{O}}
\newcommand{\calP}{\mathcal{P}}
\newcommand{\calR}{\mathcal{R}}
\newcommand{\calS}{\mathcal{S}}
\newcommand{\calU}{\mathcal{U}}
\newcommand{\calX}{\mathcal{X}}
\newcommand{\calY}{\mathcal{Y}}
\renewcommand{\(}{\left(}
\renewcommand{\)}{\right)}
\newcommand{\eps}{\epsilon}
\newcommand{\samp}{\mathrm{samp}}
\newcommand{\bh}{\boldsymbol{h}}
\newcommand{\bu}{\boldsymbol{u}}
\newcommand{\bx}{\boldsymbol{x}}
\newcommand{\bp}{\boldsymbol{p}}
\title{Renyi Differential Privacy of the Subsampled Shuffle Model in Distributed Learning}
\author{Antonious M. Girgis, Deepesh Data, Suhas Diggavi}
\date{}
\begin{document}

\maketitle

\begin{abstract}
We study privacy in a distributed learning framework, where clients
collaboratively build a learning model iteratively through
interactions with a server from whom we need privacy. Motivated by
stochastic optimization and the federated learning (FL) paradigm, we
focus on the case where a small fraction of data samples are randomly
sub-sampled in each round to participate in the learning process,
which also enables privacy amplification.  To obtain even stronger
local privacy guarantees, we study this in the shuffle privacy model,
where each client randomizes its response using a local differentially
private (LDP) mechanism and the server only receives a random
permutation (shuffle) of the clients' responses without their
association to each client. The principal result of this paper is a
privacy-optimization performance trade-off for discrete randomization
mechanisms in this sub-sampled shuffle privacy model. This is enabled
through a new theoretical technique to analyze the Renyi Differential
Privacy (RDP) of the sub-sampled shuffle model.  We numerically
demonstrate that, for important regimes, with composition our bound
yields significant improvement in privacy guarantee over the
state-of-the-art approximate Differential Privacy (DP) guarantee (with
strong composition) for sub-sampled shuffled models. We also
demonstrate numerically significant improvement in privacy-learning
performance operating point using real data sets.
\end{abstract}

{\allowdisplaybreaks

\section{Introduction}\label{sec:introduction}

As learning moves towards the edge, there is a need to collaborate to
build learning models\footnote{This is because no client has access to
  enough data to build rich learning models locally and we do not want to
  directly share local data.}, such as in federated learning
\cite{konevcny2016federated,yang2019federated,kairouz2019advances}. In
this framework, the collaboration is typically mediated by a
server. In particular, we want to collaboratively build a learning model by solving 
an empirical risk minimization (ERM) problem (see \eqref{eq:problem-formulation} in Section \ref{sec:prelims_prob-form}). To obtain a model parametrized by $\theta$ using
ERM, the commonly used mechanism is Stochastic Gradient
Descent (SGD) \cite{bottou2010large}. However, one needs to solve this
while enabling strong privacy guarantees on local data from the
server, while also obtaining good learning performance, \emph{i.e.,} a
suitable privacy-learning performance operating point.

Differential privacy (DP)~\cite{Calibrating_DP06} is the gold standard
notion of data privacy that gives a rigorous framework through
quantifying the information leakage about individual training data
points from the observed interactions.  Though DP was originally
proposed in a framework where data resides centrally
\cite{Calibrating_DP06}, for distributed learning the more
appropriate notion is of local differential privacy
(LDP)~\cite{kasiviswanathan2011can,duchi2013local}. Here, each client
randomizes its interactions with the server from whom the data is to
be kept private (\emph{e.g.,} see industrial implementations
~\cite{erlingsson2014rappor,greenberg2016apple,microsoft}).  However,
LDP mechanisms suffer from poor performance in comparison with the
central DP mechanisms~\cite{duchi2013local,kasiviswanathan2011can,kairouz2016discrete}. To
overcome this, a new privacy framework using anonymization has been
proposed in the so-called \emph{shuffled model}
\cite{erlingsson2019amplification,ghazi2019power,balle2019improved,ghazi2019scalable,balle2019differentially,cheu2019distributed,balle2019privacy,balle2020private}.
In the shuffled model, each client sends her private message to a
secure shuffler that randomly permutes all the received messages
before forwarding them to the server. This model enables significantly
better privacy-utility performance by amplifying DP through this shuffling. 
Therefore, in this paper we consider the shuffle privacy framework for
distributed learning.

In solving \eqref{eq:problem-formulation} using (distributed) gradient descent, each
exchange leaks information about the local data, but we need as many
steps as possible to obtain a good model; setting up the tension
between privacy and performance. The goal is to obtain as many such
interactions as possible for a given privacy budget. This is
quantified through analyzing the privacy of the composition of privacy
mechanisms.  Abadi \emph{et al.}  \cite{abadi2016deep} developed a
framework for tighter analysis of such compositions, and this was
later reformulated in terms of Renyi Differential Privacy
(RDP)~\cite{mironov2017renyi}, and mapping this back to DP guarantee
\cite{mironov2019r}. Therefore, studying RDP is important to obtaining
strong composition privacy results, and is the focus of this paper.

\begin{figure}   
\centering
    \includegraphics[scale=0.3]{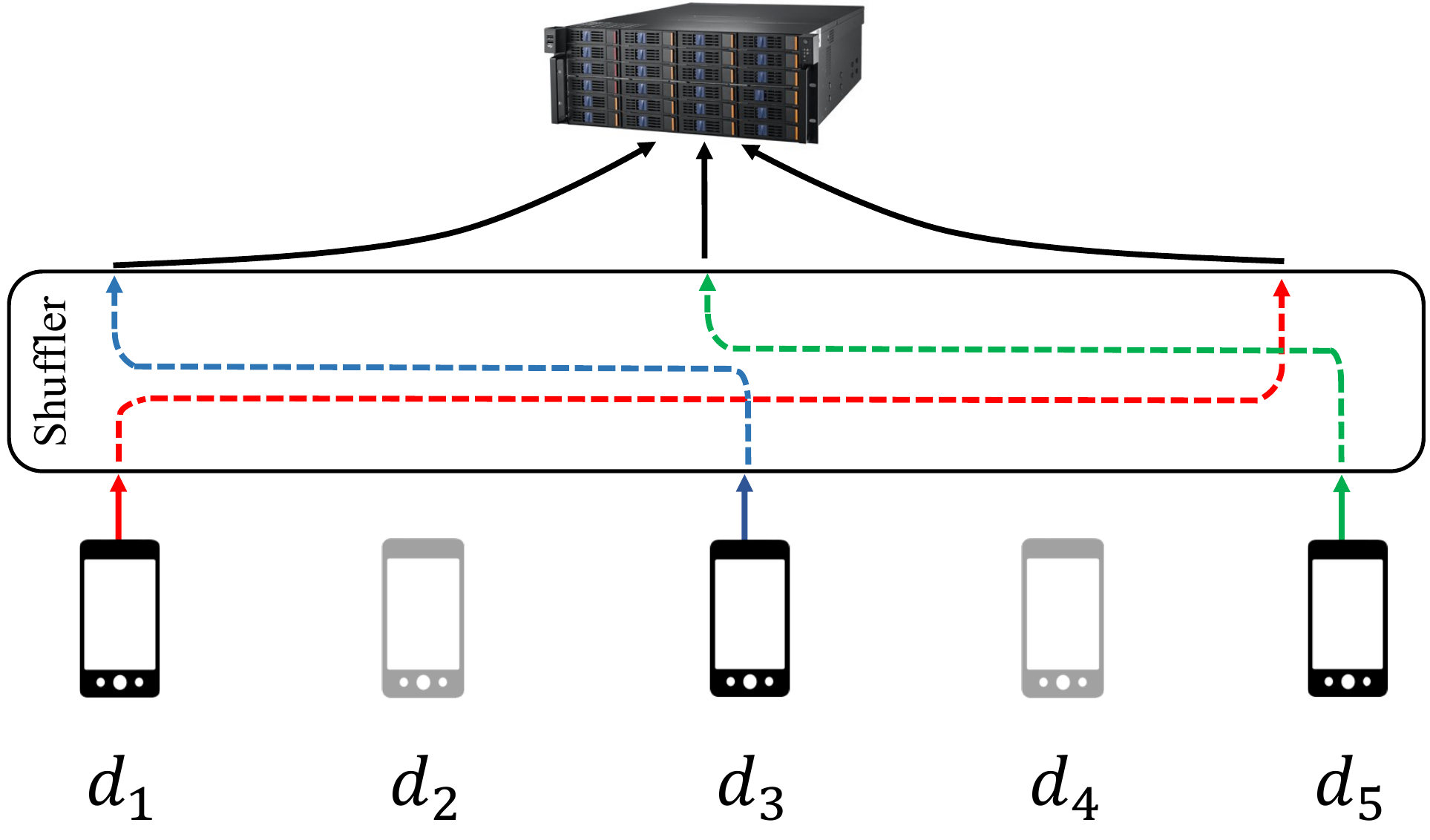}
    \captionof{figure}{An iteration from the CLDP-SGD Algorithm, where $3$ clients are randomly chosen at each iteration. Each client sends the private gradient $\calR_p\left(g_t(d_i)\right)$ to the shuffler that randomly permutes the gradients before passing them to the server.}
    \label{fig:Setup}
\end{figure}
  
In distributed (and federated) learning, a fraction of the data
samples are sampled; for example, with random client participation and
stochastic gradient descent (SGD), which can be written as
\[
\theta_{t+1} \leftarrow \theta_t - \eta_t \frac{1}{|\mathcal{I}|}\sum_{i\in\mathcal{I}} \mathcal{R}(\nabla f_i(\theta_{t})),
\]
where $\mathcal{R}$ is the local randomization mechanism and
$\mathcal{I}$ are the indices of the sampled data. This is a
subsampled mechanism that enables another privacy amplification
opportunity; which, in several cases, is shown to yield a privacy
advantage proportional to the subsampling rate; see
\cite{kasiviswanathan2011can,Jonathan2017sampling}.  The central
technical question addressed in this paper is how to analyze the RDP
of an arbitrary discrete mechanism for the subsampled shuffle privacy
model. This enables us to answer the overall question posed in this
paper, which is an achievable privacy-learning performance trade-off
point for solving \eqref{eq:problem-formulation} in the shuffled privacy model for
distributed learning (see Figure \ref{fig:Setup}). Our contributions are:

\begin{itemize}[leftmargin=*]
\item We analyze the RDP of subsampled mechanisms in the shuffle
  framework by developing a novel bound applicable to any discrete
  $\epsilon_0$-LDP mechanism as a function of the RDP order $\lambda$,
  subsampling rate $\gamma$, the LDP parameter $\epsilon_0$, and the
  number of clients $n$; see Theorem~\ref{thm:general_case}. The bound
  is explicit and amenable to numerics, including all
  constants.\footnote{As emphasized in \cite{wang2019subsampled}, ``in
    differential privacy, constants matter''.} Furthermore, the
  bounds are valid for generic LDP mechanisms and {\em all} parameter
  regimes.\footnote{Some of the best known approximate DP bounds for
    the shuffle model~\cite{balle2019privacy,feldman2020hiding} are restricted to certain parameter regimes in
    terms of $n, \delta, \epsilon_0$, etc.}  We also provide a
  lower bound for the RDP in Theorem \ref{thm:lower_bound}. 
  We prove our upper bound (Theorem~\ref{thm:general_case}) using the following novel analysis techniques:
  First, we reduce the problem of computing the RDP of sub-sampled shuffle 
  mechanisms to the problem of computing ternary $|\chi|^{\alpha}$-DP \cite{wang2019subsampled} of shuffle (non sub-sampled) mechanisms; see Lemma~\ref{lem:samplig}.
Then we reduce the computation of the ternary $|\chi|^{\alpha}$-DP of shuffle
  mechanisms for a {\em generic} triple of
  neighboring datasets to those that have a special
  structure (see Theorem \ref{thm:reduce_special_case}) --  this reduction step is one of the core technical results of this paper.
  Then we bound the ternary $|\chi|^{\alpha}$-DP of the shuffle
  mechanisms for triples of neighboring datasets having special structures by bounding the Pearson-Vajda divergence~\cite{wang2019subsampled} using some concentration properties (see Theorem~\ref{thm:ternary_special_case}).

\item Using the core technical result in Theorem
  \ref{thm:general_case}, we analyze privacy-convergence trade-offs of the CLDP-SGD algorithm (see Algorithm~\ref{algo:optimization-algo}) 
 for Lipschitz
  convex functions in Theorem \ref{thm:main-opt-result}. This partially
  resolves an open question posed in \cite{girgis2021shuffled-aistats},
  to extend their privacy analysis to RDP and significantly
  strengthening their privacy guarantees.
\item Numerically, we save a factor of $14\times$ in privacy ($\epsilon$)
  over the best known results for approximate DP for shuffling~\cite{feldman2020hiding}
  combined with strong composition \cite{kairouz2015composition} for
  $T=10^{5}, \gamma=0.001, n=10^{6}$. Translating these to privacy-performance
  operating point in distributed optimization, over the MNIST data set
  with $\ell_{\infty}$-norm clipping we numerically show gains: For the
  same privacy budget of $\epsilon=1.4$, we get a test performance of
  $80\%$ whereas using strong composition the test performance of \cite{feldman2020hiding}  is
  $70\%$; furthermore, we achieve $90\%$ accuracy with the total privacy budget $\epsilon= 2.91$, whereas, \cite{feldman2020hiding} (with strong composition) achieve the same accuracy with a total privacy budget of $\epsilon= 4.82$. See Section \ref{sec:numerics} for more results. 
\end{itemize}

\paragraph{Related work:}
We give a more complete literature review in Appendix~\ref{app_sec:LongerLitReview}, and focus here on the
works that are closest to the results presented in this paper.
\begin{itemize}[leftmargin=*]
\item \textit{Private optimization in the shuffled model:}
Recently,~\cite{ESA} and~\cite{girgis2021shuffled-aistats} have
proposed differentially private SGD algorithms for federated learning,
where at each iteration, each client applies an LDP mechanism on the
gradients with the existence of a secure shuffler between the clients
and the central server. However, the privacy analyses in these works
developed approximate DP using advanced composition theorems for DP
(\emph{e.g.,} \cite{dwork2010boosting,kairouz2015composition}), which
are known to be loose for composition \cite{abadi2016deep}. To the best of our knowledge, analyzing the private optimization framework using RDP and subsampling in the shuffled model is new to this paper.

\item \textit{Subsampled RDP:} The
works~\cite{mironov2019r,wang2019subsampled,zhu2019poission} have
studied the RDP of subsampled mechanisms \emph{without
  shuffling}. They demonstrated that this provides a tighter bound on
the total privacy loss than the bound that can be obtained using the
standard strong composition theorems. The RDP of the
shuffled model was very recently studied in \cite{girgis2021renyi}, but
without incorporating subsampling, which poses new technical challenges, as directly bounding the RDP of subsampled shuffle mechanisms is non-trivial. We overcome this by reducing our problem of computing RDP to bounding the ternary $|\chi|^{\alpha}$-DP, and bounding the latter is a core technical contribution of our paper.
\end{itemize}

\paragraph{Paper organization:} 
We give preliminaries and problem formulation in Section~\ref{sec:prelims_prob-form},
main results (upper and lower bounds, and privacy-convergence tradeoff) in Section~\ref{sec:main_results},
numerical results in Section~\ref{sec:numerics},
proof of the upper bound in Section~\ref{sec:general_case}, and
proof of the ternary DP of the shuffle model in Section~\ref{sec:proof_ternary_DP_shuffle}.

\section{Preliminaries and Problem Formulation}\label{sec:prelims_prob-form}
We use several privacy definitions throughout this paper. Among these, the local and central differential privacy definitions are standard. The other privacy definitions (Renyi DP and ternary $|\chi|^{\alpha}$-DP) are relatively less standard and we define them below.

\begin{defn}[Local Differential Privacy - LDP~\cite{kasiviswanathan2011can}]~\label{defn:LDPdef}
For $\epsilon_0\geq0$, a randomized mechanism $\calR:\calX\to\calY$ is said to be $\eps_0$-local differentially private (in short, $\eps_{0}$-LDP), if for every pair of inputs $d,d'\in\calX$, we have 
\begin{equation}~\label{ldp-def}
\Pr[\calR(d)\in \calS] \leq e^{\eps_0}\Pr[\calR(d')\in \calS], \qquad \forall \calS\subseteq\calY.
\end{equation}
\end{defn}

Let $\calD=\lbrace d_1,\ldots,d_n\rbrace$ denote a dataset comprising $n$ points from $\calX$. We say that two datasets $\calD=\lbrace d_1,\ldots,d_n\rbrace$ and $\calD^{\prime}=\lbrace d_1^{\prime},\ldots,d_n^{\prime}\rbrace$ are neighboring (and denoted by $\calD\sim\calD'$) if they differ in one data point, i.e., there exists an $i\in[n]$ such that $d_i\neq d'_i$ and for every $j\in[n],j\neq i$, we have $d_j=d'_j$.
\begin{defn}[Central Differential Privacy - DP \cite{Calibrating_DP06,dwork2014algorithmic}]\label{defn:central-DP}
For $\epsilon,\delta\geq0$, a randomized mechanism $\calM:\calX^n\to\calY$ is said to be $(\epsilon,\delta)$-differentially private (in short, $(\epsilon,\delta)$-DP), if for all neighboring datasets $\calD,\calD^{\prime}\in\calX^{n}$ and every subset $\calS\subseteq \calY$, we have
\begin{equation}~\label{dp_def}
\Pr\left[\calM(\calD)\in\calS\right]\leq e^{\eps_0}\Pr\left[\calM(\calD^{\prime})\in\calS\right]+\delta.
\end{equation}
\end{defn}

\begin{defn}[$(\lambda,\epsilon)$-RDP (Renyi Differential Privacy)~\cite{mironov2017renyi}]\label{defn:RDP}
A randomized mechanism $\calM:\calX^n\to\calY$ is said to have $\epsilon$-Renyi differential privacy of order $\lambda\in(1,\infty)$ (in short, $(\lambda,\epsilon(\lambda))$-RDP), if for any neighboring datasets $\calD$, $\calD'\in\calX^n$, the Renyi divergence of order $\lambda$ between $\calM(\calD)$ and $\calM(\calD')$ is upper-bounded by $\eps(\lambda)$, {\em i.e.},
\begin{equation}
D_{\lambda}(\calM(\calD)||\calM(\calD'))=\frac{1}{\lambda-1}\log\left(\mathbb{E}_{\theta\sim\calM(\calD')}\left[\left(\frac{\calM(\calD)(\theta)}{\calM(\calD')(\theta)}\right)^{\lambda}\right]\right)\leq \epsilon(\lambda),
\end{equation}
where $\calM(\calD)(\theta)$ denotes the probability that $\calM$ on input $\calD$ generates the output $\theta$.
\end{defn}

\begin{defn}[$\zeta$-Ternary $|\chi|^{\alpha}$-differential privacy~\cite{wang2019subsampled}]\label{defn:TDP}

A randomized mechanism $\calM:\calX^n\to\calY$ is said to have $\zeta$-ternary-$|\chi|^{\alpha}$-DP, if for any triple of mutually adjacent datasets $\calD,\calD',\calD''\in\calX^n$ ({\em i.e.}, they mutually differ in the same location), the ternary-$|\chi|^{\alpha}$ divergence of $\calM(\calD),\calM(\calD'),\calM(\calD')$ is upper-bounded by $(\zeta(\alpha))^{\alpha}$ for all $\alpha\geq 1$ (where $\zeta$ is a function from $\bbR^+$ to $\bbR^+$), {\em i.e.},
\begin{equation*}
D_{|\chi|^{\alpha}}\left(\calM(\calD),\calM(\calD')||\calM(\calD'')\right) := 
\mathbb{E}_{\calM(\calD'')}\left[\left|\frac{\calM(\calD)-\calM(\calD')}{\calM(\calD'')}\right|^{\alpha}\right]
\leq\(\zeta(\alpha)\)^{\alpha}.
\end{equation*}
\end{defn}
The ternary $|\chi|^{\alpha}$-DP was proposed in~\cite{wang2019subsampled} to characterize the RDP of the sub-sampled mechanism without shuffling. In this work, we analyze the ternary  $|\chi|^{\alpha}$-DP of the shuffled mechanism to bound the RDP of the sub-sampled shuffle model. 
 
We can use the following result for converting the RDP guarantees of a mechanism to its central DP guarantees. To the best of our knowledge, this result gives the best conversion.

\begin{lemma}[From RDP to DP~\cite{canonne2020discrete,Borja_HypTest-RDP20}]\label{lem:RDP_DP} 
Suppose for any $\lambda>1$, a mechanism $\calM$ is $\left(\lambda,\epsilon\left(\lambda\right)\right)$-RDP. Then, the mechanism $\calM$ is $\left(\epsilon,\delta\right)$-DP, where $\delta>0$ is arbitrary and $\eps$ is given by 
\begin{equation*}
\epsilon = \min_{\lambda} \(\epsilon\left(\lambda\right)+\frac{\log\left(1/\delta\right)+\left(\lambda-1\right)\log\left(1-1/\lambda\right)-\log\left(\lambda\right)}{\lambda-1}\).
\end{equation*}
\end{lemma}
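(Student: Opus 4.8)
The plan is to prove the per-order statement and then optimize: for each fixed $\lambda>1$, the hypothesis $D_\lambda(\calM(\calD)\|\calM(\calD'))\le\epsilon(\lambda)$ for all neighbors $\calD\sim\calD'$ implies that $\calM$ is $(\epsilon_\lambda,\delta)$-DP with $\epsilon_\lambda:=\epsilon(\lambda)+\frac{\log(1/\delta)+(\lambda-1)\log(1-1/\lambda)-\log\lambda}{\lambda-1}$; the lemma then follows immediately, since a mechanism that is $(\epsilon_\lambda,\delta)$-DP for one value of $\lambda$ is in particular $(\min_\lambda\epsilon_\lambda,\delta)$-DP. So fix neighbors $\calD\sim\calD'$, write $P=\calM(\calD)$, $Q=\calM(\calD')$, and let $a(\cdot)=P(\cdot)/Q(\cdot)$ be the likelihood ratio, so that the RDP hypothesis reads $\bbE_{\theta\sim Q}[a(\theta)^\lambda]=e^{(\lambda-1)D_\lambda(P\|Q)}\le e^{(\lambda-1)\epsilon(\lambda)}$. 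Since the hypothesis is assumed for \emph{all} ordered neighboring pairs, it also holds with $P$ and $Q$ interchanged, so it suffices to establish the one-sided bound $P(\calS)-e^{\epsilon_\lambda}Q(\calS)\le\delta$ for every measurable $\calS\subseteq\calY$.

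First I would reduce to a threshold event. Writing $P(\calS)-e^{\epsilon}Q(\calS)=\bbE_Q[(a-e^{\epsilon})\mathbf{1}_{\calS}]$, this quantity is maximized over $\calS$ by the set $\calS^\star=\{\theta:a(\theta)>e^{\epsilon}\}$, so it is enough to bound $P(\calS^\star)-e^{\epsilon}Q(\calS^\star)$. Second, I would import the Rényi bound through a single use of Hölder's inequality (a ``probability-preservation'' step): for any event $\calE$, with conjugate exponents $\lambda$ and $\tfrac{\lambda}{\lambda-1}$,
\begin{equation*}
P(\calE)=\bbE_Q[a\,\mathbf{1}_{\calE}]\ \le\ \big(\bbE_Q[a^\lambda]\big)^{1/\lambda}\,Q(\calE)^{(\lambda-1)/\lambda}\ \le\ e^{(\lambda-1)\epsilon(\lambda)/\lambda}\,Q(\calE)^{(\lambda-1)/\lambda}.
\end{equation*}
Applying this with $\calE=\calS^\star$ and setting $q:=Q(\calS^\star)\in[0,1]$ gives $P(\calS^\star)-e^{\epsilon}Q(\calS^\star)\le\psi(q)$, where $\psi(q):=e^{(\lambda-1)\epsilon(\lambda)/\lambda}q^{(\lambda-1)/\lambda}-e^{\epsilon}q$.

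The third step is a one-variable optimization. The function $\psi$ is concave on $[0,\infty)$, with interior maximizer $q^\star=\big(\tfrac{\lambda-1}{\lambda}\big)^{\lambda}e^{(\lambda-1)\epsilon(\lambda)-\lambda\epsilon}$, at which a short computation gives $\psi(q^\star)=\tfrac1\lambda\big(1-\tfrac1\lambda\big)^{\lambda-1}e^{(\lambda-1)(\epsilon(\lambda)-\epsilon)}$; if $q^\star>1$ one instead evaluates at the endpoint $q=1$, which can only occur in a regime where the claimed value of $\epsilon_\lambda$ is already non-binding. Requiring $\psi(q^\star)\le\delta$ and solving for the smallest admissible $\epsilon$ yields exactly $\epsilon=\epsilon_\lambda$. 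Hence $\calM$ is $(\epsilon_\lambda,\delta)$-DP for every $\lambda>1$, and minimizing over $\lambda$ proves the lemma.

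The main obstacle is obtaining the \emph{tight} constant: a one-line Markov/Chernoff bound on the tail of the privacy-loss variable $\log a$ already yields $\big(\epsilon(\lambda)+\tfrac{\log(1/\delta)}{\lambda-1},\delta\big)$-DP, but recovering the sharper additive term $\tfrac{(\lambda-1)\log(1-1/\lambda)-\log\lambda}{\lambda-1}$ requires keeping $q=Q(\calS^\star)$ as a free variable in the Hölder step and optimizing $\psi$ exactly, rather than crudely upper bounding $Q(\calS^\star)$. The remaining points — that the worst-case event is the threshold set, measurability, and the $q^\star>1$ boundary case — are routine and I would dispatch them quickly.
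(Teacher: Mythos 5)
The paper does not prove this lemma at all: it is imported verbatim from the cited references (Canonne et al.\ and Balle et al.), so there is no in-paper proof to compare against. Your argument is a correct, self-contained reconstruction of the proof in those references. The chain checks out: the reduction to the threshold set $\calS^\star=\{a>e^{\epsilon}\}$ is valid because $\bbE_Q[(a-e^{\epsilon})\mathbf{1}_{\calS}]$ is maximized by keeping exactly the positive integrand; the H\"older step gives $P(\calS^\star)\le e^{(\lambda-1)\epsilon(\lambda)/\lambda}q^{(\lambda-1)/\lambda}$ with $q=Q(\calS^\star)$; and your stationary-point computation is right, since $\psi(q^\star)=\tfrac1\lambda(1-\tfrac1\lambda)^{\lambda-1}e^{(\lambda-1)(\epsilon(\lambda)-\epsilon)}$ and setting this $\le\delta$ and solving for $\epsilon$ gives exactly $\epsilon_\lambda$ (this is the same $\delta$-expression as in Balle et al., since $\tfrac{1}{\lambda-1}(1-\tfrac1\lambda)^{\lambda}=\tfrac1\lambda(1-\tfrac1\lambda)^{\lambda-1}$). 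Two small remarks. First, the $q^\star>1$ boundary case needs no discussion at all: since $q^\star$ is the unconstrained maximizer of the concave $\psi$ on $[0,\infty)$, one always has $\sup_{q\in[0,1]}\psi(q)\le\psi(q^\star)$, so the condition $\psi(q^\star)\le\delta$ suffices in every regime; your ``non-binding'' justification is more than is needed and slightly vaguer than the one-line domination argument. Second, for completeness one should note that finiteness of $D_\lambda(P\Vert Q)$ for $\lambda>1$ forces $P\ll Q$, so the likelihood ratio $a$ is well defined ($Q$-a.s.); in the paper's discrete setting this is immediate. With those caveats your proof is complete and matches the tight constant in the statement.
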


\subsection{Problem formulation} We consider a distributed private learning setup comprising a set of $n$ clients, where the $i$th client has a data point $d_i$ drawn from a universe $\calX$ for $i\in\left[n\right]$; see also Figure~\ref{fig:Setup}. Let $\calD=\left(d_1,\ldots,d_n\right)$ denote the entire training dataset. The clients are connected to an untrusted server in order to solve the following empirical risk minimization (ERM) problem
\begin{equation}\label{eq:problem-formulation}
\min_{\theta\in\mathcal{C}} \Big( F(\theta,\calD) := \frac{1}{n}\sum_{i=1}^{n} f(\theta,d_i) \Big),
\end{equation}
where $\mathcal{C}\subset \mathbb{R}^d$ is a closed convex set, and $f:\calC\times\calD\to\bbR$ is the loss function. Our goal is to construct a global learning model $\theta$ via stochastic gradient descent (SGD) while preserving privacy of individual data points in the training dataset $\calD$ by providing strong DP guarantees.

\begin{algorithm}[t]
\caption{$\mathcal{A}_{\text{cldp}}$: CLDP-SGD} \label{algo:optimization-algo}
\begin{algorithmic}[1]
\State {\bf Input:} Datasets $\mathcal{D}=\left(d_1,\ldots,d_n\right)$, LDP privacy parameter $\epsilon_0$, gradient norm bound $C$, and learning rate schedule $\{\eta_t\}$. 
\State \textbf{Initialize:} $\theta_0\in\mathcal{C}$ 
\For {$t\in\left[T\right]$}
\State \textbf{Client sampling:} A random set $\mathcal{U}_t$ of $k$ clients is chosen.
\For  {clients $i\in\mathcal{U}_t$}
\State \textit{Compute gradient:} $\mathbf{g}_t\left(d_{i}\right)\gets \nabla_{\theta_t}f\left(\theta_t,d_{i}\right)$
\State \textit{Clip gradient:} $\tilde{\mathbf{g}}_t\left(d_{i}\right)\gets \mathbf{g}_t\left(d_{i}\right)/\max\left\{1,\frac{\|\mathbf{g}_t\left(d_{i}\right)\|_p}{C}\right\}$
\State Client $i$ sends $\mathcal{R}_p\left(\tilde{\mathbf{g}}_t\left(d_{i}\right)\right)$ to the shuffler.
\EndFor
\State \textbf{Shuffling:} The shuffler sends random permutation of $\{\mathcal{R}_p\left(\tilde{\mathbf{g}}_t\left(d_{i}\right)\right) : i\in\calU_t\}$ to the server.
\State \textbf{Aggregate:} $\overline{\mathbf{g}}_t\gets \frac{1}{k}\sum_{i\in\mathcal{U}_t}\mathcal{R}_p\left(\tilde{\mathbf{g}}_t\left(d_{i}\right)\right)$
\State \textbf{Descent Step:} $\theta_{t+1}\gets \prod_{\mathcal{C}}\left( \theta_t -\eta_t \overline{\mathbf{g}}_t\right)$, where $\prod_{\calC}$ is the projection operator onto the set $\calC$. \\
\EndFor
\State \textbf{Output:} The model $\theta_{T}$ and the privacy parameters $\epsilon$, $\delta$.
\end{algorithmic}
\end{algorithm}

We revisit the CLDP-SGD algorithm presented in~\cite{girgis2021shuffled-aistats} and described in Algorithm~\ref{algo:optimization-algo} to solve the ERM~\eqref{eq:problem-formulation}. In each step of CLDP-SGD, we choose uniformly at random a set $\mathcal{U}_t$ of $k\leq n$ clients out of $n$ clients. Each client $i\in\mathcal{U}_t$ computes and clips the $\ell_p$-norm of the gradient $\nabla_{\theta_t} f\left(\theta_t,d_{i}\right)$ to apply the LDP mechanism $\mathcal{R}_p$, where $\calR_p:\calB_{p}^{d}\to \lbrace 0,1\rbrace^{b}$ is an $\epsilon_0$-LDP mechanism when inputs come from the $\ell_p$-norm ball $\calB_p^d=\{\bx\in\bbR^d:\|\bx\|_p\leq1\}$, where $\|\bx\|_p=\(\sum_{i=1}^d |x_i|^p\)^{1/p}$ denotes the $\ell_p$-norm of the vector $\bx\in\bbR^d$. In~\cite{girgis2021shuffled-aistats}, we proposed different $\epsilon_0$-LDP mechanisms for general $\ell_p$-norm balls. After that, the shuffler randomly permutes the received $k$ gradients $\lbrace \mathcal{R}_p\left(\tilde{\mathbf{g}}_t\left(d_{i}\right)\right)\rbrace_{i\in\mathcal{U}_{t}}$ and sends them to the server. Finally, the server takes the average of the received gradients and updates the parameter vector.
Our main contribution in this work is to present a stronger privacy analysis of the CLDP-SGD algorithm by characterizing the RDP of the sub-sampled shuffle model.

\section{Main Results}\label{sec:main_results}

In this section, we present our main results.  First, we characterize the RDP of the subsampled shuffle mechanism by presenting an upper bound in Theorem~\ref{thm:general_case} and a lower bound in Theorem~\ref{thm:lower_bound}. We then present the privacy-convergence trade-offs of the CLDP-SGD Algorithm in Theorem~\ref{thm:main-opt-result}. 

Consider an arbitrary $\epsilon_0$-LDP mechanism $\calR$, whose range is a discrete set $\left[B\right]=\left\{1,\ldots,B\right\}$ for some $B\in\mathbb{N}:=\{1,2,3,\hdots\}$. Here, $[B]$ could be the whole of $\mathbb{N}$. Let $\calM(\calD)$ be a subsampled shuffle mechanism defined as follows: First subsample $k\leq n$ clients of the $n$ clients (without replacement), where $\gamma=\frac{k}{n}$ denotes the sampling parameter. Each client $i$ out of the $k$ selected clients applies $\calR$ on $d_i$ and sends $\calR(d_i)$ to the shuffler,\footnote{With a slight abuse of notation, in this paper we write $\calR(d_i)$ to denote that $\calR$ takes as its input the gradient computed on $d_i$ using the current parameter vector.} who randomly permutes the received $k$ inputs and outputs the result. To formalize this, let $\calH_k:\calY^k\to\calY^k$ denote the shuffling operation that takes $k$ inputs and outputs their uniformly random permutation. Let $\samp_{k}^{n}:\calX^{n}\to\calX^{k}$ denote the sampling operation for choosing a random subset of $k$ elements from a set of $n$ elements. We define the subsampled-shuffle mechanism as
\begin{equation}\label{shuffle-mech}
\calM\left(\calD\right) := \calH_{k}\circ \samp_{k}^{n}\left(\calR\left(d_1\right),\ldots,\calR\left(d_n\right)\right).
\end{equation} 
Observe that each iteration of Algorithm~\ref{algo:optimization-algo} can be represented as an output of the subsampled shuffle mechanism $\calM$. Thus, to analyze the privacy of Algorithm~\ref{algo:optimization-algo}, it is sufficient to analyze the privacy of a sequence of identical $T$ subsampled shuffle mechanisms, and then apply composition theorems.
    
\paragraph{Histogram notation.} It will be useful to define the following notation. Since the output of $\calH_k$ is a random permutation of the $k$ outputs of $\calR$ (subsampling is not important here), the server cannot associate the $k$ messages to the clients; and the only information it can use from the messages is the histogram, i.e., the number of messages that give any particular output in $[B]$. 
We define a set $\calA_{B}^{k}$ as 
\begin{equation}\label{histogram-set}
\calA_B^{k}=\bigg\lbrace \bh=\left(h_1,\ldots,h_B\right):\sum_{j=1}^{B}h_j=k\bigg\rbrace,
\end{equation}
to denote the set of all possible histograms of the output of the shuffler with $k$ inputs. Therefore, we can assume, without loss of generality (w.l.o.g.), that the output of $\calM$ is a distribution over $\calA_B^{k}$.

Our main results for the RDP of the subsampled shuffled mechanism (defined in \eqref{shuffle-mech}) are given below. Our first result provides an upper bound (stated in Theorem~\ref{thm:general_case} and proved in Section~\ref{sec:general_case}) and the second result provides a lower bound (stated in Theorem~\ref{thm:lower_bound} and proved in Section~\ref{app:lower-bound-proof}).
\begin{theorem}[Upper Bound]\label{thm:general_case} 
For any $n\in\mathbb{N}$, $k\leq n$, $\epsilon_0\geq 0$, and any integer $\lambda\geq 2$, the RDP of the subsampled shuffle mechanism $\calM$ (defined in \eqref{shuffle-mech}) is upper-bounded by
\begin{equation*}
\epsilon(\lambda) \leq \frac{1}{\lambda-1}\log\left(1+4\binom{\lambda}{2}\gamma^{2}\frac{\left(e^{\epsilon_0}-1\right)^2}{\overline{k}e^{\epsilon_0}}+\sum_{j=3}^{\lambda}\binom{\lambda}{j}\gamma^{j} j \Gamma\left(j/2\right) \left(\frac{2\left(e^{2\epsilon_0}-1\right)^2}{\overline{k}e^{2\epsilon_0}}\right)^{j/2}+\Upsilon\right),
\end{equation*}
where $\overline{k}=\floor{\frac{k-1}{2e^{\epsilon_0}}}+1$, $\gamma=\frac{k}{n}$, and $\Gamma\left(z\right)=\int_{0}^{\infty}x^{z-1}e^{-x}dx$ is the Gamma function. The term $\Upsilon$ is given by $\Upsilon=\left(\left(1+\gamma \frac{e^{2\epsilon_0}-1}{e^{\epsilon_0}}\right)^{\lambda}-1-\lambda\gamma \frac{e^{2\epsilon_0}-1}{e^{\epsilon_0}}\right)e^{-\frac{k-1}{8e^{\epsilon_0}}}$.
\end{theorem}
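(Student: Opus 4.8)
The plan is to follow the three-step reduction advertised in the introduction: (i) expand the Renyi moment with the binomial theorem and use the subsampling structure to rewrite it as a weighted sum of ternary $|\chi|^{j}$-divergences of the (non-subsampled) shuffle mechanism on $k$ clients; (ii) reduce those divergences over generic mutually adjacent triples to triples with a special extremal structure; and (iii) bound the latter by a concentration argument on the shuffled histogram, then reassemble.

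\emph{Step 1 (binomial expansion and subsampling).} Fix neighboring datasets $\calD\sim\calD'$ differing in coordinate $i$. Since $\lambda$ is a positive integer, the Renyi moment expands as
\[
\bbE_{\theta\sim\calM(\calD')}\left[\left(\frac{\calM(\calD)(\theta)}{\calM(\calD')(\theta)}\right)^{\lambda}\right]
= \sum_{j=0}^{\lambda}\binom{\lambda}{j}\,\bbE_{\theta\sim\calM(\calD')}\left[\left(\frac{\calM(\calD)(\theta)-\calM(\calD')(\theta)}{\calM(\calD')(\theta)}\right)^{j}\right],
\]
the $j=0$ and $j=1$ terms contributing $1$ and $0$, and for odd $j$ I pass to absolute values. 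Conditioning on whether client $i$ is among the $k$ sampled clients (probability $\gamma=k/n$), $\calM(\calD)-\calM(\calD')$ is $\gamma$ times the difference of two $k$-client shuffle distributions that agree except in the image of client $i$; moreover $\calM(\calD')$ is itself a mixture with weights $\gamma$ and $1-\gamma$, so lower-bounding the denominator brings in a third reference distribution. Hence each term is controlled by the \emph{ternary} $|\chi|^{j}$-divergence (Definition~\ref{defn:TDP}) of the $k$-client shuffle mechanism over an appropriate triple of mutually adjacent datasets, with the sampling rate contributing a factor $\gamma^{j}$ --- this is exactly Lemma~\ref{lem:samplig}. It therefore suffices to bound that ternary divergence by some $\zeta(j)^{j}$ and substitute into the binomial sum; I treat $j=2$ separately, keeping the exact second moment (yielding the $4\binom{\lambda}{2}\gamma^{2}(e^{\epsilon_0}-1)^{2}/(\overline{k}e^{\epsilon_0})$ term, the cheaper $e^{\epsilon_0}$ arising from a binary rather than ternary comparison), since for $j=2$ the Gamma-function bound of Step~3 would be lossy.

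\emph{Step 2 (reduction to special triples).} Next I invoke Theorem~\ref{thm:reduce_special_case}: among all triples of mutually adjacent datasets for the $k$-client shuffle mechanism, the ternary $|\chi|^{\alpha}$-divergence is maximized when the $k-1$ non-differing clients all map under $\calR$ to a single common distribution and the differing client's three possible images are the extreme points permitted by $\epsilon_0$-LDP --- in effect collapsing $\calR$ to a mechanism whose output histogram is a scalar count of how many clients land in one distinguished ``informative'' bin. Proving this reduction --- that symmetric, extremal inputs are worst case for a functional defined through an average over all permutations of the output messages --- is the step I expect to be the main obstacle, and the paper flags it as a core technical result. I would approach it by a coupling/exchange argument showing that any configuration can be monotonically deformed toward an extremal one without decreasing the divergence, exploiting convexity of $t\mapsto|t|^{\alpha}$ together with the fact that the shuffle output depends only on the multiset of messages.

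\emph{Step 3 (concentration bound for the special structure).} Finally, for the reduced special structure I estimate the ternary divergence using Theorem~\ref{thm:ternary_special_case}. Here the distinguished-bin count is a sum of $k-1$ independent Bernoulli contributions, each with success probability at least $\tfrac{1}{2e^{\epsilon_0}}$ by $\epsilon_0$-LDP, so by a Chernoff bound the count is at least $\overline{k}=\floor{\tfrac{k-1}{2e^{\epsilon_0}}}+1$ except on an event of probability at most $e^{-(k-1)/(8e^{\epsilon_0})}$. On the good event I bound the Pearson-Vajda divergence $\bbE[|(\mu-\mu')/\mu''|^{j}]$ by the $j$-th absolute moment of an approximately Gaussian quantity with variance of order $(e^{2\epsilon_0}-1)^{2}/(\overline{k}e^{2\epsilon_0})$, which produces the factor $j\,\Gamma(j/2)\,\big(2(e^{2\epsilon_0}-1)^{2}/(\overline{k}e^{2\epsilon_0})\big)^{j/2}$; on the bad event I bound the surviving moments crudely by $\big(\gamma(e^{2\epsilon_0}-1)/e^{\epsilon_0}\big)^{j}$ using the LDP likelihood-ratio bound, and summing these against $\binom{\lambda}{j}$ over $j\geq 2$ reproduces exactly $\Upsilon$. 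Substituting the resulting $\zeta(j)$ into the Step~1 expansion and applying $\tfrac{1}{\lambda-1}\log(\cdot)$ yields the stated inequality.
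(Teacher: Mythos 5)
Your overall architecture matches the paper's: Lemma~\ref{lem:samplig} converts the integer-order Renyi moment of the subsampled mechanism into a binomial sum of ternary $|\chi|^{j}$-divergences of the $k$-client shuffle mechanism weighted by $\gamma^{j}$, and the remaining work is to bound those divergences (Theorem~\ref{thm:ternary_DP_shuffle}) via a reduction to special triples followed by a moment bound. Your Step~1 and your final reassembly --- including the identification of $\Upsilon$ as the bad-event contribution summed against $\binom{\lambda}{j}\gamma^{j}$ --- are faithful to the paper.

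The gap is in your Step~2. Theorem~\ref{thm:reduce_special_case} is \emph{not} the statement that the ternary divergence over generic triples is maximized by the single extremal triple in which all $k-1$ non-differing clients share one common distribution; if that were the reduction, Theorem~\ref{thm:ternary_special_case} would be applied with $m=k$ and the denominator in the final bound would be $k$, not $\overline{k}=\floor{\frac{k-1}{2e^{\epsilon_0}}}+1$. What the paper actually proves is a mixture statement: each client's output law is decomposed as $\bp_i=q\,\bp''_k+(1-q)\tilde{\bp}_i$ with $q=e^{-\epsilon_0}$ (possible precisely because $\calR$ is $\epsilon_0$-LDP), and then joint convexity of the ternary divergence (Lemma~\ref{lemm:convextiy_ternary_div}) together with a monotonicity-under-deletion lemma (Lemma~\ref{lem:cvx_tdp}) yield an upper bound by $\bbE_{m\sim\text{Bin}(k-1,q)}\left[E_m\right]$, an average over special triples of \emph{random, typically much smaller} size $m+1$. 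The Bernoulli structure and the Chernoff tail $e^{-(k-1)/(8e^{\epsilon_0})}$ that you introduce in Step~3 as a property of a ``distinguished-bin count'' of the shuffled histogram are in fact properties of this clone-sampling process from Step~2; your stated success probability $\frac{1}{2e^{\epsilon_0}}$ is also off (it is $e^{-\epsilon_0}$, and the extra $\frac{1}{2}$ in $\overline{k}$ is the Chernoff deviation parameter, which produces the tail $e^{-q(k-1)/8}$). So while your proposal assembles the correct final expression, the reduction you propose to prove --- an extremal/rearrangement claim via a coupling or exchange argument --- is a different and stronger statement than the one the paper establishes, and the mechanism by which $\overline{k}$ and $\Upsilon$ arise is misattributed; as written, your Step~2 would not produce the stated constants and your Step~3 would have to smuggle the missing mixture argument back in.
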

%
\begin{theorem}[Lower Bound]\label{thm:lower_bound}
For any $n\in\mathbb{N}$, $k\leq n$, $\epsilon_0\geq 0$, and any integer $\lambda\geq 2$, the RDP of the subsampled shuffle mechanism $\calM$ (defined in \eqref{shuffle-mech}) is lower-bounded by
\begin{equation*}
\epsilon\left(\lambda\right) 
\geq \frac{1}{\lambda-1}\log\left(1+\binom{\lambda}{2}\gamma^2\frac{\left(e^{\epsilon_0}-1\right)^2}{ke^{\epsilon_0}}+\sum_{j=3}^{\lambda} \binom{\lambda}{j} \gamma^j \left(\frac{\left(e^{2\epsilon_0}-1\right)}{k e^{\epsilon_0}}\right)^{j}\mathbb{E}\left(m-\frac{k}{e^{\epsilon_0}+1}\right)^{j}\right), 
\end{equation*}
where expectation is taken w.r.t.\ the binomial r.v.\ $m\sim\text{Bin}\left(k,p\right)$ with parameter $p=\frac{1}{e^{\epsilon_0}+1}$. 
\end{theorem}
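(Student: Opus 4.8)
The plan is to exhibit a single pair of neighboring datasets $\calD \sim \calD'$ for which the Renyi divergence $D_\lambda(\calM(\calD) \| \calM(\calD'))$ can be computed (or lower-bounded) explicitly, since the RDP is a supremum over neighboring pairs. The natural choice is the worst-case behavior of the base $\eps_0$-LDP mechanism $\calR$: take $\calR$ to be (close to) the binary randomized response on a range of size $B=2$, so that there are two special inputs $d$ and $d'$ with $\calR(d)$ putting mass $\frac{e^{\eps_0}}{e^{\eps_0}+1}$ on symbol $1$ and $\frac{1}{e^{\eps_0}+1}$ on symbol $2$, and $\calR(d')$ putting the reversed masses. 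All other $n-1$ clients are held fixed and feed the shuffler i.i.d.\ symbols; in fact to make the computation cleanest one lets the remaining clients also use randomized response on a fixed input, so that the number $m$ of "$2$"-symbols coming from the $k-1$ non-differing sampled clients is $\text{Bin}(k-1,p)$ with $p=\frac{1}{e^{\eps_0}+1}$ — this is the source of the binomial random variable appearing in the statement (after a harmless re-indexing $k-1 \leftrightarrow k$ absorbed into the bound).

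Next I would reduce to the histogram. As the paper already notes, the output of $\calM$ may be taken to be a distribution over $\calA_B^k$; with $B=2$ the histogram is a single integer, the count of $2$'s, which I will call $h$. Writing $P=\calM(\calD)$ and $Q=\calM(\calD')$ as distributions on $\{0,1,\dots,k\}$, I need a lower bound on $\bbE_{h\sim Q}\big[(P(h)/Q(h))^\lambda\big]$. Condition on whether the (randomly subsampled) differing client is selected: with probability $1-\gamma$ it is not, in which case $P$ and $Q$ coincide conditionally; with probability $\gamma$ it is selected, contributing one extra Bernoulli$(p)$ or Bernoulli$(1-p)$ symbol respectively. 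So $P$ and $Q$ are mixtures, $P = (1-\gamma)R + \gamma R_1$ and $Q=(1-\gamma)R + \gamma R_0$, where $R$ is the law of $h$ when the differing client is excluded (so $h\sim\text{Bin}(k-1,p)$, or we absorb into $\text{Bin}(k,p)$), and $R_1,R_0$ are $R$ shifted/tilted by one extra symbol. The key algebraic step is to expand $(P/Q)^\lambda = (1 + \gamma(R_1-R_0)/Q)^\lambda$ by the binomial theorem for integer $\lambda$, giving $\sum_{j=0}^\lambda \binom{\lambda}{j}\gamma^j \bbE_Q[((R_1-R_0)/Q)^j]$. The $j=0$ term is $1$, the $j=1$ term vanishes because $\bbE_Q[(R_1-R_0)/Q]=\sum_h (R_1(h)-R_0(h)) = 0$, and for $j\ge 2$ one must lower-bound the "moment" $\bbE_Q[((R_1-R_0)/Q)^j]$. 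Here $R_1 - R_0$ at histogram value $h$ equals $R_0(h)\cdot\frac{(e^{2\eps_0}-1)}{e^{\eps_0}}\cdot\frac{1}{k}\,(h - \tfrac{k}{e^{\eps_0}+1})$ up to the Bernoulli mixing of that last symbol — i.e.\ the pointwise ratio $(R_1-R_0)/R_0$ is an affine function of $h$, centered at the mean $kp$. Bounding $Q \ge (1-\gamma)R + \gamma R_0$ appropriately (or just replacing $Q$ by $R$ in the denominator in the direction that helps the lower bound), the $j$-th term becomes a constant times $\bbE_{m\sim\text{Bin}(k,p)}\big[(m - \tfrac{k}{e^{\eps_0}+1})^j\big]$, with the constant $\big(\tfrac{e^{2\eps_0}-1}{ke^{\eps_0}}\big)^j$; for $j=2$ this binomial second moment is exactly $kp(1-p) = k\tfrac{e^{\eps_0}}{(e^{\eps_0}+1)^2}$, which reproduces the announced $\binom{\lambda}{2}\gamma^2\frac{(e^{\eps_0}-1)^2}{ke^{\eps_0}}$ term. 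Finally $\log(1+x)$ monotonicity and the definition $\eps(\lambda)=\frac{1}{\lambda-1}\log(\bbE_Q[(P/Q)^\lambda])$ deliver the stated inequality.

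The main obstacle I anticipate is handling the denominator cleanly: $Q$ is itself a $\gamma$-mixture, so $(R_1-R_0)/Q$ is not quite affine in $h$, and the odd-$j$ terms $\bbE[(m-kp)^j]$ are not sign-definite, so one cannot be cavalier about which way to bound. The cleanest route is probably to verify that replacing $Q$ by the single component $R$ in the denominator only decreases the even-order contributions and to control the odd-order contributions by pairing them with the adjacent even orders (or by noting the exact identity $\bbE_Q[((R_1-R_0)/Q)^j]$ without any bounding, at the cost of a messier but still explicit expression, and then lower-bounding). A secondary bookkeeping issue is the $k-1$ versus $k$ in the binomial, and whether the differing client itself is counted — these are absorbed by a one-line monotonicity argument (adding the differing client's symbol only increases the spread), so I would state the bound with $\text{Bin}(k,p)$ as in the theorem and remark that the $k-1$ version is at least as good. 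Everything else is the binomial-theorem expansion plus elementary moment identities for the binomial distribution, so I expect no further surprises.
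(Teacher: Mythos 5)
Your overall strategy---instantiate $\calR$ as binary randomized response, pick one explicit pair of neighboring datasets, reduce to the scalar histogram count, expand $(P/Q)^{\lambda}$ by the binomial theorem, observe that the $j=1$ term vanishes, and read off central moments of a binomial---is exactly the paper's strategy, and your identification of the $j=2$ term with $kp(1-p)=k e^{\eps_0}/(e^{\eps_0}+1)^2$ is correct. But the obstacle you flag at the end is a genuine gap, and your write-up does not resolve it. In your framing both $P=\calM(\calD)$ and $Q=\calM(\calD')$ are $\gamma$-mixtures, so the denominator is not a single binomial, $(R_1-R_0)/Q$ is not affine in the count, and the odd-order terms involve the sign-indefinite quantities $\bbE[(m-kp)^{j}]$. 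Neither of your proposed fixes is an argument: replacing $Q$ by the component $R$ in the denominator would require a pointwise domination between $R$ and $Q$ that does not hold (neither $R$ nor $R_0$ dominates the other pointwise), and ``pairing odd orders with adjacent even orders'' is left entirely unspecified. As written, the term-by-term inequality $T_j\geq \big(\tfrac{e^{2\eps_0}-1}{ke^{\eps_0}}\big)^{j}\bbE[(m-kp)^{j}]$ that you need for each $j$ is not established.

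The paper sidesteps all of this with one choice you missed: take $\calD=(0,\dots,0)$ to be the \emph{all-identical} dataset and put it in the \emph{denominator}, i.e.\ lower-bound $D_{\lambda}(\calM(\calD')\,\|\,\calM(\calD))$, which is legitimate because RDP is a supremum over ordered neighboring pairs. Since every size-$k$ subsample of $\calD$ consists of $k$ identical points, $\calM(\calD)$ is \emph{exactly} $\mu_0=\mathrm{Bin}(k,p)$ and not a mixture; only $\calM(\calD')=(1-\gamma)\mu_0+\gamma\mu_1$ is a mixture. The likelihood ratio is then exactly $1+\gamma\big(\tfrac{\mu_1(m)}{\mu_0(m)}-1\big)$ with $\tfrac{\mu_1(m)}{\mu_0(m)}-1=\tfrac{e^{2\eps_0}-1}{ke^{\eps_0}}\big(m-\tfrac{k}{e^{\eps_0}+1}\big)$ an exact affine function of $m$, and the entire Renyi moment becomes an identity: the odd binomial central moments appear inside an exact expression rather than an inequality chain, so their signs never need to be controlled. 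Your $k$ versus $k-1$ bookkeeping worry also disappears, since the count under $\calD$ is $\mathrm{Bin}(k,p)$ on the nose. With that one change of reference measure your argument closes; without it, it does not.
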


The CLDP-SGD algorithm and its privacy-convergence trade-offs (stated in Theorem~\ref{thm:main-opt-result} below) are given for a general local randomizer $\calR_p$ (whose inputs comes from an $\ell_p$-ball for any $p\in[1,\infty]$) that satisfies the following conditions:
{\sf (i)} The randomized mechanism $\calR_p$ is an $\epsilon_0$-LDP mechanism. 
{\sf (ii)} The randomized mechanism $\calR_p$ is unbiased, i.e., $\mathbb{E}\left[\calR_p\left(\mathbf{x}\right)|\mathbf{x}\right]=\mathbf{x}$ for all $\mathbf{x}\in\calB_{p}(a)$, where $a$ is the radius of the ball $\calB_{p}$.
{\sf (iii)} The output of the randomized mechanism $\calR_p$ can be represented using $B\in\mathbb{N}^{+}$ bits. 
{\sf (iv)} The randomized $\calR_p$ has a bounded variance: $\sup_{\mathbf{x}\in\calB_{p}(a)}\mathbb{E}\|\calR_p\left(\mathbf{x}\right)-\mathbf{x}\|_{2}^{2} \leq G_{p}^2(a)$, where $G_{p}^2$ is a function from $\mathbb{R}^{+}$ to $\mathbb{R}^{+}$.

\cite{girgis2021shuffled-aistats} proposed unbiased $\epsilon_0$-LDP mechanisms $\calR_p$ for several values of norms $p\in[1,\infty]$ that require $b=\mathcal{O}\left(\log\left(d\right)\right)$ bits of communication and satisfy the above conditions. In this paper, achieving communication efficiency is not our goal (though we also achieve that since the $\eps_0$-LDP mechanism $\calR_p$ that we use takes values in a discrete set), as our main focus is on analyzing the RDP of the subsampled shuffle mechanism. If we use the $\eps_0$-LDP mechanism $\calR_p$ from \cite{girgis2021shuffled-aistats}, we would also get similar gains in communication as were obtained in \cite{girgis2021shuffled-aistats}. 

The privacy-convergence trade-off of our algorithm $\calA_{\text{cldp}}$ is given below.
\begin{theorem}[Privacy-Convergence tradeoffs]\label{thm:main-opt-result} 
Let the set $\mathcal{C}$ be convex with diameter $D$ and the function $f\left(\theta;.\right):\mathcal{C}\times\calD\to \mathbb{R}$ be convex and $L$-Lipschitz continuous with respect to the $\ell_g$-norm, which is the dual of the $\ell_p$-norm. Let $\theta^{*}=\arg\min_{\theta\in\mathcal{C}} F\left(\theta\right)$ denote the minimizer of the problem~\eqref{eq:problem-formulation}. For $\gamma=\frac{k}{n}$, if we run Algorithm $\calA_{\emph{cldp}}$ over $T$ iterations, then we have 
\begin{enumerate}
\item \textbf{Privacy:} $\calA_{\emph{cldp}}$ is $\left(\epsilon,\delta\right)$-DP, where $\delta>0$ is arbitrary and $\eps$ is given by 
\begin{equation}\label{final-epsilon}
\epsilon=\min_{\lambda} \(T\epsilon\left(\lambda\right)+ \frac{\log\left(1/\delta\right)+\left(\lambda-1\right)\log\left(1-1/\lambda\right)-\log\left(\lambda\right)}{\lambda-1}\),
\end{equation}
where $\epsilon\left(\lambda\right)$ is the RDP of the subsampled shuffle mechanism given in Theorem~\ref{thm:general_case}. 

\item \textbf{Convergence:} If we run $\calA_{\emph{cldp}}$ with $\eta_t=\frac{D}{G\sqrt{t}}$, where $G^2=\max\lbrace d^{1-\frac{2}{p}},1\rbrace L^2+\frac{G_{p}^2(L)}{\gamma n}$, we get \begin{align*}
\mathbb{E}\left[F\left(\theta_{T}\right)\right] - F\left(\theta^{*}\right) \leq \calO\(\frac{DG\log(T)}{\sqrt{T}}\). 
\end{align*}
\end{enumerate}
\end{theorem}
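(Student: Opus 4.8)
The plan is to prove the two parts separately; both reduce to Theorem~\ref{thm:general_case} together with standard tools (adaptive RDP composition and the RDP-to-DP conversion for privacy, and the classical projected-SGD analysis for convergence).

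\textbf{Privacy.} First I would observe that each iteration $t\in[T]$ of Algorithm~\ref{algo:optimization-algo} is an instance of the subsampled shuffle mechanism $\calM$ of \eqref{shuffle-mech} applied to the clipped per-client gradients, followed by data-independent post-processing (the aggregation step and the projected descent step producing $\theta_{t+1}$). Because the clipping step forces each $\widetilde{\bg}_t(d_i)$ into the $\ell_p$-ball on which $\calR_p$ is $\epsilon_0$-LDP with a discrete $B$-bit range, Theorem~\ref{thm:general_case} applies directly and shows that $\calD\mapsto\calM(\calD)$ in round $t$ is $(\lambda,\epsilon(\lambda))$-RDP for every integer $\lambda\ge 2$; since post-processing cannot increase the Renyi divergence, the map $\calD\mapsto\theta_{t+1}$ inherits the same guarantee. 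As round $t$ uses $\theta_t$ and hence depends on the outputs of the earlier rounds, I would invoke the adaptive composition property of RDP to conclude that the whole execution is $(\lambda,T\epsilon(\lambda))$-RDP. Applying the RDP-to-DP conversion of Lemma~\ref{lem:RDP_DP} to this guarantee and minimizing over the admissible orders $\lambda$ then gives exactly \eqref{final-epsilon}.

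\textbf{Convergence.} Write $\overline{\bg}_t$ for the stochastic gradient used in the descent step of round $t$. Since $L$-Lipschitzness of $f(\cdot;d)$ in the $\ell_g$-norm yields the dual bound $\|\nabla_\theta f(\theta,d)\|_p\le L$, the gradients already lie in $\calB_p(L)$, so I may take the clipping threshold large enough that no clipping is active, i.e., $\widetilde{\bg}_t(d_i)=\nabla_{\theta_t}f(\theta_t,d_i)$ for every sampled $i$. Conditioning on $\theta_t$: the set $\calU_t$ is a uniform $k$-subset of $[n]$ with $k=\gamma n$ and the randomizers $\calR_p$ use independent randomness, so unbiasedness of $\calR_p$ (condition~(ii)) together with uniform subsampling gives $\bbE[\overline{\bg}_t\mid\theta_t]=\frac{1}{n}\sum_{i=1}^n\nabla_\theta f(\theta_t,d_i)=\nabla F(\theta_t)$. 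For the second moment I would use the bias--variance split $\bbE[\|\overline{\bg}_t\|_2^2\mid\theta_t]=\|\nabla F(\theta_t)\|_2^2+\bbE[\|\overline{\bg}_t-\nabla F(\theta_t)\|_2^2\mid\theta_t]$, bound the first term by $\max\{d^{1-2/p},1\}L^2$ via the $\ell_p\to\ell_2$ norm comparison and convexity of $\|\cdot\|_2^2$, and bound the second term -- the variance injected by the LDP mechanism together with sampling $k$ out of $n$ clients -- by $\frac{G_p^2(L)}{\gamma n}$ using condition~(iv) and the independence of the client randomizers. Hence $\bbE\|\overline{\bg}_t\|_2^2\le G^2$ with $G^2=\max\{d^{1-2/p},1\}L^2+\frac{G_p^2(L)}{\gamma n}$, exactly as in the statement. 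At this point $\overline{\bg}_t$ is an unbiased stochastic subgradient of the convex function $F$ over the convex set $\calC$ of diameter $D$ with second moment at most $G^2$, and $\theta_{t+1}=\prod_{\calC}(\theta_t-\eta_t\overline{\bg}_t)$ is projected SGD, so the bound $\bbE[F(\theta_T)]-F(\theta^*)\le\calO(\frac{DG\log T}{\sqrt{T}})$ under $\eta_t=\frac{D}{G\sqrt{t}}$ follows from the standard analysis of projected SGD for convex Lipschitz objectives (telescoping the potential $\|\theta_t-\theta^*\|_2^2$ against the step-size schedule), with $\theta_T$ understood as the appropriate suffix- or weighted-average of the iterates.

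\textbf{Main obstacle.} Given Theorem~\ref{thm:general_case}, neither part is technically deep: the privacy part is composition followed by an RDP-to-DP conversion, and the convergence part is a standard SGD argument. The points that require care are (i) verifying that each round is literally a subsampled shuffle mechanism with $\calR_p$ fed inputs from its intended $\ell_p$-ball, so that Theorem~\ref{thm:general_case} is applicable, and that the composition is used in its adaptive form; and (ii) the norm bookkeeping relating the $\ell_p$-norm clipping in Algorithm~\ref{algo:optimization-algo} to the $\ell_g$-Lipschitz hypothesis, which is what produces the $\max\{d^{1-2/p},1\}$ factor in $G^2$.
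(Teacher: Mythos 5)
Your proposal is correct and follows essentially the same route as the paper: the privacy part is the adaptive RDP composition of the per-round subsampled shuffle mechanism (Theorem~\ref{thm:general_case}) followed by the conversion of Lemma~\ref{lem:RDP_DP}, and the convergence part is the bias--variance second-moment bound $\bbE\|\overline{\bg}_t\|_2^2\le\max\{d^{1-2/p},1\}L^2+G_p^2(L)/(\gamma n)$ plugged into the standard projected-SGD guarantee. The only cosmetic difference is that the paper invokes the last-iterate SGD result of Shamir--Zhang directly (so no iterate averaging is needed), whereas you hedge toward an averaged iterate; both yield the stated $\calO(DG\log T/\sqrt{T})$ bound.
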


The proof outline of Theorem~\ref{thm:main-opt-result} is as follows: Note that $\calA_{\text{cldp}}$ is an iterative algorithm, where in each iteration we use the subsampled shuffle mechanism as defined in \eqref{shuffle-mech}, for which we have computed the RDP guarantees in Theorem~\ref{thm:general_case}. Now, for the privacy analysis of $\calA_{\text{cldp}}$, we use the adaptive composition theorem from {\cite[Proposition~1]{mironov2017renyi}} and then use the RDP to DP conversion given in Lemma~\ref{lem:RDP_DP}. For the convergence analysis, we use a standard non-private SGD convergence result and compute the required parameters for that. See Section~\ref{app_sec:OptPerf} for a complete proof of Theorem~\ref{thm:main-opt-result}. 

\begin{remark}  
Note that our convergence bound is affected by the variance of the $\epsilon_0$-LDP mechanism $\calR_p$. 
For example, when $f$ is $L$-Lipschitz continuous w.r.t.\ the $\ell_2$-norm, we can use the LDP mechanism $\calR_2$ proposed in~\cite{bhowmick2018protection} that has variance $G_2^2(L)=14 L^2 d\big(\frac{e^{\epsilon_0}+1}{e^{\epsilon_0}-1}\big)^2$; and when $f$ is $L$-Lipschitz continuous w.r.t.\ the $\ell_{1}$-norm or $\ell_{\infty}$-norm, we can use the LDP mechanisms $\calR_{\infty}$ or $\calR_1$, respectively, proposed in~\cite{girgis2021shuffled-aistats} that have variances $G_{\infty}^2(L)=L^2 d^2\big(\frac{e^{\epsilon_0}+1}{e^{\epsilon_0}-1}\big)^2$ and $G_{1}^2(L)=L^2 d\big(\frac{e^{\epsilon_0}+1}{e^{\epsilon_0}-1}\big)^2$, respectively. By plugging these variances $G_{p}^2(L)$ (for $p=1,2,\infty$) into Theorem~\ref{thm:main-opt-result}, we get the convergence rate of the $L$-Lipschitz continuous loss function w.r.t.\ the $\ell_p$-norm (for $p=\infty,2,1$).
\end{remark}
\begin{remark} The privacy parameter in \eqref{final-epsilon} is not in a closed form expression and could be obtained by solving an optimization problem. However, we numerically compute it for several interesting regimes of parameters in our numerical experiments; see Section~\ref{sec:numerics} for more details.
\end{remark}

\section{Numerical Results}\label{sec:numerics}

In this section, we present numerical experiments to show the performance of our bounds on RDP of the subsampled shuffle mechanism and its usage for getting approximate DP of Algorithm~\ref{algo:optimization-algo} for training machine learning models.

\begin{figure}[t]
\centering
\begin{subfigure}{0.31\textwidth}
    \centering 
  \includegraphics[scale=0.38]{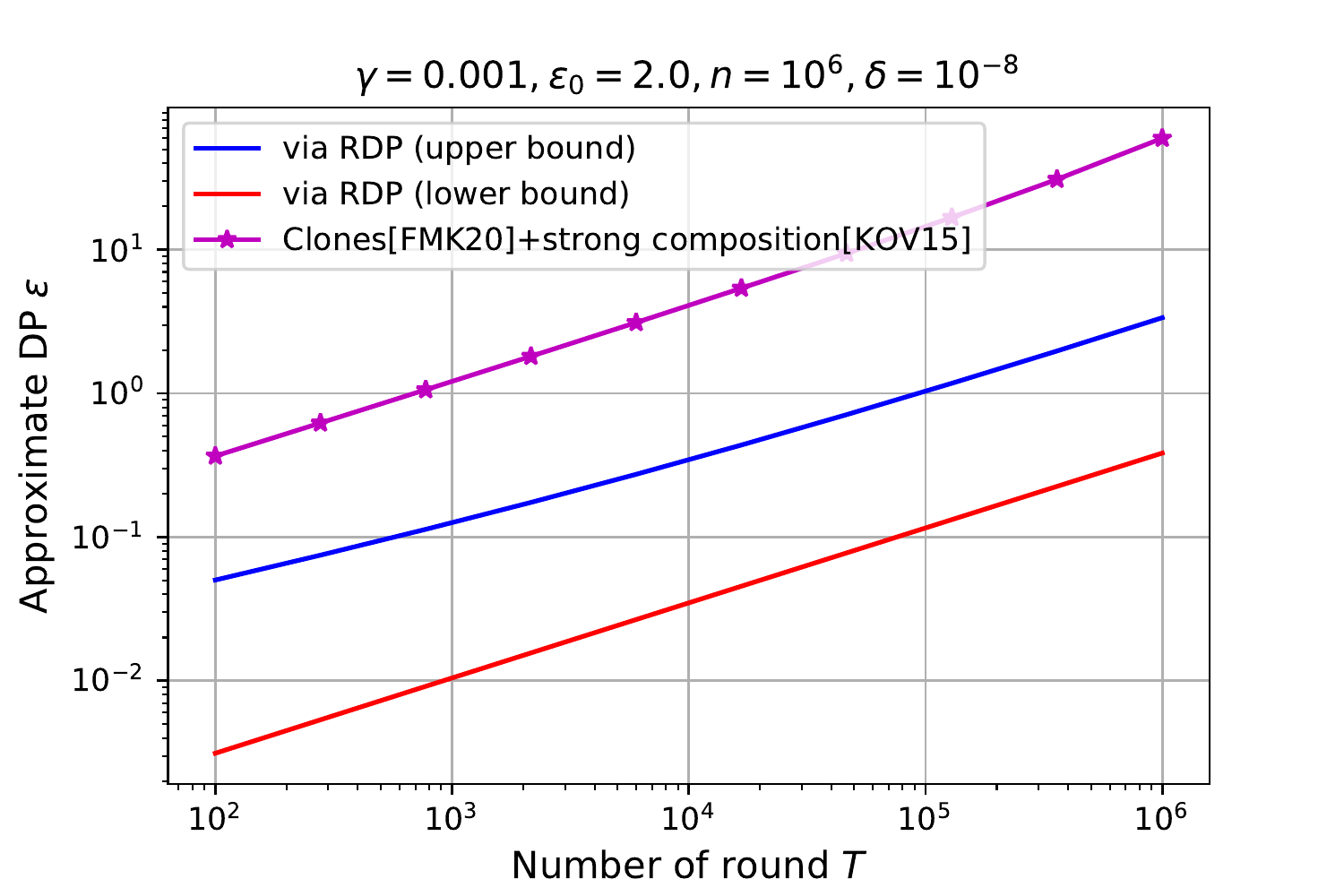}
  \caption{Approx.\ DP as a function of $T$ for $\epsilon_0=2$, $\gamma=0.001$, $n=10^{6}$}
  \label{fig:Dp_com_1}
\end{subfigure}\hfil 
\begin{subfigure}{0.31\textwidth}
    \centering 
  \includegraphics[scale=0.38]{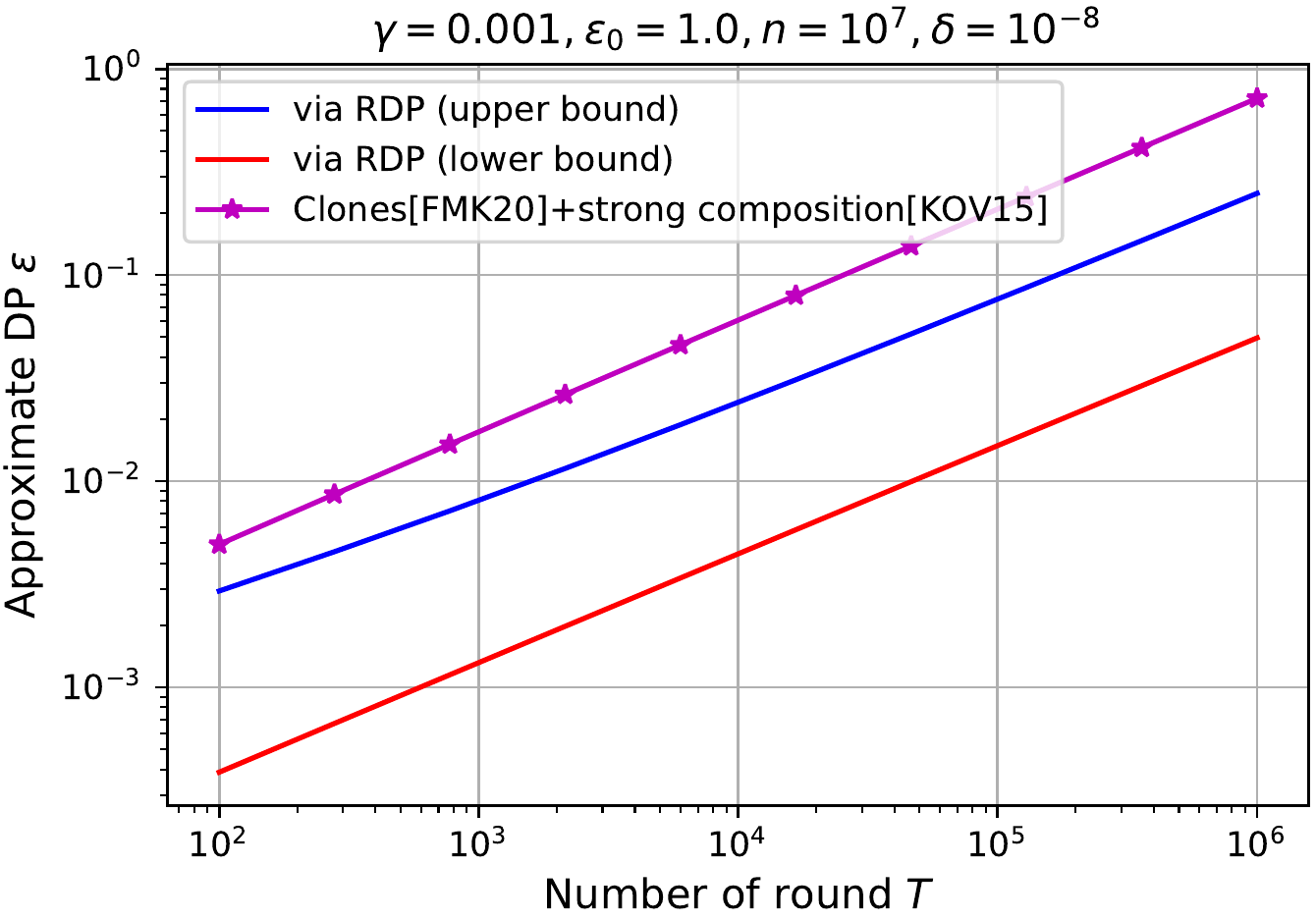}
  \caption{Approx.\ DP as a function of $T$ for $\epsilon_0=1$, $\gamma=0.001$, $n=10^{7}$}
  \label{fig:Dp_com_2}
\end{subfigure}\hfil 
\begin{subfigure}{0.31\textwidth}
    \centering 
  \includegraphics[scale=0.38]{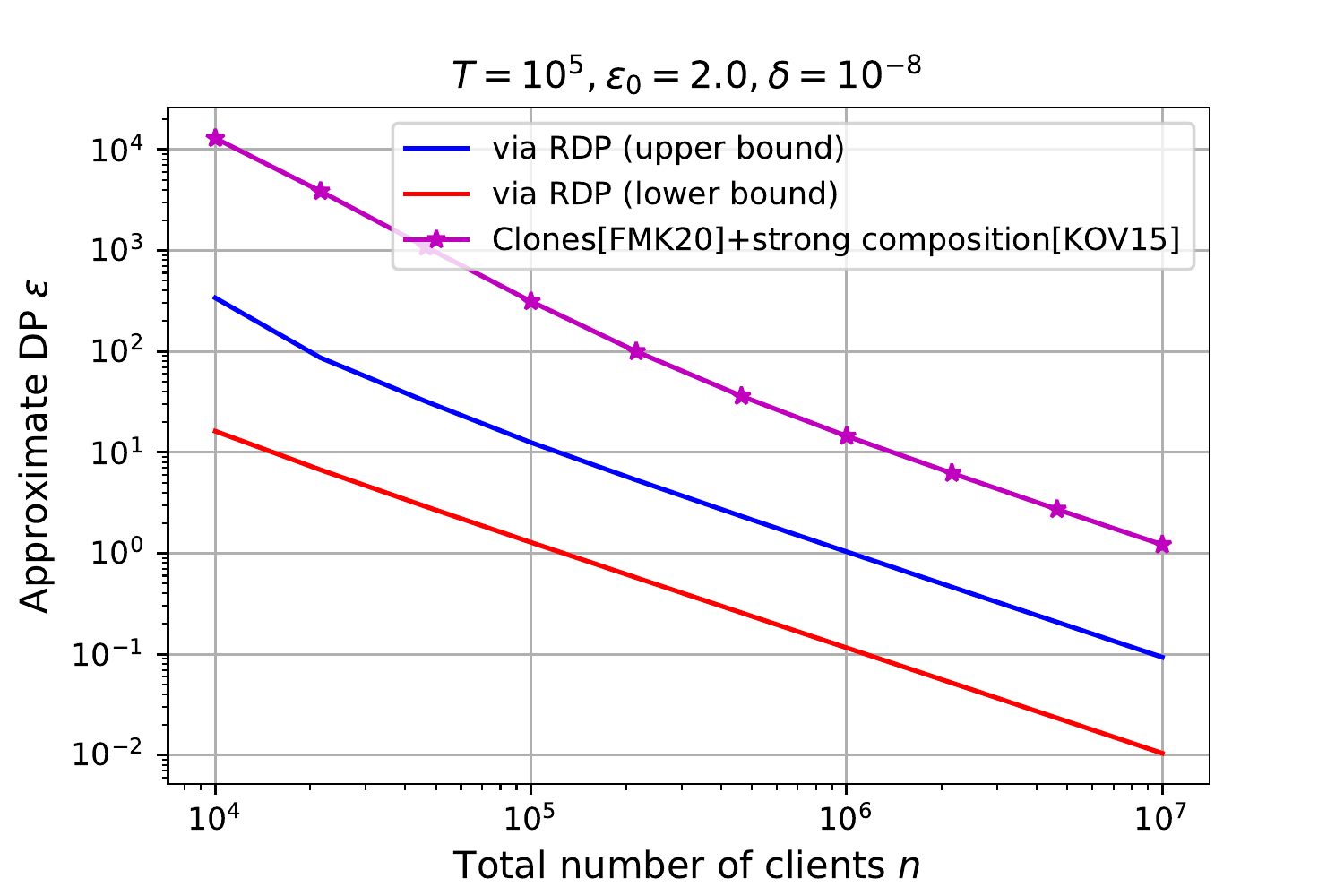}
  \caption{Approx.\ DP as a function of $n$ for $\epsilon_0=2$, $\gamma n = 10^3$, $T=10^{5}$}
  \label{fig:Dp_com_3}
\end{subfigure}
\caption{Comparison of several bounds on the Approximate $\left(\epsilon,\delta\right)$-DP for composition of a sequence of subsampled shuffle mechanisms for $\delta=10^{-8}$: {\sf (i)} Approximate DP obtained from our upper bound on the RDP in Theorem~\ref{thm:general_case} (blue); {\sf (ii)} Approximate DP obtained from our lower bound on the RDP in Theorem~\ref{thm:lower_bound} (red); and {\sf (iii)} Applying the strong composition theorem~\cite{kairouz2015composition} after getting the Approximate DP of the shuffled model given in~\cite{feldman2020hiding} with subsampling~\cite{Jonathan2017sampling} (magenta).}
\label{fig:DP_composition}
\end{figure}

\begin{figure}[t]
\centering
\begin{subfigure}{0.31\textwidth}
    \centering 
  \includegraphics[scale=0.38]{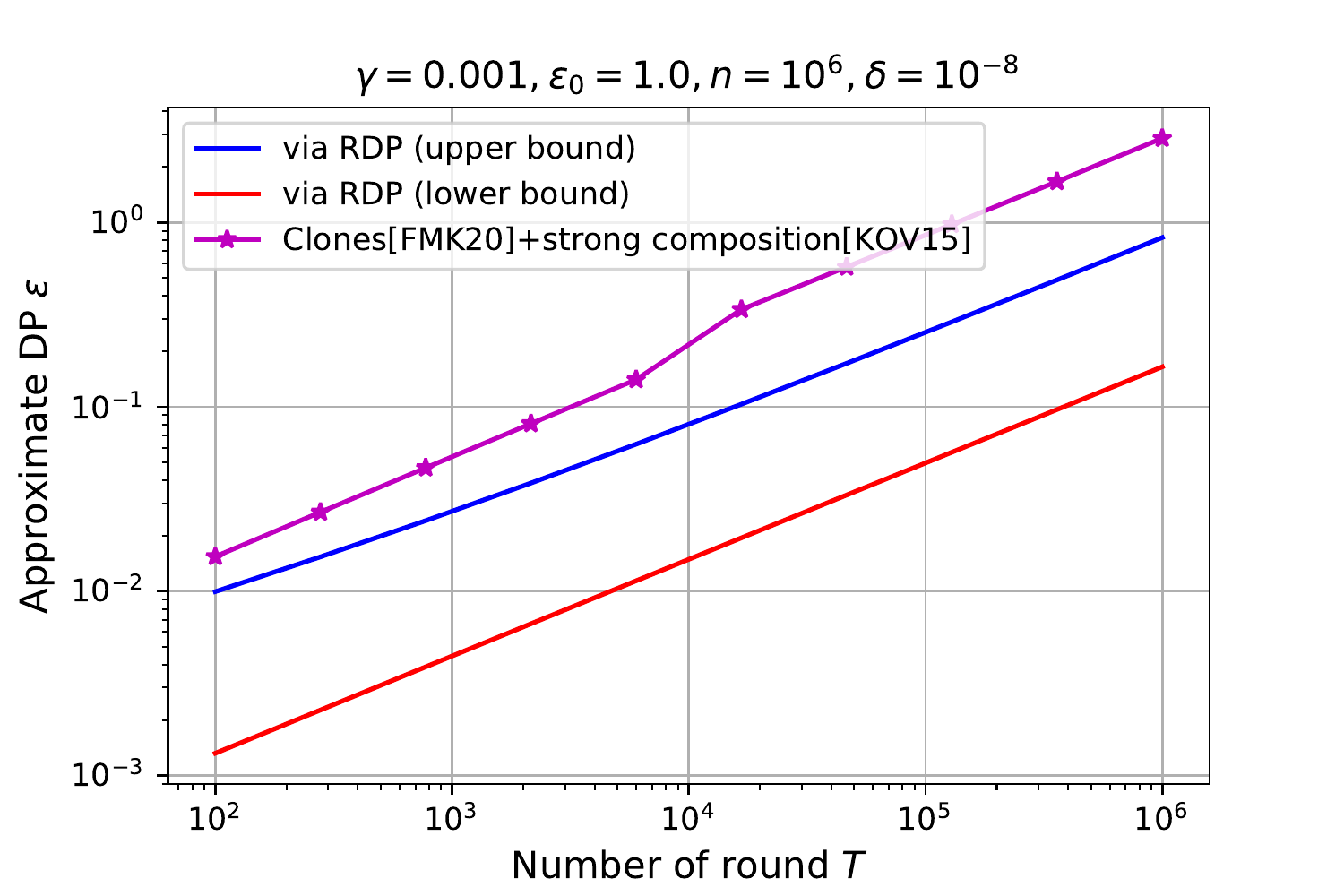}
  \caption{Approx.\ DP as a function of $T$ for $\epsilon_0=1$, $\gamma=0.001$, $n=10^{6}$}
  \label{fig:Dp_1_1}
\end{subfigure}\hfil 
\begin{subfigure}{0.31\textwidth}
    \centering 
  \includegraphics[scale=0.38]{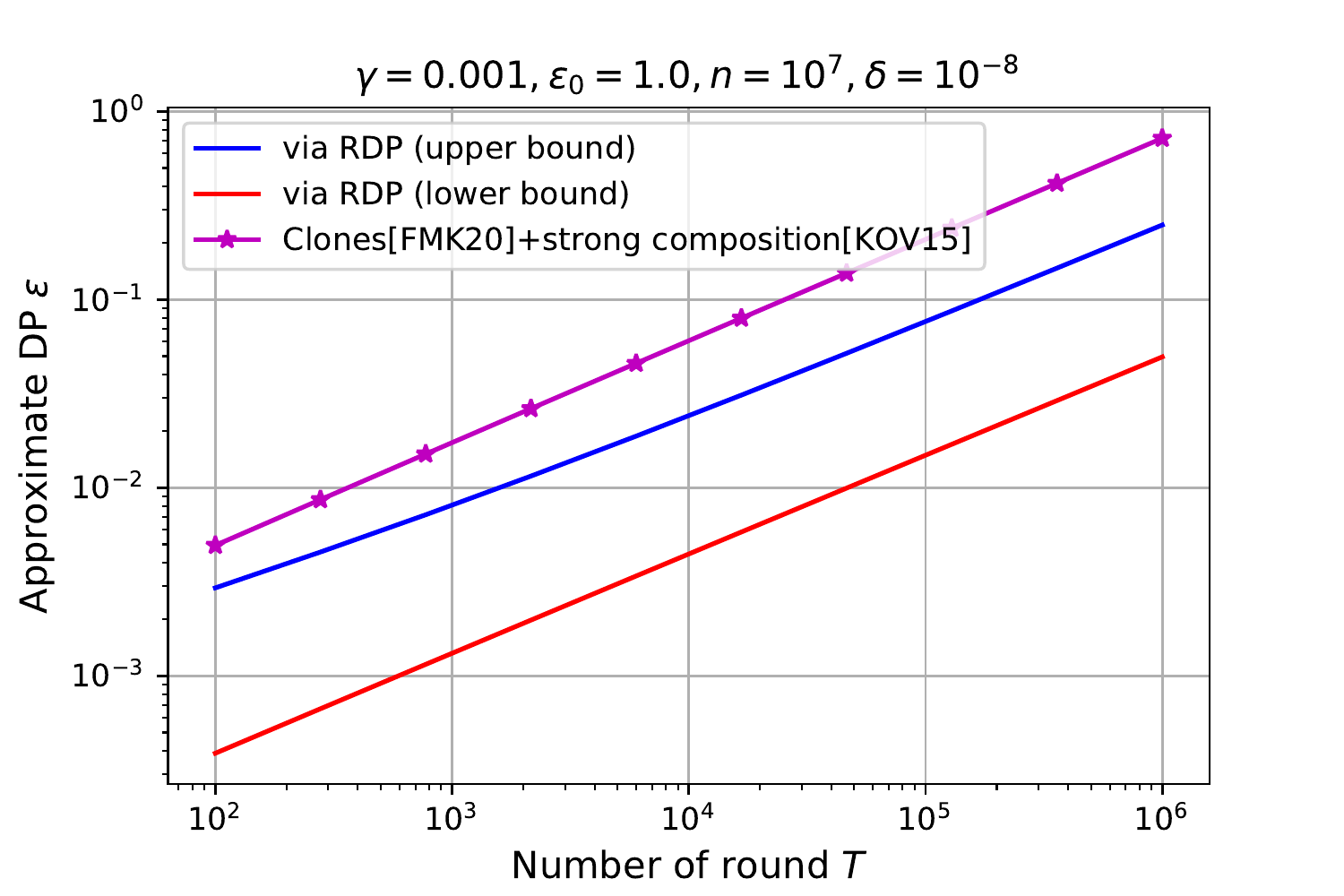}
  \caption{Approx.\ DP as a function of $T$ for $\epsilon_0=1$, $\gamma=0.001$, $n=10^{7}$}
  \label{fig:Dp_2_1}
\end{subfigure}\hfil 
\begin{subfigure}{0.3\textwidth}
    \centering 
  \includegraphics[scale=0.38]{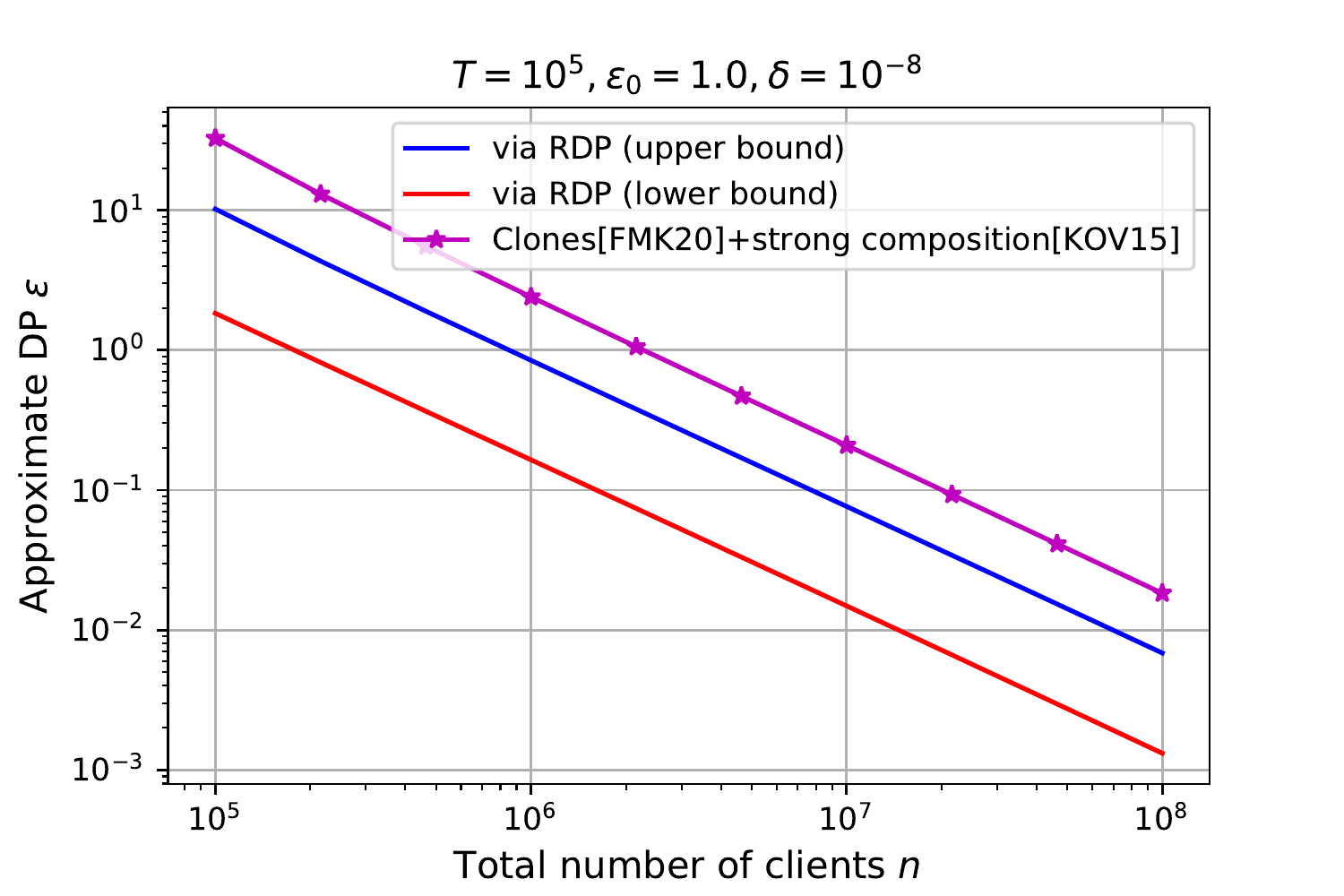}
  \caption{Approx.\ DP as a function of $n$ for $\epsilon_0=1$, $\gamma n = 10^4$, $T=10^{5}$}
  \label{fig:Dp_3_1}
\end{subfigure}
\caption{Comparison of our bound on the Approximate $\left(\epsilon,\delta\right)$-DP (blue) for composition of a sequence of subsampled shuffle mechanisms for $\delta=10^{-8}$ with applying the strong composition theorem~\cite{kairouz2015composition} after getting the Approximate DP of the shuffled model given in~\cite{feldman2020hiding} with subsampling~\cite{Jonathan2017sampling} (magenta).}
\label{fig:DP_composition_compare}
\end{figure}

\begin{figure}[t]
\centering
\begin{subfigure}{0.31\textwidth}
    \centering 
  \includegraphics[scale=0.38]{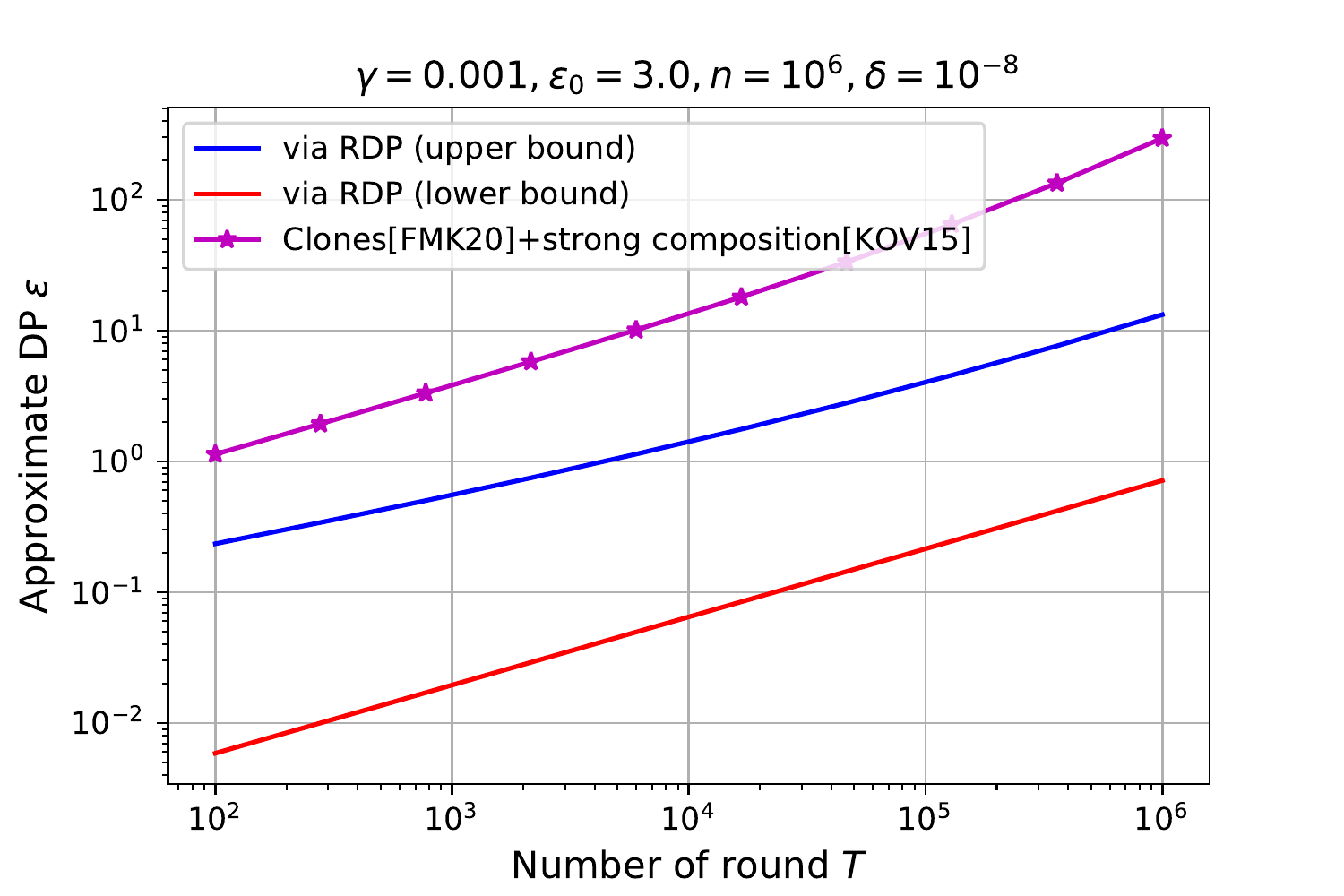}
  \caption{Approx.\ DP as a function of $T$ for $\epsilon_0=3$, $\gamma=0.001$, $n=10^{6}$}
  \label{fig:Dp_com_1_1}
\end{subfigure}\hfil 
\begin{subfigure}{0.31\textwidth}
    \centering 
  \includegraphics[scale=0.38]{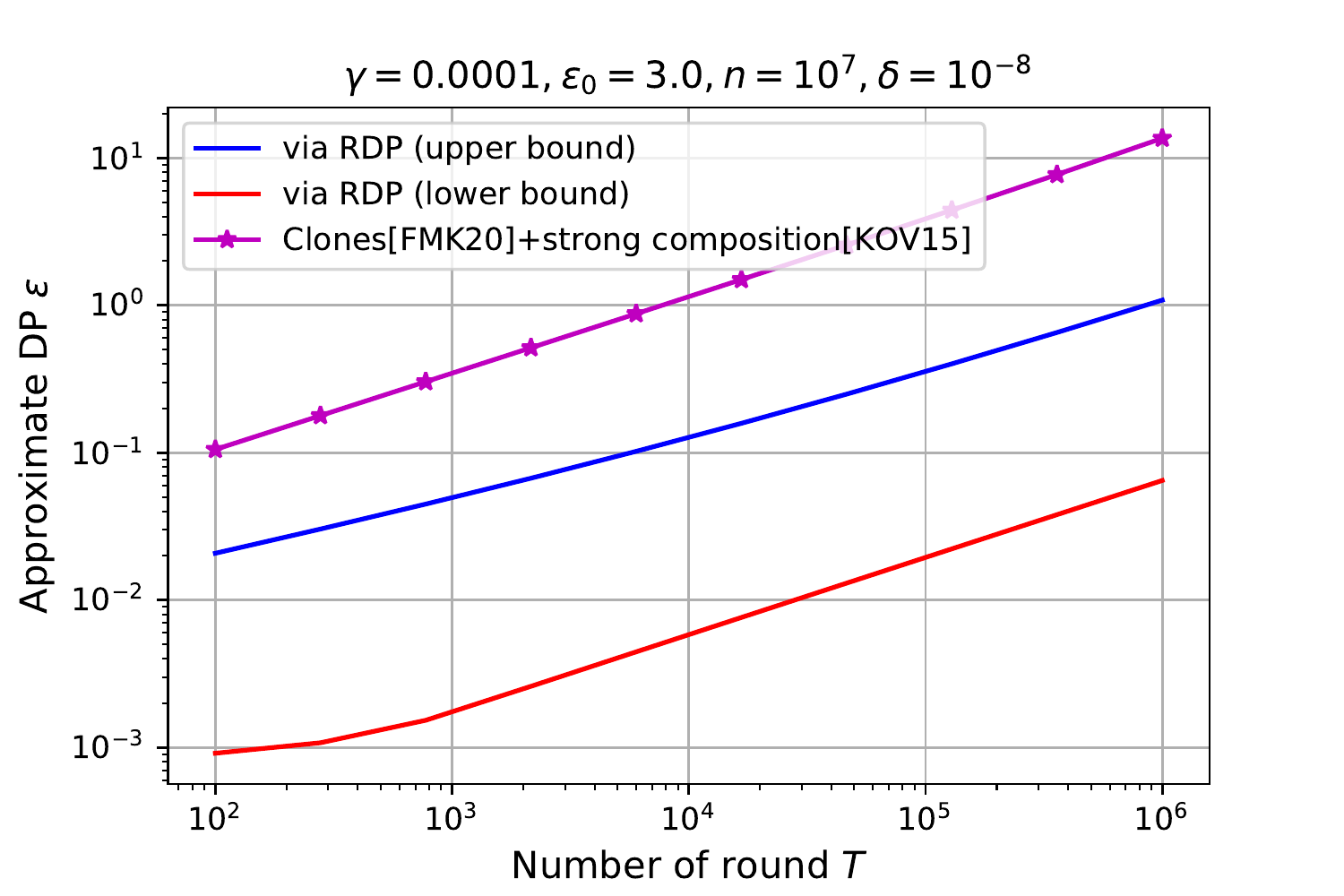}
  \caption{Approx.\ DP as a function of $T$ for $\epsilon_0=3$, $\gamma=0.0001$, $n=10^{7}$}
  \label{fig:Dp_com_2_1}
\end{subfigure}\hfil 
\begin{subfigure}{0.31\textwidth}
    \centering 
  \includegraphics[scale=0.38]{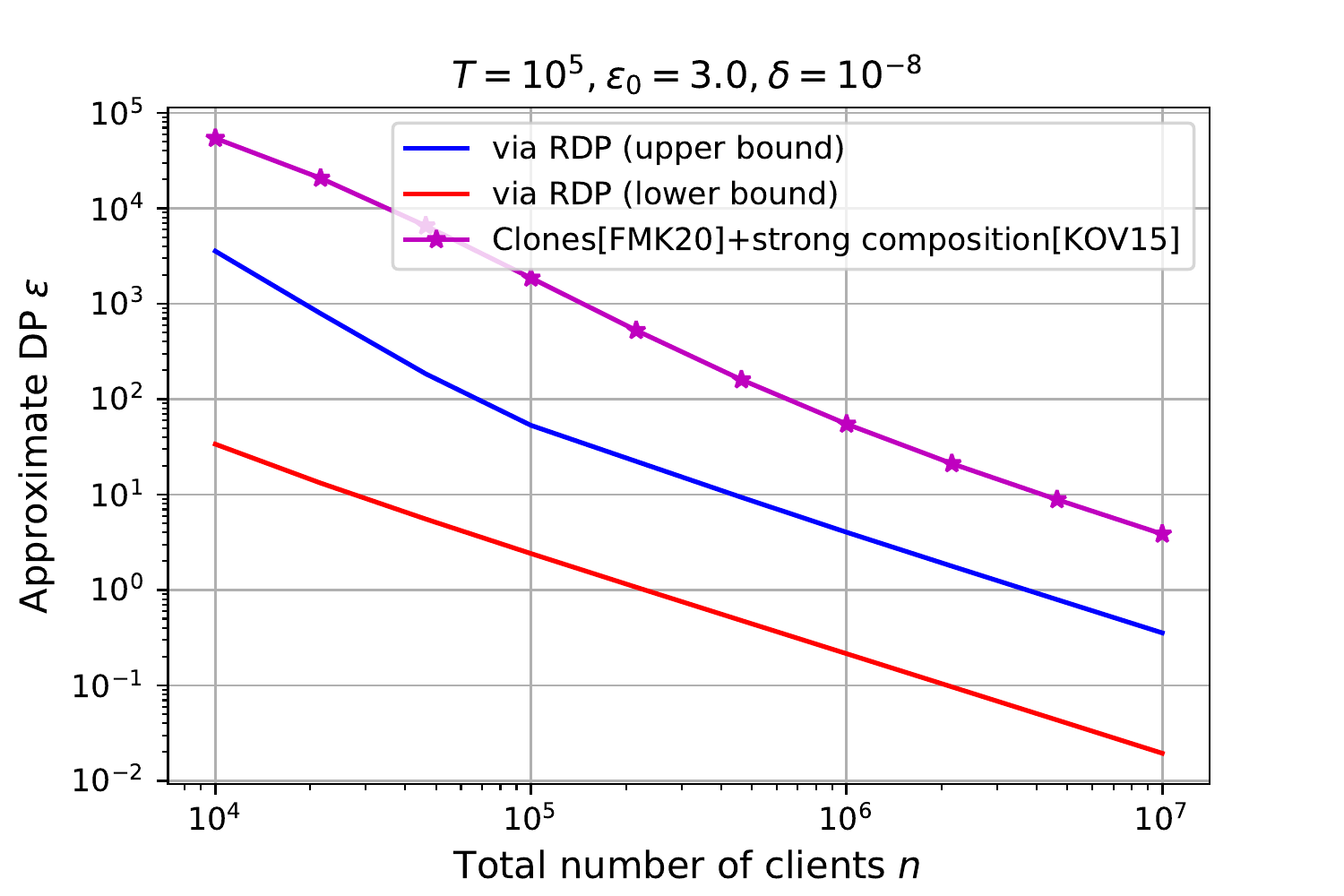}
  \caption{Approx.\ DP as a function of $n$ for $\epsilon_0=3$, $\gamma n = 10^3$, $T=10^{5}$}
  \label{fig:Dp_com_3_1}
\end{subfigure}
\caption{Comparison of several bounds on the Approximate $\left(\epsilon,\delta\right)$-DP for composition of a sequence of subsampled shuffle mechanisms for $\delta=10^{-8}$: {\sf (i)} Approximate DP obtained from our upper bound on the RDP in Theorem~\ref{thm:general_case} (blue); {\sf (ii)} Approximate DP obtained from our lower bound on the RDP in Theorem~\ref{thm:lower_bound} (red); and {\sf (iv)} Applying the strong composition theorem~\cite{kairouz2015composition} after getting the Approximate DP of the shuffled model given in~\cite{feldman2020hiding} with subsampling~\cite{Jonathan2017sampling} (magenta).}
\label{fig:DP_composition_additional}
\end{figure}

\paragraph{Composition of a sequence of subsampled shuffle models:}  
In Figures~\ref{fig:DP_composition} and \ref{fig:DP_composition_compare}, we plot several bounds on the approximate
$\left(\epsilon,\delta\right)$-DP for a composition of $T$ mechanisms $\left(\calM_1,\ldots,\calM_T\right)$, where $\calM_t$ is a subsampled shuffle mechanism for $t\in \left[T\right]$. In all our experiments reported in Figures~\ref{fig:DP_composition} and \ref{fig:DP_composition_compare}, we fix $\delta=10^{-8}$. In Figures~\ref{fig:Dp_com_3} and \ref{fig:Dp_3_1}, we fix the number of subsampled clients per iteration to be $k=\gamma n = 10^3$ and $k=10^4$, respectively. Hence, the subsampling parameter $\gamma$ varies with $n$. We observe that our new bound on the RDP of the subsampled shuffle mechanism achieves a significant saving in total privacy $\epsilon$ compared to the state-of-the-art. For example, we save a factor of $14\times$ compared to  the bound on DP~\cite{feldman2020hiding} with strong composition theorem~\cite{kairouz2015composition} in computing the overall privacy parameter $\epsilon$ for number of iterations $T=10^{5}$, subsampling parameter $\gamma=0.001$, LDP parameter $\epsilon_0=2$, and number of clients $n=10^{6}$.

Observe that our RDP bound presented in~\ref{thm:general_case} is general for any values of LDP parameter $\epsilon_0$, number of clients $n$, and RDP order $\lambda\geq 2$. On the other hand, the result of the privacy amplification by shuffling presented in~\cite{feldman2020hiding} is valid under the condition on the LDP parameter:
\begin{equation}\label{eqn:condition_clones}
\epsilon_0\leq \log\left(\frac{n}{16\log(2/\delta)}\right).
\end{equation} 
Furthermore, the result of the privacy amplification by shuffling presented in~\cite{balle2019privacy} is valid under the condition on the LDP parameter:
\begin{equation}\label{eqn:condition_blanket}
\epsilon_0\leq \frac{1}{2}\log\left(\frac{n}{\log(1/\delta)}\right).
\end{equation} 
Thus, if the conditions in~\eqref{eqn:condition_clones}-\eqref{eqn:condition_blanket} do not hold, then the results in~\cite{feldman2020hiding,balle2019privacy} have the privacy bound $\epsilon=\epsilon_0$ and $\delta=0$. For example, when total number of clients $n=10^6$, LDP parameter $\epsilon_0=3$, and we choose uniformly at random $k=1000$ clients at each iteration, then the conditions in \eqref{eqn:condition_clones} and \eqref{eqn:condition_blanket} do not hold.

 In Figure~\ref{fig:DP_composition_additional}, we plot our bound on the approximate
$\left(\epsilon,\delta\right)$-DP for a composition of $T$ mechanisms $\left(\calM_1,\ldots,\calM_T\right)$, where $\calM_t$ is a subsampled shuffle mechanism for $t\in \left[T\right]$. In all our experiments reported in Figure~\ref{fig:DP_composition_additional}, we fix $\delta=10^{-8}$ and $\epsilon_0=3$ and the conditions~\eqref{eqn:condition_clones}-\eqref{eqn:condition_blanket} do not hold with the parameter setting in these experiments. Thus, we compare our results with the bound $\epsilon=\epsilon_0$ and $\delta=0$.

 \begin{table}[t]
\centering
\begin{tabular}{ |c  |c | }
\hline
  Layer & Parameters  \\ 
 \hline\hline
Convolution & $16$ filters of $8\times 8$, Stride $2$\\ 
Max-Pooling & $2\times 2$\\ 
Convolution & $16$ filters of $4\times 4$, Stride $2$\\ 
Max-Pooling & $2\times 2$\\ 
Fully connected & $32$ units\\
Softmax& $10$ units\\
\hline
\end{tabular}
\caption{Model Architecture for MNIST}\label{T1}
\end{table}

\paragraph{Distributed private learning:} We numerically evaluate the proposed privacy-learning performance on training machine learning models. We consider the standard MNIST handwritten digit dataset that has $60,000$ training images and $10,000$ test images. We train a simple neural network that was also used in~\cite{ESA,papernot2020tempered} and described in Table~\ref{T1}. This model has $d=13,170$ parameters and achieves an accuracy of $99\%$ for non-private, uncompressed vanilla SGD. We assume that we have $n=60,000$ clients, where each client has one sample.

\begin{figure}[t]
\centering
\begin{subfigure}{0.45\textwidth}
    \centering 
  \includegraphics[scale=0.45]{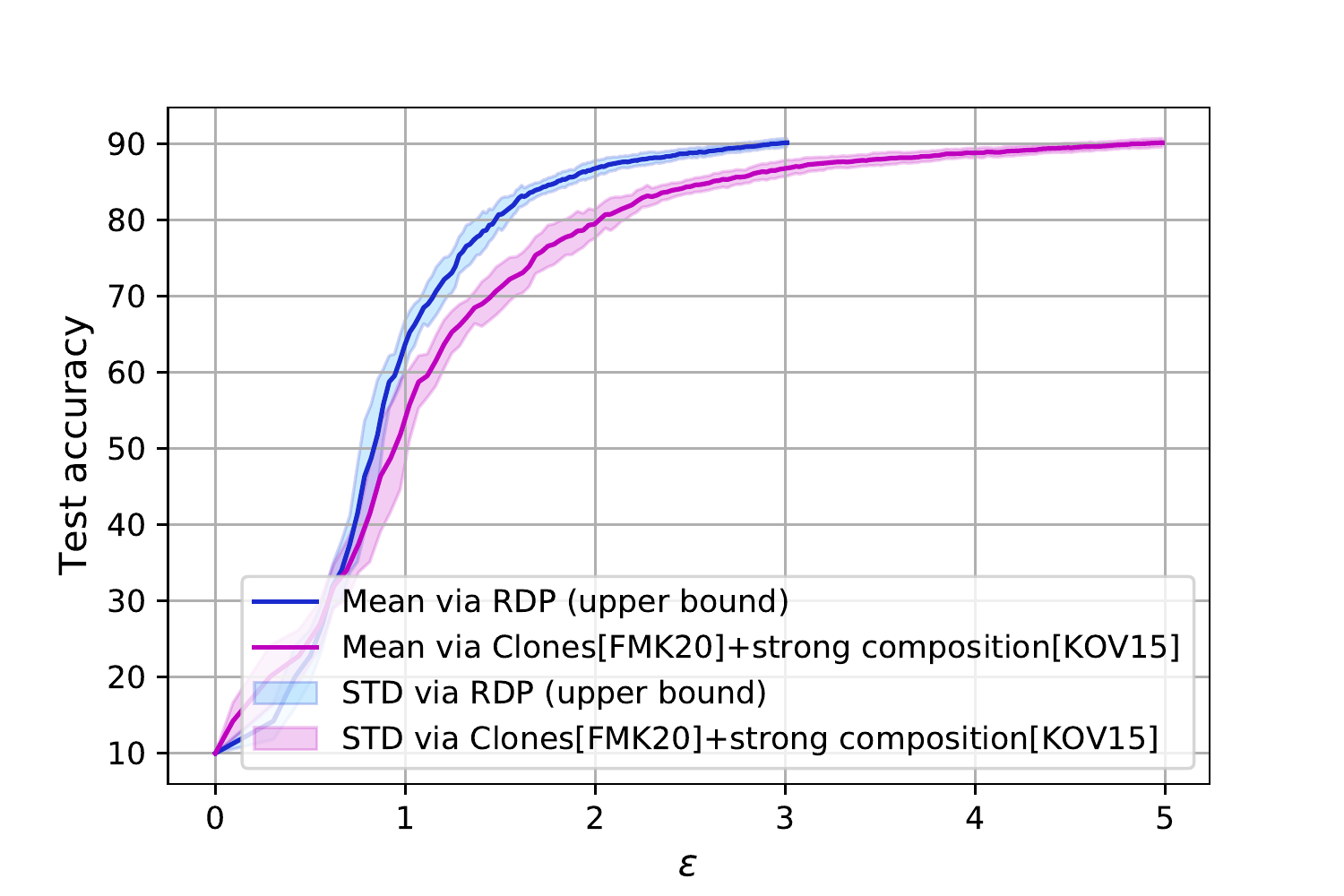}
  \caption{Privacy-Utility trade-offs on the MNIST dataset with $\ell_{\infty}$-norm clipping.}
  \label{fig:Fig4_3}
\end{subfigure}\hfil 
\begin{subfigure}{0.45\textwidth}
    \centering 
  \includegraphics[scale=0.43]{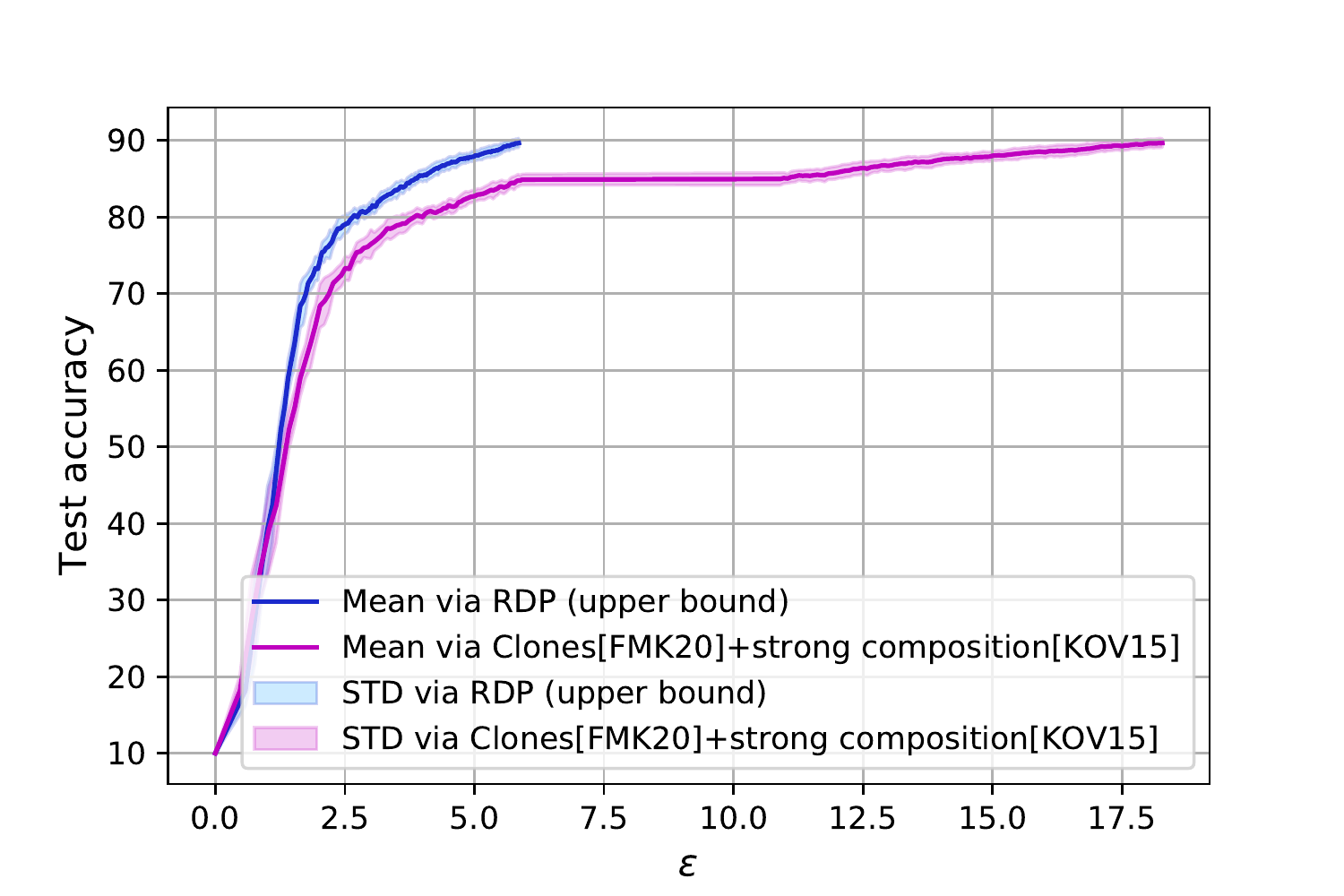}
  \caption{Privacy-Utility trade-offs on the MNIST dataset with $\ell_{2}$-norm clipping.}
  \label{fig:Fig4_4}
\end{subfigure}
\caption{Distributed private learning.}
\label{fig:DP_learning}
\end{figure}

In Figure~\ref{fig:Fig4_3}, we choose uniformly at random $10,000$ clients at each step of the Algorithm~\ref{algo:optimization-algo}, where each client clips the $\ell_{\infty}$-norm of the gradient with clipping parameter $C=1/100$ and applies the $\calR_{\infty}$ $\eps_0$-LDP mechanism proposed in~\cite{girgis2021shuffled-aistats} with $\epsilon_0=1.5$. We run Algorithm~\ref{algo:optimization-algo} with $\delta=10^{-5}$ for $200$ epochs, with learning rate $\eta=0.3$ for the first $70$ epochs, and then decrease it to $0.18$ in the remaining epochs. Figure~\ref{fig:Fig4_3} plots the mean and the standard deviation of privacy-accuracy trade-offs averaged over $10$ runs for $\ell_{\infty}$-norm clipping. 

In Figure~\ref{fig:Fig4_4}, we choose uniformly at random $2,000$ clients at each step of the Algorithm~\ref{algo:optimization-algo}, where each client clips the $\ell_{2}$-norm of the gradient with clipping parameter $C=0.005$ and applies the $\calR_{2}$ $\eps_0$-LDP mechanism (PrivUnit) proposed in~\cite{duchi2013local} with $\epsilon_0=2$. We run Algorithm~\ref{algo:optimization-algo} with $\delta=10^{-5}$ for $200$ epochs, with learning rate $\eta=12$ for the first $30$ epochs, and then decrease it to $4$ in the next $30$ epochs. We decrease the learning rate to $3.5$ for the remaining epochs. Figure~\ref{fig:Fig4_4} plots the mean and the standard deviation of privacy-accuracy trade-offs averaged over $4$ runs for $\ell_{2}$-norm clipping.

For our privacy analysis, the total privacy budget is computed by optimizing over RDP order $\lambda$ using our upper bound given in Theorem~\ref{thm:general_case}. For privacy analysis of~\cite{feldman2020hiding}, we first compute the privacy amplification by shuffling numerically given in~\cite{feldman2020hiding}; then we compute its privacy obtained when amplified via subsampling~\cite{Jonathan2017sampling}; and finally we use the strong composition theorem~\cite{kairouz2015composition} to obtain the central privacy parameter $\epsilon$.  

For the $\ell_{\infty}$-norm clipping, we achieve an accuracy of $80\%(\pm 1.8)$ with a total privacy budget of $\epsilon= 1.4$ using our new privacy analysis, whereas, \cite{feldman2020hiding} achieves an accuracy of $70.7\%(\pm 2.1)$ with the same privacy budget of $\epsilon= 1.4$ using the standard composition theorems. Furthermore, we achieve an accuracy of $90\%(\pm 0.5)$ with total privacy budget of $\epsilon= 2.91$ using our new privacy analysis, whereas, \cite{feldman2020hiding} (together with the standard strong composition theorem) achieve the same accuracy with a total privacy budget of $\epsilon= 4.82$. Similarly, for $\ell_{2}$-norm clipping, we achieve an accuracy of $81.15\%(\pm 0.7)$ with a total privacy budget of $\epsilon= 3$ using our new privacy analysis, whereas, \cite{feldman2020hiding} achieves an accuracy of only $76.46\%(\pm 1.9)$ with the same privacy budget of $\epsilon= 3$ using the standard composition theorems. Furthermore, we achieve an accuracy of $89.7\%(\pm 0.5)$ with a total privacy budget of $\epsilon= 5.8$ using our new privacy analysis, whereas, \cite{feldman2020hiding} (together with the standard strong composition theorem) achieve the same accuracy with a total privacy budget of $\epsilon= 18.3$.

\section{Proof of Theorem~\ref{thm:general_case}: Upper Bound}\label{sec:general_case}

For any dataset $\calD_k=(d_1,\hdots,d_k)\in\calX^k$ containing of $k$ data points, we define a shuffle mechanism $\calM_{sh}(\calD_k)$ as follows:
\begin{equation}
\calM_{sh}(\calD_k)=\calH_{k}\left(\calR\left(d_1\right),\ldots,\calR\left(d_k\right)\right),
\end{equation}
where $\calH_k$ takes $k$ inputs and outputs a uniformly random permutation of them.
Recall from \eqref{shuffle-mech}, for any dataset $\calD_n=(d_1,\hdots,d_n)\in\calX^n$ containing $n$ data points, the subsampled-shuffle mechanism is defined as $\calM\left(\calD\right) := \calH_{k}\circ \samp_{k}^{n}\left(\calR\left(d_1\right),\ldots,\calR\left(d_n\right)\right)$.

The proof of Theorem~\ref{thm:general_case} consists of two steps. First, we bound the ternary-$|\chi|^{\alpha}$-DP of the shuffle mechanism $\calM_{sh}$ (see Theorem~\ref{thm:ternary_DP_shuffle}), which is the main technical contribution in this proof. Then, using this, we bound the RDP of the subsampled shuffle mechanism $\calM$.

\begin{theorem}[$\zeta$-ternary-$|\chi|^{\alpha}$-DP of the shuffle mechanism $\calM_{sh}$]\label{thm:ternary_DP_shuffle} 
For any integer $k\geq 2$, $\epsilon_0>0$, and all $\alpha\geq 2$, the $\zeta$-ternary-$|\chi|^{\alpha}$-DP of the shuffle mechanism $\calM_{sh}$ is bounded by:
\begin{equation}
\zeta\left(\alpha\right)^{\alpha}\leq \left\{\begin{array}{ll}
4\frac{\left(e^{\epsilon_0}-1\right)^2}{\overline{k}e^{\epsilon_0}}+(e^{\epsilon_0}-e^{-\epsilon_0})^{\alpha}e^{-\frac{k-1}{8e^{\epsilon_0}}}& \text{if\ } \alpha=2, \\
\alpha\Gamma\left(\alpha/2\right)\left(\frac{2\left(e^{2\epsilon_0}-1\right)^2}{\overline{k}e^{2\epsilon_0}}\right)^{\alpha/2}+(e^{\epsilon_0}-e^{-\epsilon_0})^{\alpha}e^{-\frac{k-1}{8e^{\epsilon_0}}}& \text{otherwise}, 
\end{array}
\right.
\end{equation} 
where $\overline{k}=\floor{\frac{k-1}{2e^{\epsilon_0}}}+1$ and $\Gamma\left(z\right)=\int_{0}^{\infty}x^{z-1}e^{-x}dx$ is the Gamma function. 
\end{theorem}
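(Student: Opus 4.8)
The plan is to analyze the ternary $|\chi|^\alpha$-divergence of the shuffle mechanism $\calM_{sh}$ by first reducing a generic triple of mutually adjacent datasets to a canonical worst-case structure, and then controlling the divergence for that canonical structure via a concentration argument. Concretely, since $\calM_{sh}(\calD_k)$ is determined by the histogram of the $k$ LDP outputs, and since $\calD_k,\calD_k',\calD_k''$ differ only in a single coordinate (say the first), we can condition on the $k-1$ common randomized responses $\calR(d_2),\ldots,\calR(d_k)$ and write $\calM_{sh}(\calD_k)$ as a mixture over the ``background'' histogram. For a fixed background, the three mechanisms differ only in the distribution of where the one differing client's message lands. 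The first step is therefore to invoke the reduction result (Theorem~\ref{thm:reduce_special_case}) to argue it suffices to bound $D_{|\chi|^\alpha}$ for the special triple where $\calR$ is effectively a binary mechanism: $\calR(d)$ places its mass on two outcomes with the extremal likelihood ratio $e^{\epsilon_0}$, so that the differing client contributes $+1$ to one of two specified bins. After this reduction, the relevant randomness is captured by a single binomial-type count $m$ — the number of background messages falling into the ``favored'' bin — with $m$ concentrated around its mean $\approx (k-1)p$ for $p$ of order $e^{-\epsilon_0}$.

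The core estimate is then: conditioned on $m$, the ratio $\frac{\calM_{sh}(\calD_k)-\calM_{sh}(\calD_k')}{\calM_{sh}(\calD_k'')}$ is (up to the $\pm$ sign from the differing client) of order $\frac{e^{\epsilon_0}-1}{m+1}$ or similar, so that $\bbE\big[\,|\cdots|^\alpha \mid m\,\big]$ is of order $\big(\tfrac{e^{2\epsilon_0}-1}{m}\big)^{\alpha/2}$ after accounting for the variance-type cancellation that makes the $\alpha=2$ case scale like $1/m$ rather than $1/m^{1/2}$. I would split the expectation over $m$ into the ``good'' event $m \geq \overline{k} = \floor{\frac{k-1}{2e^{\epsilon_0}}}+1$ and the ``bad'' event $m < \overline{k}$. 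On the good event, substitute the worst-case $m=\overline{k}$ to get the main term $4\frac{(e^{\epsilon_0}-1)^2}{\overline{k}e^{\epsilon_0}}$ (for $\alpha=2$) or $\alpha\Gamma(\alpha/2)\big(\frac{2(e^{2\epsilon_0}-1)^2}{\overline{k}e^{2\epsilon_0}}\big)^{\alpha/2}$ (for $\alpha>2$) — here the $\Gamma(\alpha/2)$ factor and the power $\alpha/2$ come from bounding the $\alpha$-th moment of a centered (sub-Gaussian / binomial) fluctuation, i.e. $\bbE|Z|^\alpha \lesssim \alpha\Gamma(\alpha/2)(2\sigma^2)^{\alpha/2}$. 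On the bad event, bound the integrand crudely by its maximum $(e^{\epsilon_0}-e^{-\epsilon_0})^\alpha$ (each pointwise ratio of the differing client's contribution is at most $e^{\epsilon_0}-e^{-\epsilon_0}$ in absolute value) and pay the probability $\Pr[m<\overline{k}]$, which a Chernoff bound for $m\sim\mathrm{Bin}(k-1,p)$ with $p\geq e^{-\epsilon_0}/(1+e^{-\epsilon_0})$ bounds by $e^{-(k-1)/(8e^{\epsilon_0})}$; this yields the additive tail term $(e^{\epsilon_0}-e^{-\epsilon_0})^\alpha e^{-\frac{k-1}{8e^{\epsilon_0}}}$.

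The main obstacle I expect is the reduction step and the precise moment computation on the good event: showing rigorously that the generic ternary divergence is maximized by the binary extremal structure (so that Theorem~\ref{thm:reduce_special_case} applies cleanly to the ternary, not just binary, divergence), and then extracting exactly the stated constants — in particular getting the clean $1/\overline{k}$ dependence (rather than $1/(k-1)$) out of the conditioning on $m\geq \overline{k}$, and correctly identifying the variance proxy $\sigma^2 \asymp \frac{(e^{2\epsilon_0}-1)^2}{\overline{k}e^{2\epsilon_0}}$ that feeds the sub-Gaussian moment bound. The bookkeeping between the $\alpha=2$ case (where a genuine cancellation gives a better rate) and $\alpha>2$ (where one uses a generic centered-moment bound) is where the two branches of the stated bound come from, and keeping the constants tight there is the delicate part; everything else (the mixture/conditioning setup, the Chernoff tail, assembling the final bound) is routine.
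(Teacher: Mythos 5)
Your proposal follows essentially the same route as the paper: reduce to the special same-element datasets via Theorem~\ref{thm:reduce_special_case}, which expresses the divergence as an expectation over a binomial count $m\sim\mathrm{Bin}(k-1,e^{-\epsilon_0})$ of the special-case divergence $E_m$; split at $m\geq\overline{k}$, use the centered-moment bound with the $\alpha\Gamma(\alpha/2)$ factor (Theorem~\ref{thm:ternary_special_case}, where the factor $4$ and the $2$ inside the power come from the elementary inequality $|x+y|^{\alpha}\leq 2^{\alpha-1}(|x|^{\alpha}+|y|^{\alpha})$) on the good event, and bound the bad event by $E_0=(e^{\epsilon_0}-e^{-\epsilon_0})^{\alpha}$ times the Chernoff tail. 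The only slip is cosmetic: the binomial parameter in the reduction is $q=e^{-\epsilon_0}$ (not $1/(e^{\epsilon_0}+1)$), which is what makes the threshold $\overline{k}=\floor*{\frac{k-1}{2e^{\epsilon_0}}}+1$ sit at half the mean and yields exactly the tail $e^{-\frac{k-1}{8e^{\epsilon_0}}}$.
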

Theorem~\ref{thm:ternary_DP_shuffle} is one of the core technical results of this paper, and we prove it in Section~\ref{sec:proof_ternary_DP_shuffle}. 

It was shown in \cite[Proposition~$16$]{wang2019subsampled} that if a mechanism obeys $\zeta$-ternary-$|\chi|^{\alpha}$-DP, then its subsampled version (with subsampling parameter $\gamma$) will obey $\gamma\zeta$-ternary-$|\chi|^{\alpha}$-DP. Using that result, the authors then bounded the RDP of the subsampled mechanism in \cite[Eq.~$(9)$]{wang2019subsampled}.
Adapting that result to our setting, we have the following lemma.
\begin{lemma}[From $\zeta$-ternary-$|\chi|^{\alpha}$-DP to subsampled RDP]\label{lem:samplig} 
Suppose the shuffle mechanism $\calM_{sh}$ obeys $\zeta$-ternary-$|\chi|^{\alpha}$-DP. For any $\lambda\geq 2,k\leq n$, RDP of the subsampled shuffle mechanism $\calM$ (with subsampling parameter $\gamma=k/n$) is bounded by:
$\epsilon(\lambda)\leq \frac{1}{\lambda-1}\log\big(1+\sum_{\alpha=2}^{\lambda}\binom{\lambda}{\alpha} \gamma^{\alpha}\zeta(\alpha)^{\alpha}\big)$.
\end{lemma}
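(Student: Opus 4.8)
The plan is to combine the known subsampling behavior of ternary $|\chi|^{\alpha}$-DP with the identity expressing Rényi divergence of integer order in terms of lower-order Pearson–Vajda (i.e.\ $|\chi|^{\alpha}$) divergences. First I would invoke \cite[Proposition~16]{wang2019subsampled}: if $\calM_{sh}$ obeys $\zeta$-ternary-$|\chi|^{\alpha}$-DP, then the subsampled mechanism $\calM = \calM_{sh}\circ\samp_{k}^{n}$ obeys $\gamma\zeta$-ternary-$|\chi|^{\alpha}$-DP, where $\gamma = k/n$; that is, for every triple of mutually adjacent datasets and every $\alpha\geq 1$, the ternary divergence $D_{|\chi|^{\alpha}}\left(\calM(\calD),\calM(\calD')\,\|\,\calM(\calD'')\right) \leq \gamma^{\alpha}\zeta(\alpha)^{\alpha}$. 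I should note the minor discrepancy that \cite{wang2019subsampled} works with Poisson subsampling while our $\samp_{k}^{n}$ is subsampling without replacement; I would remark that the same argument (or a standard coupling/monotonicity argument) carries over, since the sampled set still contains the differing index with probability $\gamma$ and the conditional-on-not-sampled distributions coincide.

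Next I would write down the algebraic reduction of RDP to ternary $|\chi|^{\alpha}$-DP, which is exactly \cite[Eq.~(9)]{wang2019subsampled} specialized to our mechanism. Fix neighboring $\calD,\calD'\in\calX^n$ differing at index $i$, and let $\calD''$ agree with both off index $i$ while being a third choice at $i$ — here one uses that $\calM$ only depends on the data through the subsampled multiset, so that $\calM(\calD'')$ is a legitimate "third arm" sharing the structure needed for the ternary divergence. Expanding the integer-$\lambda$ moment via the binomial theorem,
\begin{align*}
\bbE_{\theta\sim\calM(\calD'')}\left[\left(\frac{\calM(\calD)(\theta)}{\calM(\calD'')(\theta)}\right)^{\lambda}\right]
&= \bbE_{\theta\sim\calM(\calD'')}\left[\left(1+\frac{\calM(\calD)(\theta)-\calM(\calD'')(\theta)}{\calM(\calD'')(\theta)}\right)^{\lambda}\right]\\
&= \sum_{\alpha=0}^{\lambda}\binom{\lambda}{\alpha}\,\bbE_{\theta\sim\calM(\calD'')}\left[\left(\frac{\calM(\calD)(\theta)-\calM(\calD'')(\theta)}{\calM(\calD'')(\theta)}\right)^{\alpha}\right],
\end{align*}
and then bound each $\alpha$-th term: the $\alpha=0$ term is $1$, the $\alpha=1$ term is $0$ (both are probability measures), and for $\alpha\geq 2$ the term is at most its absolute value, which is the ternary $|\chi|^{\alpha}$ divergence $D_{|\chi|^{\alpha}}\left(\calM(\calD),\calM(\calD'')\,\|\,\calM(\calD'')\right)$ — bounded by $\gamma^{\alpha}\zeta(\alpha)^{\alpha}$ from the previous step. (To match the exact form in the statement, one takes $\calD''$ to be the neighbor $\calD'$ itself in the role of the reference, or uses the worst-case over the third dataset; the bound $\gamma^{\alpha}\zeta(\alpha)^{\alpha}$ holds for any admissible triple.) This yields $\bbE[(\calM(\calD)/\calM(\calD'))^{\lambda}] \leq 1 + \sum_{\alpha=2}^{\lambda}\binom{\lambda}{\alpha}\gamma^{\alpha}\zeta(\alpha)^{\alpha}$, and taking $\frac{1}{\lambda-1}\log(\cdot)$ and maximizing over neighboring pairs gives the claimed RDP bound.

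The main obstacle — really the only subtle point — is the bookkeeping around the "third dataset" in the ternary divergence: one must check that for the subsampled shuffle mechanism there genuinely exists a triple of mutually adjacent datasets realizing the pair $(\calD,\calD')$ together with a valid reference arm, so that \cite[Proposition~16]{wang2019subsampled} applies and the expansion above uses $D_{|\chi|^{\alpha}}$ legitimately; since $|\calX|$ may be as small as $2$ one cannot always pick a literally distinct third value, but the $|\chi|^{\alpha}$ bound is vacuously fine when $\calD''\in\{\calD,\calD'\}$ and otherwise a genuine third point exists, so in all cases the per-term bound $\gamma^{\alpha}\zeta(\alpha)^{\alpha}$ holds. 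Everything else is the binomial expansion and the elementary observation that odd-order terms are dominated by their absolute values. I would therefore present this as a short lemma whose proof is essentially a pointer to \cite{wang2019subsampled} with the shuffle/without-replacement adaptations spelled out.
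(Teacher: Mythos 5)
Your proposal is correct and takes essentially the same route as the paper: the paper itself states that Lemma~\ref{lem:samplig} is a corollary of \cite[Proposition~16 and Eq.~(9)]{wang2019subsampled}, and its Appendix~\ref{app_sec:sampling} proof merely inlines those two ingredients (the mixture decomposition $\calM(\calD)=\gamma P_E+(1-\gamma)P_{E^c}$, the binomial expansion with the vanishing first-order term, and the auxiliary-index coupling with joint convexity) so as to handle sampling without replacement directly. The two adaptations you flag --- without-replacement versus Poisson subsampling, and the bookkeeping of the third arm of the ternary divergence --- are precisely what that inlined argument resolves.
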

Lemma~\ref{lem:samplig} can be seen as a corollary to \cite[Proposition~$16$ and Eq.~$(9)$]{wang2019subsampled}. However, for completeness, we prove it in Appendix~\ref{app_sec:sampling}. Substituting the bound on $\zeta(\alpha)$ from Theorem~\ref{thm:ternary_DP_shuffle} into Lemma~\ref{lem:samplig}, we get 
\begin{align}
\epsilon\left(\lambda\right) 
&\leq \frac{1}{\lambda-1}\log\left[1+\binom{\lambda}{2}\gamma^{2}\(4\frac{\left(e^{\epsilon_0}-1\right)^2}{\overline{k}e^{\epsilon_0}}+(e^{\epsilon_0}-e^{-\epsilon_0})^{2}e^{-\frac{k-1}{8e^{\epsilon_0}}}\) \right.\notag\\
&\hspace{2.5cm} \left.+\sum_{\alpha=3}^{\lambda}\binom{\lambda}{\alpha}\gamma^{\alpha} \( \alpha \Gamma(\alpha/2) \left(\frac{2\left(e^{2\epsilon_0}-1\right)^2}{\overline{k}e^{2\epsilon_0}}\right)^{\alpha/2}+(e^{\epsilon_0}-e^{-\epsilon_0})^{\alpha}e^{-\frac{k-1}{8e^{\epsilon_0}}}\)\right] \notag\\
&=\frac{1}{\lambda-1}\log\left[1+4\binom{\lambda}{2}\gamma^{2}\frac{\left(e^{\epsilon_0}-1\right)^2}{\overline{k}e^{\epsilon_0}}+\sum_{\alpha=3}^{\lambda}\binom{\lambda}{\alpha}\gamma^{\alpha} \alpha \Gamma(\alpha/2) \left(\frac{2\left(e^{2\epsilon_0}-1\right)^2}{\overline{k}e^{2\epsilon_0}}\right)^{\alpha/2}+\Upsilon\right] \label{final_bound_general_case},
\end{align} 
where $\Upsilon=\sum_{\alpha=2}^{\lambda} \binom{\lambda}{\alpha}\gamma^{\alpha}(e^{\epsilon_0}-e^{-\epsilon_0})^{\alpha}e^{-\frac{k-1}{8e^{\epsilon_0}}}=\left(\left(1+\gamma \frac{e^{2\epsilon_0}-1}{e^{\epsilon_0}}\right)^{\lambda}-1-\lambda\gamma \frac{e^{2\epsilon_0}-1}{e^{\epsilon_0}}\right)e^{-\frac{k-1}{8e^{\epsilon_0}}}$. 

The above expression in \eqref{final_bound_general_case} is the bound given in Theorem~\ref{thm:general_case}.

\section{Proof of Theorem~\ref{thm:ternary_DP_shuffle}: Ternary $|\chi|^{\alpha}$-DP of the Shuffle Model}\label{sec:proof_ternary_DP_shuffle}

The proof has two main steps. In the first step, we reduce the problem of deriving ternary divergence for arbitrary neighboring datasets to the problem of deriving the ternary divergence for specific neighboring datasets, $\calD\sim\calD'\sim\calD''$, where all elements in $\calD$ are the same and $\calD',\calD''$ differ from $\calD$ in one entry. In the second step, we derive the ternary divergence for the special neighboring datasets.

The specific neighboring datasets to which we reduce our general problem has the following form:
\begin{align}
\calD_{\same}^{m} &= \left\lbrace (\calD_{m},\calD'_{m},\calD''_{m}): \calD_{m}= (d,\ldots,d,d)\in\calX^{m},\ \calD'_{m}=(d,\ldots,d,d')\in\calX^{m}, \text{ and }\notag \right. \\
&\left.\hspace{4cm} \calD''_{m}=(d,\ldots,d,d'')\in\calX^{m}, \text{ where } d,d',d''\in\calX \right\rbrace, \label{datasets-same}
\end{align}
Consider arbitrary neighboring datasets $\calD=(d_1,\ldots,d_{k-1},d_k)$, $\calD'=(d_1,\ldots,d_{k-1},d'_k)$, $\calD''=(d_1,\ldots,d_{k-1},d''_k)$, each having $k$ elements. For any $m\in\lbrace 0,\ldots,k-1\rbrace$, we define new neighboring datasets $\calD_{m+1}^{(k)}=(d''_k,\ldots,d''_{k},d_k)$, $\calD'^{(k)}_{m+1}=(d''_k,\ldots,d''_k,d'_k)$, and $\calD''^{(k)}_{m+1}=(d''_k,\ldots,d''_k)$, each having $m+1$ elements. Observe that $(\calD''^{(k)}_{m+1},\calD'^{(k)}_{m+1},\calD^{(k)}_{m+1})\in\calD_{\same}^{m}$. The first step of the proof is given in the following theorem:
\begin{theorem}[Reduction to the Special Case]\label{thm:reduce_special_case} 
Let $q=\frac{1}{e^{\epsilon_0}}$. We have: 
\begin{align}
&\mathbb{E}_{\bh\sim\calM_{sh}(\calD'')}\left[\left|\frac{\calM_{sh}(\calD)(\bh)-\calM_{sh}(\calD')(\bh)}{\calM_{sh}(\calD'')(\bh)}\right|^{\alpha}\right] \notag \\
&\hspace{1cm}\leq \mathbb{E}_{m\sim\emph{Bin}\left(k-1,q\right)}\left[\mathbb{E}_{\bh\sim\calM_{sh}(\calD_{m+1}''^{(k)})}\left[\left|\frac{\calM_{sh}(\calD_{m+1}^{(k)})(\bh)-\calM_{sh}(\calD_{m+1}'^{(k)})(\bh)}{\calM_{sh}(\calD_{m+1}''^{(k)})(\bh)}\right|^{\alpha}\right]\right]. \label{eq:reduce_special-case_bound}
\end{align}
\end{theorem}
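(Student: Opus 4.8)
\textbf{Proof proposal for Theorem~\ref{thm:reduce_special_case}.}

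The plan is to interpolate between the three generic datasets one coordinate at a time, but the real leverage comes from exploiting the $\epsilon_0$-LDP property of $\calR$ to rewrite each output distribution as a mixture. Concretely, for any input $d$ the distribution $\calR(d)$ can be written as $\calR(d) = q\,\calR(d'') + (1-q)\,\mu_d$ for a suitable residual distribution $\mu_d$, where $q = e^{-\epsilon_0}$; this is legitimate because $\epsilon_0$-LDP forces $\calR(d)(y) \ge e^{-\epsilon_0}\calR(d'')(y)$ pointwise, so subtracting off the $q$-fraction of $\calR(d'')$ leaves a nonnegative measure of total mass $1-q$. First I would apply this decomposition simultaneously to the $k-1$ ``common'' coordinates $d_1,\dots,d_{k-1}$ (the coordinates where $\calD,\calD',\calD''$ all agree). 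Then, by linearity of the shuffle operator $\calH_k$ over the product of the per-client distributions, $\calM_{sh}(\calD)$, $\calM_{sh}(\calD')$, $\calM_{sh}(\calD'')$ each become a convex combination, indexed by the subset $T\subseteq[k-1]$ of common coordinates that ``chose'' the $\calR(d'')$ branch, of shuffled mechanisms in which the coordinates in $T$ have been replaced by copies of $d''$ and the coordinates outside $T$ carry the residual distributions $\mu_{d_j}$. The mixing weights are exactly $q^{|T|}(1-q)^{k-1-|T|}$, so $|T|$ is distributed as $\text{Bin}(k-1,q)$, which is where the outer expectation in \eqref{eq:reduce_special-case_bound} comes from.

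Next I would push the $|\cdot|^{\alpha}$-divergence through this mixture. The numerator $\calM_{sh}(\calD) - \calM_{sh}(\calD')$ is a mixture (with the same weights, since only the last coordinate differs between $\calD$ and $\calD'$ and the decomposition was applied only to the first $k-1$) of differences of the corresponding conditional mechanisms; the denominator $\calM_{sh}(\calD'')$ is the matching mixture. So I want a statement of the form: for the ternary $|\chi|^\alpha$-divergence, if $P = \sum_T w_T P_T$, $P' = \sum_T w_T P'_T$, $P'' = \sum_T w_T P''_T$, then $\bbE_{P''}\big[|\frac{P-P'}{P''}|^\alpha\big] \le \sum_T w_T\, \bbE_{P''_T}\big[|\frac{P_T - P'_T}{P''_T}|^\alpha\big]$. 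This is a joint-convexity / data-processing property of the Pearson--Vajda / $|\chi|^\alpha$ divergence; it follows from writing $|\frac{P-P'}{P''}|^\alpha = |\sum_T \frac{w_T P''_T}{P''}\cdot\frac{P_T-P'_T}{P''_T}|^\alpha$, recognizing $\frac{w_T P''_T(\bh)}{P''(\bh)}$ as a probability distribution over $T$ for each fixed $\bh$, and applying Jensen's inequality with the convex map $x\mapsto|x|^\alpha$, followed by Fubini to swap the $T$-sum outside the expectation over $\bh\sim P''$. After this step the bound reads $\sum_T w_T\,\bbE_{\bh\sim\calM_{sh}^{(T)}(\calD'')}[\,\cdots\,]$ where the superscript $(T)$ means ``coordinates in $T$ set to $d''$, coordinates outside $T$ set to the residuals''.

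Finally I would reduce ``coordinates outside $T$ carry $\mu_{d_j}$'' to ``those coordinates are simply absent'', i.e.\ collapse to a shuffle of $|T|+1$ clients. The key observation is that a shuffle of $k$ messages, $|T|$ of which come from $\calR(d'')$, one from the varying coordinate, and the remaining $k-1-|T|$ from arbitrary (but identical across $\calD,\calD',\calD''$) distributions $\mu_{d_j}$, can be viewed as first generating the histogram of those $k-1-|T|$ ``irrelevant'' messages and then shuffling the remaining $|T|+1$ messages into it. Since the irrelevant part has the same distribution in all three of $\calD,\calD',\calD''$, a conditioning argument (condition on the multiset of irrelevant messages, then note the conditional ternary divergence is exactly that of the $(|T|+1)$-client shuffle applied to $(\calD_{m+1}^{(k)},\calD_{m+1}'^{(k)},\calD_{m+1}''^{(k)})$ with $m=|T|$, then average) — formally another application of the same joint-convexity inequality, or a direct data-processing argument since ``merge in a fixed independent batch of messages and shuffle'' is a randomized post-processing map acting identically on the three distributions — yields the per-$T$ bound $\bbE_{\bh\sim\calM_{sh}(\calD_{m+1}''^{(k)})}[|\frac{\calM_{sh}(\calD_{m+1}^{(k)})(\bh)-\calM_{sh}(\calD_{m+1}'^{(k)})(\bh)}{\calM_{sh}(\calD_{m+1}''^{(k)})(\bh)}|^\alpha]$ with $m=|T|$. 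Summing over $T$ with weights $q^{|T|}(1-q)^{k-1-|T|}$ and collecting terms with $|T|=m$ produces exactly the binomial expectation on the right-hand side of \eqref{eq:reduce_special-case_bound}.

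I expect the main obstacle to be the joint-convexity step for the ternary $|\chi|^\alpha$-divergence: one has to be careful that the \emph{same} mixture weights appear in all three arguments and that the weights $w_T P''_T(\bh)/P''(\bh)$ are genuinely a probability distribution pointwise in $\bh$, so that Jensen applies with the convex function $|\cdot|^\alpha$ (valid since $\alpha\ge 2\ge 1$). A secondary subtlety is bookkeeping: keeping the decomposition applied \emph{only} to the $k-1$ shared coordinates (never to the coordinate that distinguishes $\calD$ from $\calD'$), so that the numerator factors through the mixture cleanly, and handling the degenerate case $q=1$ (i.e.\ $\epsilon_0=0$) where everything trivializes. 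The collapsing-to-$(m+1)$-clients step is conceptually the post-processing inequality and should be routine once the convexity lemma is in place.
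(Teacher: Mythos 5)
Your proposal is correct and follows essentially the same route as the paper: the same mixture decomposition $\calR(d_i)=q\,\calR(d''_k)+(1-q)\tilde{\bp}_i$ applied to the $k-1$ shared coordinates, the same binomially-weighted expansion over subsets of $[k-1]$, and the same use of joint convexity of the ternary $|\chi|^{\alpha}$-divergence (the paper's Lemma~\ref{lemm:convextiy_ternary_div}, which you rederive via the pointwise reweighting-plus-Jensen argument). The only difference is cosmetic: you justify the final collapse to $m+1$ clients by conditioning on the independent block of residual messages (a data-processing argument), whereas the paper invokes the monotonicity Lemma~\ref{lem:cvx_tdp}, itself proved by convexity in each $\bp_i$; these are equivalent justifications of the same elimination step.
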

We present a proof of Theorem~\ref{thm:reduce_special_case} in Section~\ref{app_reduce_special_case}. We know (by Chernoff bound) that the binomial r.v. is concentrated around its mean, which implies that the terms in the RHS of \eqref{eq:reduce_special-case_bound} that correspond to $m<(1-\tau)q(k-1)$ (we will take $\tau=1/2$) will contribute in a negligible amount. Then we show 
that $E_{m}:=\mathbb{E}_{\bh\sim\calM_{sh}(\calD_{m+1}''^{(k)})}\left[\left|\frac{\calM_{sh}(\calD_{m+1}^{(k)})(\bh)-\calM_{sh}(\calD_{m+1}'^{(k)})(\bh)}{\calM_{sh}(\calD_{m+1}''^{(k)})(\bh)}\right|^{\alpha}\right]$ is a non-increasing function of $m$. These observation together imply that the RHS in \eqref{eq:reduce_special-case_bound} is approximately upper bounded by $E_{(1-\tau)q(k-1)}$.

Since $E_m$ is precisely what is required to bound the ternary DP for the specific neighboring datasets, we have reduced the problem of computing the ternary DP for arbitrary neighboring datasets to the problem of computing ternary DP for specific neighboring datasets. The second step of the proof bounds $E_{(1-\tau)q(n-1)}$, which follows from the result below that holds for any $m\in\bbN$.
\begin{theorem}[$|\chi|^{\alpha}$-DP for special case]\label{thm:ternary_special_case}
For any $m\in\mathbb{N}$, integer $\alpha\geq 2$, and $(\calD''_{m},\calD'_{m},\calD_{m})\in\calD_{\same}^{m}$, 
\begin{align*}
\mathbb{E}_{\bh\sim\calM_{sh}(\calD_m)}\left[\left|\frac{\calM_{sh}(\calD'_m)(\bh)-\calM_{sh}(\calD''_m)(\bh)}{\calM_{sh}(\calD_m)(\bh)}\right|^{\alpha}\right]\leq
\left\{\begin{array}{ll}
4\frac{\left(e^{\epsilon_0}-1\right)^2}{me^{\epsilon_0}}& \text{if\ } \alpha=2, \\
\alpha\Gamma(\alpha/2)\left(\frac{2(e^{2\epsilon_0}-1)^2}{me^{2\epsilon_0}}\right)^{\alpha/2}& \text{otherwise}. 
\end{array}
\right.
\end{align*}
\end{theorem}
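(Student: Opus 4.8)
The plan is to reduce the ternary $|\chi|^{\alpha}$-divergence for these structured datasets to an absolute-moment bound for an average of i.i.d., bounded, mean-zero random variables, and then to invoke elementary concentration. First I would write down the likelihood ratios in closed form. Since $\calD_m=(d,\ldots,d)$ is homogeneous, $\calM_{sh}(\calD_m)$ outputs the histogram $\bh$ of $m$ i.i.d.\ draws from $\bp$, the output distribution of $\calR(d)$; let $\bq',\bq''$ denote the output distributions of $\calR(d')$ and $\calR(d'')$. In $\calM_{sh}(\calD'_m)$ the single changed client contributes one of the $m$ coordinates of the histogram, so summing over where its message lands and using that the multinomial prefactor $\binom{m-1}{\ldots,h_j-1,\ldots}/\binom{m}{\ldots,h_j,\ldots}$ equals $h_j/m$, one gets $\calM_{sh}(\calD'_m)(\bh)/\calM_{sh}(\calD_m)(\bh)=\frac1m\sum_{j=1}^B h_j\,q'_j/p_j$, and likewise with $\bq''$. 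Hence the quantity inside $|\cdot|^{\alpha}$ is $\frac1m\sum_j h_j\,(q'_j-q''_j)/p_j$. Writing $\bh\sim\mathrm{Multinomial}(m,\bp)$ as $h_j=|\{i:X_i=j\}|$ for i.i.d.\ $X_1,\ldots,X_m\sim\bp$, this becomes $S:=\frac1m\sum_{i=1}^m Z_i$ with $Z_i:=(q'_{X_i}-q''_{X_i})/p_{X_i}$ i.i.d. Two properties close the setup: $\bbE[Z_i]=\sum_j(q'_j-q''_j)=0$, and $\epsilon_0$-LDP forces $q'_j/p_j,\,q''_j/p_j\in[e^{-\epsilon_0},e^{\epsilon_0}]$, so $Z_i$ lies almost surely in an interval of length $2(e^{\epsilon_0}-e^{-\epsilon_0})$. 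It remains to bound $\bbE|S|^{\alpha}$.

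For $\alpha\ge 3$ I would apply Hoeffding's lemma: a mean-zero variable on an interval of length $2(e^{\epsilon_0}-e^{-\epsilon_0})$ is sub-Gaussian with variance proxy $(e^{\epsilon_0}-e^{-\epsilon_0})^2$, hence $S=\frac1m\sum_i Z_i$ is sub-Gaussian with variance proxy $\nu=(e^{\epsilon_0}-e^{-\epsilon_0})^2/m=(e^{2\epsilon_0}-1)^2/(m e^{2\epsilon_0})$. The standard absolute-moment bound for sub-Gaussian variables, $\bbE|S|^{\alpha}\le \alpha\,\Gamma(\alpha/2)\,(2\nu)^{\alpha/2}$ (obtained by integrating the sub-Gaussian tail against $\alpha u^{\alpha-1}du$ followed by the substitution $u=\sqrt{2\nu t}$), gives exactly the claimed bound $\alpha\,\Gamma(\alpha/2)\left(\frac{2(e^{2\epsilon_0}-1)^2}{m e^{2\epsilon_0}}\right)^{\alpha/2}$. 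For $\alpha=2$ this route is lossy by a constant, so I would instead evaluate the second moment exactly: $\bbE[S^2]=\frac1m\mathrm{Var}(Z)=\frac1m\sum_j (q'_j-q''_j)^2/p_j$. The inequality $(q'_j-q''_j)^2\le 2(q'_j-p_j)^2+2(q''_j-p_j)^2$ bounds this by $\frac2m\big(\chi^2(\bq'\|\bp)+\chi^2(\bq''\|\bp)\big)$, and each $\chi^2$-divergence is controlled by LDP: writing $r_j=q'_j/p_j\in[e^{-\epsilon_0},e^{\epsilon_0}]$ with $\sum_j p_j r_j=1$, convexity of $r\mapsto r^2$ gives the chord bound $r^2\le (e^{\epsilon_0}+e^{-\epsilon_0})r-1$ on $[e^{-\epsilon_0},e^{\epsilon_0}]$, so $\chi^2(\bq'\|\bp)=\sum_j p_j r_j^2-1\le e^{\epsilon_0}+e^{-\epsilon_0}-2=\frac{(e^{\epsilon_0}-1)^2}{e^{\epsilon_0}}$, and the same for $\bq''$; therefore $\bbE[S^2]\le\frac{4(e^{\epsilon_0}-1)^2}{m e^{\epsilon_0}}$.

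The one genuinely delicate step is the closed-form ratio identity: the collapse of the pmf ratio to the average $\frac1m\sum_i Z_i$ hinges on the \emph{denominator} dataset being homogeneous, so that $\calM_{sh}(\calD_m)$ is an exact multinomial and the multinomial coefficients cancel cleanly; this is exactly the structure provided by Theorem~\ref{thm:reduce_special_case}. Everything after that is essentially mechanical, the only subtlety being that $\alpha=2$ must be handled by the exact variance computation above (with the $\chi^2$ bound) rather than by the slightly lossy sub-Gaussian moment bound used for $\alpha\ge 3$.
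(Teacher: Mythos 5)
Your proof is correct and arrives at exactly the stated constants, but it takes a genuinely different route from the paper. The paper first splits the ternary divergence via $|x-y|^{\alpha}\leq 2^{\alpha-1}(|x|^{\alpha}+|y|^{\alpha})$ (Lemma~\ref{lemm:traingle_inequality} and Corollary~\ref{cor:triange_inequality}) into two binary terms $\bbE\big[\big|\tfrac{\calM_{sh}(\calD'_m)}{\calM_{sh}(\calD_m)}-1\big|^{\alpha}\big]$ and $\bbE\big[\big|\tfrac{\calM_{sh}(\calD''_m)}{\calM_{sh}(\calD_m)}-1\big|^{\alpha}\big]$, and then imports the bound on each binary term wholesale from \cite[Lemma~6]{girgis2021renyi} (Lemma~\ref{lemm:T_same_Bound} here), so the concentration argument itself is not re-derived. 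You instead work directly with the difference: the ratio identity $\calM_{sh}(\calD'_m)(\bh)/\calM_{sh}(\calD_m)(\bh)=\frac{1}{m}\sum_j h_j q'_j/p_j$ (which is exactly the mechanism underlying the cited lemma) lets you write the integrand as an average of i.i.d.\ mean-zero variables $Z_i=(q'_{X_i}-q''_{X_i})/p_{X_i}$ confined to an interval of length $2(e^{\epsilon_0}-e^{-\epsilon_0})$, after which Hoeffding's lemma plus the standard sub-Gaussian absolute-moment integral give the $\alpha\geq 3$ case, and an exact variance computation with the chord bound on the $\chi^2$-divergence gives the $\alpha=2$ case. The two routes produce identical constants: your difference variable has twice the range of a single centered ratio, which costs the same factor that the paper pays through $2^{\alpha-1}\cdot 2$. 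What your approach buys is self-containedness (no reliance on the external lemma) and a cleaner one-shot treatment of the difference for $\alpha\geq 3$; what the paper's approach buys is brevity given the prior work, and a reduction that isolates the reusable binary bound. Your observation that the argument hinges on the denominator dataset being homogeneous (so that $\calM_{sh}(\calD_m)$ is an exact multinomial) is precisely the point of the reduction in Theorem~\ref{thm:reduce_special_case}, so the proposal slots correctly into the overall proof architecture.
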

The proof of Theorem~\ref{thm:ternary_special_case} is presented in Section~\ref{sec:app_ternary_special_case}. Missing details of how Theorem~\ref{thm:ternary_DP_shuffle} follows from Theorems~\ref{thm:reduce_special_case}, \ref{thm:ternary_special_case} can be found in Appendix~\ref{subsec_app:completeting-proof}.

\subsection{Proof of Theorem~\ref{thm:reduce_special_case}: Reduction to the Special Case}\label{app_reduce_special_case} 
First, we prove the joint-convexity of the ternary $|\chi|^{\alpha}$-divergence as it is important in the following proof.
\begin{lemma}[Joint-convexity of the ternary $|\chi|^{\alpha}$-divergence]\label{lemm:convextiy_ternary_div}
For all $\alpha\geq 1$, the ternary-$|\chi|^{\alpha}$-divergence $\mathbb{E}\left[\left|\frac{P-Q}{R}\right|^{\alpha}\right]$ is jointly convex in $P,Q$ and $R$. In other words, if $P_{a}=a P_0+(1-a)P_1$, $Q_{a}=a Q_0+(1-a)Q_1$, and $R_{a}=a R_0+(1-a)R_1$ for some $a\in\left[0,1\right]$, then the following holds
\begin{equation}
\mathbb{E}\left[\left|\frac{P_a-Q_a}{R_a}\right|^{\alpha}\right]\leq a \mathbb{E}\left[\left|\frac{P_0-Q_0}{R_0}\right|^{\alpha}\right]+(1-a)\mathbb{E}\left[\left|\frac{P_1-Q_1}{R_1}\right|^{\alpha}\right]
\end{equation}  
\end{lemma}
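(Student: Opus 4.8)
The plan is to prove the joint-convexity of the ternary $|\chi|^{\alpha}$-divergence by reducing it to a one-dimensional statement about a convex function of three scalar arguments. First I would observe that for fixed $a \in [0,1]$, the inequality is an integral (or sum) inequality: writing $P_a = aP_0 + (1-a)P_1$ and similarly for $Q_a, R_a$, it suffices to show that the integrand satisfies, pointwise,
\begin{equation*}
\left|\frac{aP_0 + (1-a)P_1 - aQ_0 - (1-a)Q_1}{aR_0 + (1-a)R_1}\right|^{\alpha} \leq a\left|\frac{P_0-Q_0}{R_0}\right|^{\alpha} + (1-a)\left|\frac{P_1-Q_1}{R_1}\right|^{\alpha},
\end{equation*}
where the quantities $P_i, Q_i, R_i \geq 0$ are the densities (or masses) at a given point, with $R_i > 0$. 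This is exactly the statement that the function $g(p,q,r) := \frac{|p-q|^{\alpha}}{r^{\alpha-1}}$ is jointly convex on $\mathbb{R} \times \mathbb{R} \times \mathbb{R}_{>0}$: indeed $g(aP_0 + (1-a)P_1, \ldots) \leq a g(P_0,Q_0,R_0) + (1-a)g(P_1,Q_1,R_1)$ rearranges to the displayed inequality after dividing by $r$ and noting $g(p,q,r)/r = |(p-q)/r|^{\alpha}$.

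The key step is therefore to establish that $g(p,q,r) = |p-q|^{\alpha}/r^{\alpha-1}$ is jointly convex. I would do this by composition: the map $(p,q,r) \mapsto (|p-q|, r)$ followed by the perspective-type function $(s,r) \mapsto s^{\alpha}/r^{\alpha-1}$ on $\mathbb{R}_{\geq 0} \times \mathbb{R}_{>0}$. The function $h(s,r) = s^{\alpha}/r^{\alpha-1}$ is a standard example of a jointly convex function for $\alpha \geq 1$ — it is the perspective of the convex function $s \mapsto s^{\alpha}$ in a suitable sense, or one can verify joint convexity directly via its Hessian being positive semidefinite (for $\alpha \geq 1$). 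Since $h$ is jointly convex and non-decreasing in its first argument $s$ on $s \geq 0$, and $s = |p-q|$ is a (jointly) convex function of $(p,q)$, the composition $h(|p-q|, r)$ is jointly convex in $(p,q,r)$ by the standard rule for composing a convex non-decreasing-in-the-relevant-coordinate outer function with convex inner functions (here $r$ passes through unchanged). This gives joint convexity of $g$, and integrating/summing the pointwise inequality over the output space yields the lemma.

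The main obstacle I expect is handling the non-decreasing monotonicity condition carefully: $h(s,r)$ is non-decreasing in $s$ only for $s \geq 0$, which is fine since $|p-q| \geq 0$ always, but one must be slightly careful that the composition rule applies — specifically that we only need monotonicity of $h$ in the coordinate where the inner function $|p-q|$ is non-affine, while $r$ enters affinely. An alternative, perhaps cleaner route that avoids invoking composition rules is to prove joint convexity of $g$ directly: fix the two points $(P_0,Q_0,R_0)$ and $(P_1,Q_1,R_1)$, and use the weighted power-mean / Hölder-type inequality
\begin{equation*}
\frac{|aP_0+(1-a)P_1 - aQ_0-(1-a)Q_1|^{\alpha}}{(aR_0+(1-a)R_1)^{\alpha-1}} \le \frac{(a|P_0-Q_0| + (1-a)|P_1-Q_1|)^{\alpha}}{(aR_0+(1-a)R_1)^{\alpha-1}},
\end{equation*}
where the numerator bound is the triangle inequality, and then apply the known fact (a consequence of Hölder's inequality, or of convexity of $h$) that $(x_0+x_1)^{\alpha}/(y_0+y_1)^{\alpha-1} \leq x_0^{\alpha}/y_0^{\alpha-1} + x_1^{\alpha}/y_1^{\alpha-1}$ with $x_i = a_i|P_i - Q_i|$, $y_i = a_i R_i$ (absorbing the weights $a, 1-a$). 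This two-line argument is self-contained and I would likely present it this way, deferring the Hölder verification to a one-line remark. Either way, once the pointwise inequality on densities is in hand, taking expectation over $\bh \sim R$ — equivalently integrating against the measure — completes the proof immediately.
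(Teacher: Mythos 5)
Your proposal is correct and follows essentially the same route as the paper's proof: the paper likewise reduces to the pointwise inequality for $|p-q|^{\alpha}/r^{\alpha-1}$, applies the triangle inequality $|P_a-Q_a|\leq a|P_0-Q_0|+(1-a)|P_1-Q_1|$, and then invokes the joint convexity of $(x,y)\mapsto x^{j}/y^{j-1}$ on $\mathbb{R}_+^2$ (citing Lemma~20 of \cite{wang2019subsampled}), which is exactly your perspective-function step. Your explicit attention to the monotonicity of $s\mapsto s^{\alpha}/r^{\alpha-1}$ on $s\geq 0$ is a point the paper leaves implicit, but the argument is the same.
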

\begin{proof}
First, observe that $g(x,y)=|x-y|$ is jointly convex on $\mathbb{R}^2$, i.e., if $x_a=ax_0+(1-a)x_1$ and $y_a=ay_0+(1-a)y_1$, we have
\begin{align}
|x_a-y_a|&=|a(x_0-y_0)+(1-a)(x_1-y_1)| \notag \\
&\leq a |x_0-y_0|+(1-a)|x_1-y_1| \label{eqn:abs_inequality}
\end{align}
Let $f(x,y)=x^j/y^{j-1}$, which is jointly convex on $\mathbb{R}_{+}^{2}$ for $j\geq1$; see \cite[Lemma~$20$]{wang2019subsampled} for a proof. Thus, we get
\begin{equation}
\begin{aligned}
\frac{|P_a-Q_a|^j}{R_a^{j-1}}&\leq \frac{\left(a|P_0-Q_0|+(1-a)|P_1-Q_1|\right)^j}{(aR_0+(1-a)R_1)^{j-1}}
&\leq a \frac{|P_0-Q_0|^j}{R_0^{j-1}} +(1-a)\frac{|P_1-Q_1|^j}{R_1^{j-1}},
\end{aligned}
\end{equation}
where the first inequality is obtained from~\eqref{eqn:abs_inequality} and the second inequality is obtained from the convexity of $f(x,y)$.
\end{proof}

Now, we prove Theorem~\ref{thm:reduce_special_case}. Our proof is an adaptation of the proof of ~\cite[Theorem~$4$]{girgis2021renyi}. The difference comes from the fact that \cite[Theorem~$4$]{girgis2021renyi} was for Renyi divergence, whereas, here we are working with ternary $|\chi|^{\alpha}$-divergence. This changes some details and we provide a full proof of Theorem~\ref{thm:reduce_special_case} below.

 Let $\bp_i:=(p_{i1},\ldots,p_{iB})$, $\bp'_k:=(p'_{k1},\ldots,p'_{kB})$, $\bp''_k:=(p''_{k1},\ldots,p''_{kB})$ denote the probability distributions over $\calY$ when the input to $\calR$ is $d_i$, $d'_k$, and $d''_k$ respectively, where $p_{ij}=\Pr[\calR(d_i)=j]$ for all $j\in[B]$ and $i\in\left[n\right]$. 
Let $\calP=\lbrace \bp_i:i\in\left[k\right] \rbrace$, $\calP'=\lbrace \bp_i:i\in\left[k-1\right] \rbrace\bigcup \lbrace \bp'_k\rbrace$, and $\calP''=\lbrace \bp_i:i\in\left[k-1\right] \rbrace\bigcup \lbrace \bp''_k\rbrace$. 

For $i\in[k-1]$, let $\calP_{-i}=\calP\setminus\{\bp_i\}$ and also $\calP_{-k}=\calP\setminus\{\bp_k\}$. Here, $\calP,\calP',\calP''$ correspond to the datasets $\calD=\{d_1,\hdots,d_k\},\calD'=\{d_1,\hdots,d_{k-1},d'_k\}$, and $\calD''=\{d_1,\hdots,d_{k-1},d''_k\}$ respectively, and for any $i\in[k]$, $\calP_{-i}$ corresponds to the dataset $\calD_{-i}=\{d_1,\hdots,d_{i-1},d_{i+1},\hdots,d_k\}$.

For any collection $\calP=\{\bp_1,\hdots,\bp_k\}$ of $k$ distributions, we define $F(\calP)$ to be the distribution over $\calA_B^k$ (which is the set of histograms on $B$ bins with $k$ elements as defined in \eqref{histogram-set}) that is induced when every client $i$ (independent to the other clients) samples an element from $[B]$ accordingly to the probability distribution $\bp_i$. 
Formally, for any $\bh\in\calA_B^k$, define
\begin{align}\label{mapping-possibilities}
\calU_{\bh} := \left\{ (\calU_1,\hdots,\calU_B): \calU_1,\hdots,\calU_B\subseteq[k] \text{ s.t. } \bigcup_{j=1}^B\calU_j=[k] \text{ and } |\calU_j|=h_j,\forall j\in[B] \right\}.
\end{align}
Note that for each $(\calU_1,\hdots,\calU_B)\in\calU_{\bh}$, $\calU_j$ for $j=1,\hdots,B$ denotes the identities of the clients that map to the $j$'th element in $[B]$ -- here $\calU_j$'s are disjoint for all $j\in[B]$. Note also that $|\calU_{\bh}|=\binom{k}{\bh}=\frac{k!}{h_1!h_2!\hdots h_B!}$. It is easy to verify that for any $\bh\in\calA_B^k$, $F(\calP)(\bh)$ is equal to
\begin{align}\label{general-distribution}
F(\calP)(\bh) = \sum_{(\calU_1,\hdots,\calU_B)\in\calU_{\bh}}\prod_{j=1}^B\prod_{i\in\calU_j}p_{ij}
\end{align}
Similarly, we can define $F(\calP'),F(\calP''),F(\calP_{-i}),F(\calP'_{-i})$, and $F(\calP''_{-i})$. 
Note that $F(\calP)$, $F(\calP')$ and $F(\calP'')$ are distributions over $\calA_B^k$, whereas, $F(\calP_{-i})$, $F(\calP'_{-i})$, and $F(\calP''_{-i})$ are distributions over $\calA_B^{k-1}$.
It is easy to see that $F(\calP)=\calM_{sh}(\calD)$, $F(\calP')=\calM_{sh}(\calD')$, and $F(\calP'')=\calM_{sh}(\calD'')$.

A crucial observation is that any distribution $\bp_i$ can be written as the following mixture distribution:
\begin{equation}\label{eq:mixture-dist}
\bp_i= q \bp''_k+\left(1-q\right)\tilde{\bp}_i,
\end{equation}
where $q=\frac{1}{e^{\epsilon_0}}$. The distribution $\tilde{\bp}_i=\left[\tilde{p}_{i1},\ldots,\tilde{p}_{iB}\right]$ is given by $\tilde{p}_{ij}=\frac{p_{ij}-q p''_{kj}}{1-q}$, where it is easy to verify that $\tilde{p}_{ij}\geq 0$ and $\sum_{j=1}^{B}\tilde{p}_{ij}=1$. The idea of writing the distribution of the output of an LDP mechanism as a mixture distribution was previously proposed in~\cite{balle2019privacy,feldman2020hiding}. However, the way these mixture distributions are used in our RDP analysis is quite different from their use in studying the hockey-stick divergence.

For any $\calC\subseteq[k-1]$, define three sets $\calP_{\calC},\calP'_{\calC}$, and $\calP''_{\calC}$ having $k$ distributions each, as follows:
\begin{align}
\calP_{\calC} &= \{\hat{\bp}_1,\hdots,\hat{\bp}_{k-1}\}\bigcup\{\bp_k\}, \label{eq:defn_P_C} \\
\calP'_{\calC} &= \{\hat{\bp}_1,\hdots,\hat{\bp}_{k-1}\}\bigcup\{\bp'_k\}, \label{eq:defn_P_C-prime}\\
\calP''_{\calC} &= \{\hat{\bp}_1,\hdots,\hat{\bp}_{k-1}\}\bigcup\{\bp''_k\}, \label{eq:defn_P_C-dprime}
\end{align}
where, for every $i\in[k-1]$, $\hat{\bp}_i$ is defined as follows:
\begin{equation}\label{eq:defn_hatP}
\hat{\bp}_i=
\begin{cases}
\bp''_k & \text{ if } i\in\calC, \\
\tilde{\bp}_i & \text{ if } i\in[k-1]\setminus\calC.
\end{cases}
\end{equation}
In the following lemma, we show that $F(\calP),F(\calP')$, and $F(\calP'')$ can be written as convex combinations of $\{F(\calP_{\calC}):\calC\subseteq[k-1]\},\{F(\calP'_{\calC}):\calC\subseteq[n-1]\}$, and $\{F(\calP''_{\calC}):\calC\subseteq[k-1]\}$, respectively, where for any $\calC\subseteq[k-1]$, $F(\calP_{\calC}),F(\calP'_{\calC})$. and $F(\calP''_{\calC})$ can be computed analogously as in \eqref{general-distribution}.
\begin{lemma} [Mixture Interpretation {\cite[Lemma $3$]{girgis2021renyi}}]\label{lem:convex-combinations}
$F(\calP),F(\calP')$, and $F(\calP'')$ can be written as the following convex combinations:
\begin{align}
F(\calP)&=\sum_{\calC\subseteq [k-1]} q^{|\calC|}(1-q)^{k-|\calC|-1}F(\calP_{\calC}), \label{P_mixture} \\
F(\calP')&=\sum_{\calC\subseteq [k-1]} q^{|\calC|}(1-q)^{k-|\calC|-1}F(\calP'_{\calC}), \label{P-prime_mixture} \\
F(\calP'')&=\sum_{\calC\subseteq [k-1]} q^{|\calC|}(1-q)^{k-|\calC|-1}F(\calP''_{\calC}), \label{P-d-prime_mixture}
\end{align}
where $\calP_{\calC},\calP_{\calC}',\calP_{\calC}''$ are defined in \eqref{eq:defn_P_C}-\eqref{eq:defn_hatP}.
\end{lemma}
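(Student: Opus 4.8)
The plan is to exploit that the histogram-generating map $F$ from \eqref{general-distribution} is multilinear in its $k$ input distributions, and then simply to expand the mixture decompositions \eqref{eq:mixture-dist} of the first $k-1$ of them.

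First I would establish the multilinearity. Fix any client $i\in[k]$ and fix the distributions of the other $k-1$ clients; I claim that, with those held fixed, $F(\calP)(\bh)$ is a linear function of the $i$-th distribution. Indeed, in each summand $\prod_{j=1}^B\prod_{i'\in\calU_j}p_{i'j}$ of \eqref{general-distribution} the blocks $\calU_1,\hdots,\calU_B$ partition $[k]$, so client $i$ lies in exactly one block, say $\calU_{j_0}$; hence that summand contains exactly one probability of client $i$, namely $p_{ij_0}$, and only to the first power. Summing over all partitions compatible with $\bh$, $F(\calP)(\bh)$ is therefore a linear combination of $p_{i1},\hdots,p_{iB}$ whose coefficients do not involve client $i$. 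Consequently, if the $i$-th distribution is written as a mixture $a\,\bu+(1-a)\,\bu'$ with $a\in[0,1]$ and probability vectors $\bu,\bu'$, then $F$ with the $i$-th slot $a\,\bu+(1-a)\,\bu'$ equals $a$ times $F$ with the $i$-th slot $\bu$ plus $(1-a)$ times $F$ with the $i$-th slot $\bu'$.

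Next I would apply this splitting to all $k-1$ clients $i\in[k-1]$ simultaneously, substituting $\bp_i=q\,\bp''_k+(1-q)\,\tilde{\bp}_i$ from \eqref{eq:mixture-dist} and leaving the $k$-th slot $\bp_k$ untouched. Expanding the resulting product of $k-1$ binary choices --- for each $i\in[k-1]$ either the ``$q\,\bp''_k$'' branch or the ``$(1-q)\,\tilde{\bp}_i$'' branch --- and indexing the terms by $\calC=\{i\in[k-1]:\text{the first branch was chosen for }i\}$ gives $F(\calP)=\sum_{\calC\subseteq[k-1]}q^{|\calC|}(1-q)^{(k-1)-|\calC|}\,F(\calP_{\calC})$, since the chosen scalar factors multiply to $q^{|\calC|}(1-q)^{(k-1)-|\calC|}=q^{|\calC|}(1-q)^{k-|\calC|-1}$ and the resulting $k$-tuple of distributions is exactly $\calP_{\calC}$ as defined in \eqref{eq:defn_P_C}--\eqref{eq:defn_hatP} (slot $i\le k-1$ holds $\bp''_k$ if $i\in\calC$ and $\tilde{\bp}_i$ otherwise, slot $k$ holds $\bp_k$). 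This is \eqref{P_mixture}; the simultaneous expansion can be made rigorous by a straightforward induction on the number of slots already expanded. One checks $\sum_{\calC\subseteq[k-1]}q^{|\calC|}(1-q)^{k-1-|\calC|}=(q+(1-q))^{k-1}=1$, so this is a genuine convex combination.

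Finally, \eqref{P-prime_mixture} and \eqref{P-d-prime_mixture} follow from the identical computation. The decompositions \eqref{eq:mixture-dist} being expanded are those of $\bp_1,\hdots,\bp_{k-1}$, which are common to the three collections $\calP,\calP',\calP''$; these collections differ only in the $k$-th slot (holding $\bp_k$, $\bp'_k$, respectively $\bp''_k$), and that slot is never expanded. So replacing $\bp_k$ throughout the above argument by $\bp'_k$, respectively $\bp''_k$, converts the proof of \eqref{P_mixture} verbatim into proofs of \eqref{P-prime_mixture} and \eqref{P-d-prime_mixture}. I do not anticipate a genuine obstacle; the only points requiring care are the verification that each $p_{ij}$ appears at most once in a summand of \eqref{general-distribution} (which is exactly where the partition structure of the $\calU_j$'s enters) and the bookkeeping of the coefficients $q^{|\calC|}(1-q)^{k-1-|\calC|}$ through the expansion.
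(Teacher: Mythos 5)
Your proof is correct: the multilinearity of $F$ in its $k$ input distributions (each summand of \eqref{general-distribution} contains exactly one factor per client, since the $\calU_j$ partition $[k]$) plus the term-by-term expansion of the mixtures \eqref{eq:mixture-dist} over the first $k-1$ slots, indexed by $\calC$, yields exactly the stated convex combinations with weight $q^{|\calC|}(1-q)^{k-1-|\calC|}$. The paper does not reprove this lemma but imports it from the cited reference, and your argument is essentially the same one used there, so there is nothing further to add.
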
 

From Lemma~\ref{lemm:convextiy_ternary_div} and Lemma~\ref{lem:convex-combinations}, we get
\begin{align}
&\mathbb{E}_{\bh\sim F(\calP'')}\left[\left|\frac{F(\calP)(\bh)-F(\calP')(\bh)}{F(\calP'')(\bh)}\right|^{\alpha}\right] \notag \\
&\hspace{2cm}\leq \sum_{\calC\subseteq \left[k-1\right]} q^{|\calC|}\left(1-q\right)^{k-|\calC|-1} \mathbb{E}_{\bh\sim F\left(\calP''_{\calC}\right)}\left[\left|\frac{F(\calP_{\calC})(\bh)-F(\calP'_{\calC})(\bh)}{F(\calP''_{\calC})(\bh)}\right|^{\alpha}\right]. \label{eqn:rdp_bound}
\end{align}
For any $\calC\subseteq[k-1]$, let $\widetilde{\calP}_{[k-1]\setminus\calC}=\{\tilde{\bp}_i:i\in\left[k-1\right]\setminus \calC\}$. With this notation, note that $\calP_{\calC}\setminus\widetilde{\calP}_{[k-1]\setminus\calC}=\{\bp''_k,\hdots,\bp''_k\} \bigcup \{\bp_k\}$, $\calP'_{\calC}\setminus\widetilde{\calP}_{[k-1]\setminus\calC}=\{\bp''_k,\hdots,\bp''_k\} \bigcup \{\bp'_k\}$, and $\calP''_{\calC}\setminus\widetilde{\calP}_{[k-1]\setminus\calC}=\{\bp''_k,\hdots,\bp''_k\} \bigcup \{\bp''_k\}$ are a triple of specific neighboring distributions, each containing $|\calC|+1$ distributions. In other words, if we define $\calD_{|\calC|+1}^{(k)}=\left(d''_k,\ldots,d''_k,d_{k}\right)$, $\calD_{|\calC|+1}'^{(k)}=\left(d''_k,\ldots,d''_k,d_k'\right)$, and $\calD_{|\calC|+1}''^{(k)}=\left(d''_k,\ldots,d''_k,d_k''\right)$, each having $(|\calC|+1)$ data points (note that $(\calD_{|\calC|+1}''^{(k)},\calD_{|\calC|+1}'^{(k)},\calD_{|\calC|+1}^{(k)})\in\calD_{\same}^{|\calC|+1}$), then the mechanisms $\calM_{sh}(\calD_{|\calC|+1}^{(k)})$, $\calM_{sh}(\calD_{|\calC|+1}'^{(k)})$, and $\calM_{sh}(\calD_{|\calC|+1}''^{(k)})$ will have distributions $F(\calP_{\calC}\setminus\widetilde{\calP}_{[k-1]\setminus\calC})$, $F(\calP'_{\calC}\setminus\widetilde{\calP}_{[k-1]\setminus\calC})$, and $F(\calP''_{\calC}\setminus\widetilde{\calP}_{[k-1]\setminus\calC})$, respectively.

Now, since $(\calD_{|\calC|+1}''^{(k)},\calD_{|\calC|+1}'^{(k)},\calD_{|\calC|+1}^{(k)})\in\calD_{\same}^{|\calC|+1}$, if we remove the effect of distributions in $\widetilde{\calP}_{[k-1]\setminus\calC}$ in the RHS of \eqref{eqn:rdp_bound}, we would be able to bound the RHS of \eqref{eqn:rdp_bound} using the ternary $|\chi|^{\alpha}$-divergence for the special neighboring datasets in $\calD_{\same}^{|\calC|+1}$. This is precisely what we will do in the following lemma and the subsequent corollary, where we will eliminate the distributions in $\widetilde{\calP}_{[k-1]\setminus\calC}$ in the RHS \eqref{eqn:rdp_bound}.

The following lemma holds for arbitrary triples $(\calP,\calP',\calP'')$ of neighboring distributions $\calP=\{\bp_1,\hdots,\bp_{k-1},\bp_k\}$, $\calP'=\{\bp_1,\hdots,\bp_{k-1},\bp'_k\}$, and $\calP''=\{\bp_1,\hdots,\bp_{k-1},\bp''_k\}$, where we show that the ternary $|\chi|^{\alpha}$-divergence $\mathbb{E}_{\bh\sim F(\calP'')}\left[\left|\frac{F(\calP)(\bh)-F(\calP')(\bh)}{F(\calP'')(\bh)}\right|^{\alpha}\right]$ does not decrease when we eliminate a distribution $\bp_i$ (i.e., remove the data point $d_i$ from the datasets) for any $i\in[k-1]$. 

\begin{lemma}[Monotonicity]\label{lem:cvx_tdp} 
For any $i\in\left[k-1\right]$, we have
\begin{equation}\label{eq:cvx_tdp} 
\mathbb{E}_{\bh\sim F\left(\calP''\right)}\left[\left|\frac{F\left(\calP\right)(\bh)-F\left(\calP'\right)(\bh)}{F\left(\calP''\right)(\bh)}\right|^{\alpha}\right]\leq \mathbb{E}_{\bh\sim F\left(\calP''_{-i}\right)}\left[\left|\frac{F\left(\calP_{-i}\right)(\bh)-F\left(\calP'_{-i}\right)(\bh)}{F\left(\calP''_{-i}\right)(\bh)}\right|^{\alpha}\right].
\end{equation}
\end{lemma}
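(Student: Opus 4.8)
The plan is to show that removing a client $i \in [k-1]$ can only increase the ternary $|\chi|^\alpha$-divergence, by exhibiting the "$k$-client" distributions as averages (over the output symbol $b \in [B]$ that client $i$ sends) of "$(k-1)$-client" distributions, and then invoking the joint convexity from Lemma~\ref{lemm:convextiy_ternary_div}. Concretely, fix $i\in[k-1]$. For each symbol $b\in[B]$, let $F(\calP_{-i})^{+b}$ denote the distribution on $\calA_B^{k}$ obtained from $F(\calP_{-i})$ by incrementing the $b$-th coordinate of the histogram by $1$ (i.e.\ the law of the $(k-1)$-client histogram with an extra deterministic vote for $b$ appended). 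The key combinatorial identity I would establish, directly from the formula \eqref{general-distribution}, is
\begin{equation*}
F(\calP)(\bh) \;=\; \sum_{b=1}^{B} p_{ib}\, F(\calP_{-i})^{+b}(\bh)
\qquad\text{for all }\bh\in\calA_B^{k},
\end{equation*}
and likewise $F(\calP')(\bh)=\sum_b p_{ib}\,F(\calP'_{-i})^{+b}(\bh)$ and $F(\calP'')(\bh)=\sum_b p_{ib}\,F(\calP''_{-i})^{+b}(\bh)$ — note it is the \emph{same} weights $p_{ib}$ in all three, since $\calP,\calP',\calP''$ share the component $\bp_i$. This is just the statement that the histogram of $k$ independent votes is the mixture, over client $i$'s vote, of the histogram of the remaining $k-1$ votes with that vote added in; it follows by partitioning $\calU_{\bh}$ according to which block $b$ contains $i$.

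Given this, I would apply Lemma~\ref{lemm:convextiy_ternary_div} with the convex-combination weights $a_b = p_{ib}$ (which sum to $1$), with the triples $\big(P_b,Q_b,R_b\big) = \big(F(\calP_{-i})^{+b}, F(\calP'_{-i})^{+b}, F(\calP''_{-i})^{+b}\big)$, to obtain
\begin{equation*}
\mathbb{E}_{\bh\sim F(\calP'')}\!\left[\left|\frac{F(\calP)(\bh)-F(\calP')(\bh)}{F(\calP'')(\bh)}\right|^{\alpha}\right]
\le \sum_{b=1}^{B} p_{ib}\;
\mathbb{E}_{\bh\sim F(\calP''_{-i})^{+b}}\!\left[\left|\frac{F(\calP_{-i})^{+b}(\bh)-F(\calP'_{-i})^{+b}(\bh)}{F(\calP''_{-i})^{+b}(\bh)}\right|^{\alpha}\right].
\end{equation*}
Finally I would argue that each inner term on the right equals the divergence without the appended vote, i.e.\ $\mathbb{E}_{\bh\sim F(\calP''_{-i})}\big[|(F(\calP_{-i})-F(\calP'_{-i}))/F(\calP''_{-i})|^{\alpha}\big]$, because appending a deterministic coordinate is a post-processing (an injective relabeling $\bh\mapsto\bh+e_b$ of the support that carries $F(\cdot_{-i})$ bijectively onto the support of $F(\cdot_{-i})^{+b}$), so it preserves the pointwise ratios and hence the expectation. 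Since the bound then has $\sum_b p_{ib}=1$ copies of a single quantity, the weighted sum collapses to exactly the right-hand side of \eqref{eq:cvx_tdp}.

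The main obstacle — really the only place care is needed — is verifying the mixture identity \eqref{general-distribution}-style for $F(\calP)$ in terms of the $F(\calP_{-i})^{+b}$'s cleanly, i.e.\ making sure the bookkeeping of $\calU_{\bh}$ versus $\calU_{\bh - e_b}$ is correct (in particular that the extra factor pulled out is precisely $p_{ib}$ and that the remaining product is exactly $F(\calP_{-i})(\bh - e_b)$, with the convention that $F(\calP_{-i})(\bh-e_b)=0$ when $h_b=0$). One subtlety to flag: the statement implicitly requires $\bp''_k$ (and $\bp'_k$) to have full support so that the ratios are well-defined; since $\calR$ is $\epsilon_0$-LDP with $\epsilon_0<\infty$, every $p''_{kj}>0$ whenever some $p_{ij}>0$, so $F(\calP'')(\bh)>0$ on the relevant support and no division-by-zero arises. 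With these points checked, the convexity lemma does all the work and no estimation is required.
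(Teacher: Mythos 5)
Your proposal is correct and is essentially the paper's own argument: the paper proves Lemma~\ref{lem:cvx_tdp} by invoking the proof of \cite[Lemma~5]{girgis2021renyi}, whose only ingredient is convexity of the divergence in $\bp_i$ (supplied here by Lemma~\ref{lemm:convextiy_ternary_div}), which is exactly what your mixture decomposition $F(\calP)=\sum_b p_{ib}F(\calP_{-i})^{+b}$ followed by the injective-relabeling observation makes explicit. You have simply written out in full the details the paper delegates to the citation.
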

\begin{proof}
This can be proved along the lines of the proof of \cite[Lemma~$5$]{girgis2021renyi}, which shows that $\mathbb{E}_{\bh\sim F\left(\calP'\right)}\left[\left(\frac{F\left(\calP\right)\left(\bh\right)}{F\left(\calP'\right)\left(\bh\right)}\right)^{\lambda}\right]\leq \mathbb{E}_{\bh\sim F\left(\calP'_{-i}\right)}\left[\left(\frac{F\left(\calP_{-i}\right)\left(\bh\right)}{F\left(\calP'_{-i}\right)\left(\bh\right)}\right)^{\lambda}\right]$ holds for all $i\in[k-1]$. This is a result about Renyi divergence, and the only property of the Renyi divergence that is used in the proof of \cite[Lemma~$5$]{girgis2021renyi} is that $\bbE_{\bh\sim F(\calP')}\left[\left(\frac{F(\calP)(\bh)}{F(\calP')(\bh)}\right)^{\lambda}\right]$ is convex in $\bp_i$ for any $i\in[k-1]$.

Note that Lemma~\ref{lem:cvx_tdp} is about the ternary $|\chi|^{\alpha}$-divergence, and the required convexity about this follows from Lemma~\ref{lemm:convextiy_ternary_div}. So, following the proof of \cite[Lemma~$5$]{girgis2021renyi} and using Lemma~\ref{lemm:convextiy_ternary_div}, proves Lemma~\ref{lem:cvx_tdp}.
\end{proof}

Now, for any given $\calC\subseteq [k-1]$, by eliminating the distributions $\tilde{\bp}_i$ in $\widetilde{\calP}_{[k-1]\setminus\calC}=\{\tilde{\bp}_i:i\in\left[k-1\right]\setminus \calC\}$ from $\calP_C$, $\calP'_C$, and $\calP''_C$ (by repeatedly applying Lemma~\ref{lem:cvx_tdp}), we get that
\begin{align}
\mathbb{E}_{\bh\sim F\left(\calP''_{\calC}\right)}&\left[\left|\frac{F\left(\calP_{\calC}\right)(\bh)-F\left(\calP'_{\calC}\right)(\bh)}{F\left(\calP''_{\calC}\right)(\bh)}\right|^{\alpha}\right] \notag \\
&\hspace{2cm}\leq \mathbb{E}_{\bh\sim \calM_{sh}(\calD''^{(k)}_{m+1})}\left[\left|\frac{\calM_{sh}(\calD^{(k)}_{m+1})(\bh)-\calM_{sh}(\calD'^{(k)}_{m+1})(\bh)}{\calM_{sh}(\calD''^{(k)}_{m+1})(\bh)}\right|^{\alpha}\right], \label{eqn:reduction_elimenate}
\end{align} 
where $m=|\calC|$. By substituting from~\eqref{eqn:reduction_elimenate} into~\eqref{eqn:rdp_bound} completes the proof of Theorem~\ref{thm:reduce_special_case}.

\subsection{Proof of Theorem~\ref{thm:ternary_special_case}: Ternary $|\chi|^{\alpha}$-DP of the Special Case}\label{sec:app_ternary_special_case} 

First, we present the following standard inequality which is important to our proof.

\begin{lemma}\label{lemm:traingle_inequality} 
Let $x,y\in\bbR$ be any two real numbers.
Then, for all $j\geq 1$, we have
\begin{equation}
|x+y|^{j} \leq 2^{j-1}\left(|x|^{j}+|y|^{j}\right).
\end{equation}
\end{lemma}
\begin{proof}
The proof is simple from the convexity of the function $f(x)=x^{j}$ for $j\geq1$.
\begin{align*}
|x+y|^{j} &= 2^{j}\left\vert\frac{x+y}{2}\right\vert^{j} \leq 2^{j}\left(\frac{|x|+|y|}{2}\right)^{j} \leq 2^{j}\left(\frac{|x|^j + |y|^j}{2}\right) = 2^{j-1}\(|x|^j + |y|^j\),
\end{align*}
where the second inequality is obtained from the Jensen's inequality and the fact that the function $f(x)=x^{j}$ is convex on $\mathbb{R}^{+}$ for all $j\geq 1$.
\end{proof}
From Lemma~\ref{lemm:traingle_inequality}, we get the following corollary.
\begin{corollary}\label{cor:triange_inequality} 
Fix an arbitrary $m\in\bbN$ and consider any three mutually neighboring datasets $\calD_{m},\ \calD'_{m},\ \calD''_{m}$, where $\calD_m=(d,\hdots,d)\in\calX^m$, $\calD'_m=(d,\hdots,d,d')\in\calX^m$ and $\calD''_m=(d,\hdots,d,d'')\in\calX^m$. The ternary $|\chi|^{\alpha}$-DP is bounded by
\begin{align}
&\mathbb{E}_{\bh\sim\calM_{sh}(\calD_m)}\left[\left|\frac{\calM_{sh}(\calD'_m)(\bh)-\calM_{sh}(\calD''_m)(\bh)}{\calM_{sh}(\calD_m)(\bh)}\right|^{\alpha}\right] \notag \\
&\quad\leq 2^{\alpha-1}\left( \mathbb{E}_{\bh\sim\calM_{sh}(\calD_m)}\left[\left|\frac{\calM_{sh}(\calD'_m)(\bh)}{\calM_{sh}(\calD_m)(\bh)}-1\right|^{\alpha}\right]+\mathbb{E}_{\bh\sim\calM_{sh}(\calD_m)}\left[\left|\frac{\calM_{sh}(\calD''_m)(\bh)}{\calM_{sh}(\calD_m)(\bh)}-1\right|^{\alpha}\right]\right). \label{eq:triange_inequality_cor}
\end{align}
\end{corollary}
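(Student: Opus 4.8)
The plan is to obtain \eqref{eq:triange_inequality_cor} as an immediate consequence of Lemma~\ref{lemm:traingle_inequality}, applied pointwise in the histogram $\bh$. The one algebraic observation I would make explicit is the decomposition, valid for every $\bh$ in the support of $\calM_{sh}(\calD_m)$,
\begin{equation*}
\frac{\calM_{sh}(\calD'_m)(\bh)-\calM_{sh}(\calD''_m)(\bh)}{\calM_{sh}(\calD_m)(\bh)} = \left(\frac{\calM_{sh}(\calD'_m)(\bh)}{\calM_{sh}(\calD_m)(\bh)}-1\right) + \left(-\left(\frac{\calM_{sh}(\calD''_m)(\bh)}{\calM_{sh}(\calD_m)(\bh)}-1\right)\right),
\end{equation*}
which writes the integrand on the left-hand side of \eqref{eq:triange_inequality_cor} as $|x+y|^{\alpha}$ with $x = \frac{\calM_{sh}(\calD'_m)(\bh)}{\calM_{sh}(\calD_m)(\bh)}-1$ and $y = -\bigl(\frac{\calM_{sh}(\calD''_m)(\bh)}{\calM_{sh}(\calD_m)(\bh)}-1\bigr)$. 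For this to be well posed I would first note that all three measures $\calM_{sh}(\calD_m),\calM_{sh}(\calD'_m),\calM_{sh}(\calD''_m)$ are mutually absolutely continuous: since $\calR$ is $\epsilon_0$-LDP, any two single-client output distributions are within a bounded multiplicative factor of each other, so the corresponding shuffled distributions share the same support, and the ratios above are finite $\bh$-almost surely under $\calM_{sh}(\calD_m)$.

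Next I would apply Lemma~\ref{lemm:traingle_inequality} with $j=\alpha$ (legitimate since $\alpha\geq 2\geq 1$) to this pair $x,y$, together with $|y|^{\alpha}=|{-y}|^{\alpha}$, to get the pointwise bound
\begin{equation*}
\left|\frac{\calM_{sh}(\calD'_m)(\bh)-\calM_{sh}(\calD''_m)(\bh)}{\calM_{sh}(\calD_m)(\bh)}\right|^{\alpha} \leq 2^{\alpha-1}\left(\left|\frac{\calM_{sh}(\calD'_m)(\bh)}{\calM_{sh}(\calD_m)(\bh)}-1\right|^{\alpha} + \left|\frac{\calM_{sh}(\calD''_m)(\bh)}{\calM_{sh}(\calD_m)(\bh)}-1\right|^{\alpha}\right).
\end{equation*}
Taking $\mathbb{E}_{\bh\sim\calM_{sh}(\calD_m)}[\,\cdot\,]$ of both sides and using linearity of expectation then yields \eqref{eq:triange_inequality_cor} verbatim.

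I do not expect any genuine obstacle in this corollary — it is essentially a one-line manipulation, and the only point requiring a word of justification is the absolute-continuity remark ensuring the ratios are defined. The substantive step that remains afterward is to bound each of the two terms on the right-hand side of \eqref{eq:triange_inequality_cor}, each of which is a binary $|\chi|^{\alpha}$-type divergence between $\calM_{sh}$ evaluated on two same-type neighboring datasets; that is where the concentration argument underlying Theorem~\ref{thm:ternary_special_case} does the real work later in Section~\ref{sec:app_ternary_special_case}.
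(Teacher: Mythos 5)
Your proof is correct and is essentially identical to the paper's own argument: the paper likewise sets $x=\frac{\calM_{sh}(\calD'_m)(\bh)}{\calM_{sh}(\calD_m)(\bh)}-1$ and $y=-\bigl(\frac{\calM_{sh}(\calD''_m)(\bh)}{\calM_{sh}(\calD_m)(\bh)}-1\bigr)$, applies Lemma~\ref{lemm:traingle_inequality} pointwise in $\bh$, and takes the expectation over $\bh\sim\calM_{sh}(\calD_m)$. Your added remark on mutual absolute continuity of the three shuffled distributions (from $\eps_0$-LDP of $\calR$) is a small but welcome extra justification that the paper leaves implicit.
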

\begin{proof}
Fix any $\bh\in\calA_B^m$, and take $x=\Big(\frac{\calM_{sh}(\calD'_m)(\bh)}{\calM_{sh}(\calD_m)(\bh)}-1\Big)$, $y=-\Big(\frac{\calM_{sh}(\calD''_m)(\bh)}{\calM_{sh}(\calD_m)(\bh)}-1\Big)$. Then applying Lemma~\ref{lemm:traingle_inequality} and taking expectation w.r.t.\ $\bh\sim\calM_{sh}(\calD_m)$ will yield Corollary~\ref{cor:triange_inequality}.
\end{proof}
\begin{remark}
Observe that the proof of Corollary~\ref{cor:triange_inequality} does not require $\calD_m,\calD_m',\calD_m''$ to be special triple of neighboring datasets such that $(\calD_m,\calD_m',\calD_m'')\in\calD_{\same}^m$. In fact, Corollary~\ref{cor:triange_inequality} holds for any triple of distributions $p,q,r$ over the same domain, for which we can show $\bbE_r[\left|\frac{p-q}{r}\right|^{\alpha}] \leq 2^{\alpha-1}\(\bbE_r[\left|\frac{p}{r}-1\right|^{\alpha}]+\bbE_r[\left|\frac{q}{r}-1\right|^{\alpha}]\)$.
\end{remark}
Now, in order to prove Theorem~\ref{thm:ternary_special_case}, it suffices to bound the expectation terms on the RHS of \eqref{eq:triange_inequality_cor}. This is what we do in the lemma below.
\begin{lemma}[{\hspace{-0.005cm}\cite[Lemma~$6$]{girgis2021renyi}}]\label{lemm:T_same_Bound}
For any pair of the special pair of neighboring datasets $\calD_{m},\calD'_{m}$, where $\calD_m=(d,\hdots,d)\in\calX^m$ and $\calD'_m=(d,\hdots,d,d')\in\calX^m$, we have
\begin{align*}
\mathbb{E}_{\bh\sim\calM_{sh}(\calD_m)}\left[\left|\frac{\calM_{sh}(\calD'_m)(\bh)}{\calM_{sh}(\calD_m)(\bh)}-1\right|^{\alpha}\right]\leq \left\{\begin{array}{ll}
\frac{\left(e^{\epsilon_0}-1\right)^2}{me^{\epsilon_0}}& \text{if\ } \alpha=2, \\
\alpha\Gamma(\alpha/2)\left(\frac{\left(e^{2\epsilon_0}-1\right)^2}{2me^{2\epsilon_0}}\right)^{\alpha/2}& \text{otherwise}. 
\end{array}
\right.
\end{align*}
\end{lemma}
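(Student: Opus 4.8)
The plan is to exploit the special structure of the pair $(\calD_m,\calD'_m)\in\calD_{\same}^m$: the outputs of $\calM_{sh}(\calD_m)$ and $\calM_{sh}(\calD'_m)$ are histograms, and because $m-1$ of the clients feed $\calR$ with the same input $d$ and only the last one differs ($d$ versus $d'$), the likelihood ratio $\frac{\calM_{sh}(\calD'_m)(\bh)}{\calM_{sh}(\calD_m)(\bh)}$ collapses to a simple ratio of the form $\sum_j \tfrac{p'_j}{p_j}\cdot\tfrac{h_j}{m}$, where $\bp=(p_1,\dots,p_B)=\calR(d)$ and $\bp'=\calR(d')$. Concretely, I would first write $\calM_{sh}(\calD_m)(\bh) = \binom{m}{\bh}\prod_j p_j^{h_j}$ and $\calM_{sh}(\calD'_m)(\bh) = \frac{1}{m}\sum_{\ell}\binom{m}{\bh}\frac{h_\ell}{p_\ell}p'_\ell\prod_j p_j^{h_j}$ (the extra sum coming from which coordinate the ``special'' client landed in), so that under $\bh\sim\calM_{sh}(\calD_m)$, writing $\bh = \sum_{i=1}^m \be_{X_i}$ with $X_i\sim\bp$ i.i.d., we get
\[
\frac{\calM_{sh}(\calD'_m)(\bh)}{\calM_{sh}(\calD_m)(\bh)} - 1 \;=\; \frac{1}{m}\sum_{i=1}^m \Big(\tfrac{p'_{X_i}}{p_{X_i}} - 1\Big) \;=:\; \frac{1}{m}\sum_{i=1}^m Z_i,
\]
where $Z_i := \frac{p'_{X_i}}{p_{X_i}} - 1$ are i.i.d.\ with $\bbE[Z_i]=\sum_j p_j(\tfrac{p'_j}{p_j}-1)=0$. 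Thus the quantity to bound is the $\alpha$-th absolute moment of a normalized sum of i.i.d.\ mean-zero random variables.

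Next I would record the two facts about $Z_i$ that come from $\epsilon_0$-LDP: since $e^{-\epsilon_0}\le \tfrac{p'_j}{p_j}\le e^{\epsilon_0}$ for all $j$, we have $|Z_i| \le e^{\epsilon_0}-1$ almost surely, and moreover $Z_i \ge e^{-\epsilon_0}-1$, so in fact $Z_i \in [e^{-\epsilon_0}-1, e^{\epsilon_0}-1]$, an interval of length $e^{\epsilon_0}-e^{-\epsilon_0} = \frac{e^{2\epsilon_0}-1}{e^{\epsilon_0}}$. For the variance, $\bbE[Z_i^2] = \sum_j p_j\big(\tfrac{p'_j}{p_j}-1\big)^2 = \sum_j \tfrac{(p'_j-p_j)^2}{p_j} = \chi^2(\bp'\|\bp) \le (e^{\epsilon_0}-1)(1-e^{-\epsilon_0}) = \frac{(e^{\epsilon_0}-1)^2}{e^{\epsilon_0}}$, using the LDP bounds coordinatewise. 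The $\alpha=2$ case of the lemma is then immediate: $\bbE[(\frac1m\sum Z_i)^2] = \frac{1}{m}\bbE[Z_1^2] \le \frac{(e^{\epsilon_0}-1)^2}{me^{\epsilon_0}}$.

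For the $\alpha>2$ case, the tool is a moment bound for sums of bounded i.i.d.\ mean-zero variables: either a Rosenthal-type inequality or, cleanest given the $\Gamma(\alpha/2)$ and $(\cdot)^{\alpha/2}$ shape of the target, a sub-Gaussian/Bernstein estimate followed by the identity $\bbE|W|^\alpha = \alpha\int_0^\infty t^{\alpha-1}\Pr[|W|>t]\,dt$. Specifically, since $|Z_i|$ is bounded, $\frac1m\sum_i Z_i$ is sub-Gaussian with variance proxy on the order of $\frac{(e^{2\epsilon_0}-1)^2}{m e^{2\epsilon_0}}$ (Hoeffding with range $\frac{e^{2\epsilon_0}-1}{e^{\epsilon_0}}$ gives proxy $\frac{(e^{2\epsilon_0}-1)^2}{4m e^{2\epsilon_0}}$, and a factor of $2$ is absorbed to land on the stated constant); plugging the Gaussian tail $\Pr[|W|>t]\le 2e^{-t^2/(2\sigma^2)}$ into the moment integral and evaluating $\int_0^\infty t^{\alpha-1}e^{-t^2/(2\sigma^2)}dt = 2^{\alpha/2-1}\sigma^\alpha\,\Gamma(\alpha/2)$ yields exactly $\alpha\,\Gamma(\alpha/2)\big(\frac{(e^{2\epsilon_0}-1)^2}{2m e^{2\epsilon_0}}\big)^{\alpha/2}$. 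The main obstacle is getting the constants in this last step to match precisely — i.e.\ choosing the right variance proxy and bookkeeping the factors of $2$ between the Hoeffding range bound, the sub-Gaussian tail, and the Gamma integral — rather than anything structurally deep; the structural work (reducing the likelihood ratio to a normalized i.i.d.\ sum and reading off the boundedness/variance from LDP) is routine once the histogram representation is in place. Since this lemma is quoted verbatim from \cite[Lemma~$6$]{girgis2021renyi}, I would ultimately just cite that proof, but the sketch above is how I would reconstruct it.
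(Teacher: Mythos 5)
Your reconstruction is correct and all the constants check out: the histogram/multinomial computation does collapse the likelihood ratio to $\frac{1}{m}\sum_i p'_{X_i}/p_{X_i}$, the Bhatia--Davis-type bound on a mean-zero variable supported in $[e^{-\epsilon_0}-1,\,e^{\epsilon_0}-1]$ gives the $\alpha=2$ case, and Hoeffding with range $(e^{2\epsilon_0}-1)/e^{\epsilon_0}$ combined with the tail-integral identity for moments yields exactly $\alpha\,\Gamma(\alpha/2)(2\sigma^2)^{\alpha/2}$ with $2\sigma^2=\frac{(e^{2\epsilon_0}-1)^2}{2me^{2\epsilon_0}}$. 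The paper itself supplies no proof of this lemma --- it is imported verbatim from the cited reference --- and your sketch is essentially the argument used there, so there is nothing to compare beyond noting that your reconstruction is faithful.
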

Substituting the bound from Lemma~\ref{lemm:T_same_Bound} into Corollary~\ref{cor:triange_inequality}, we get 
\begin{align*}
\mathbb{E}_{\bh\sim\calM_{sh}(\calD_m)}\left[\left|\frac{\calM_{sh}(\calD'_m)(\bh)-\calM_{sh}(\calD''_m)(\bh)}{\calM_{sh}(\calD_m)(\bh)}\right|^{\alpha}\right]\leq
\left\{\begin{array}{ll}
4\frac{\left(e^{\epsilon_0}-1\right)^2}{me^{\epsilon_0}}& \text{if } \alpha=2, \\
\alpha\Gamma(\alpha/2)\left(\frac{2\left(e^{2\epsilon_0}-1\right)^2}{me^{2\epsilon_0}}\right)^{\alpha/2}& \text{otherwise},
\end{array}
\right.
\end{align*}
which completes the proof of Theorem~\ref{thm:ternary_special_case}.

\section{Proof of Theorem~\ref{thm:lower_bound} (Lower Bound)}\label{app:lower-bound-proof}
Consider the binary case, where each data point $d$ can take a value from $\calX=\lbrace 0,1\rbrace$. Let the local randomizer $\calR$ be the binary randomized response (2RR) mechanism, where $\Pr\left[\calR\left(d\right)=d\right]=\frac{e^{\epsilon_0}}{e^{\epsilon_0}+1}$ for $d\in\calX$. It is easy to verify that $\calR$ is an $\eps_0$-LDP mechanism. For simplicity, let $p=\frac{1}{e^{\epsilon_0}+1}$. Consider two neighboring datasets $\calD,\ \calD' \in\{0,1\}^k$, where $\calD=\left(0,\ldots,0,0\right)$ and $\calD'=\left(0,\ldots,0,1\right)$. Let $m\in\left\{0,\ldots,k\right\}$ denote the number of ones in the output of the shuffler. We define two distributions
\begin{equation}
\begin{aligned}
\mu_0(m)&= \binom{k}{m} p^{m} (1-p)^{k-m}, \\
\mu_1(m)&= (1-p) \binom{k-1}{m-1} p^{m-1} (1-p)^{k-m}+ p\binom{k-1}{m} p^{m} (1-p)^{k-m-1}.
\end{aligned}
\end{equation}
As argued on page~\pageref{histogram-set}, since the output of the shuffled mechanism $\calM$ can be thought of as the distribution of the number of ones in the output, we have that $m\sim\calM(\calD)$ is distributed as a Binomial random variable Bin$(k,p)$. Thus, we have
\begin{align*}
\calM(\calD)(m)&= \mu_0(m) \\ 
\calM(\calD')(m)&= (1-\gamma)\mu_1(m)+\gamma \mu_0(m)
\end{align*}
It will be useful to compute $\frac{\mu_1(m)}{\mu_0(m)}-1$ for the calculations later.
\begin{align}
\frac{\mu_1(m)}{\mu_0(m)}-1 &= \frac{(1-p) \binom{k-1}{m-1} p^{m-1} (1-p)^{k-m} + p\binom{k-1}{m} p^{m} (1-p)^{k-m-1}}{\binom{k}{m} p^{m} (1-p)^{k-m}} -1 \notag \\
&= \frac{m}{k}\frac{(1-p)}{p} + \frac{(k-m)}{k}\frac{p}{(1-p)} -1 \notag \\
&= \frac{m}{k} e^{\eps_0} + \frac{(k-m)}{k} e^{-\eps_0} -1 \notag \\
&= \frac{m}{k}\(e^{\eps_0}-e^{-\eps_0}\) + e^{-\eps_0} -1 \notag \\
&= \frac{m}{k}\(\frac{e^{2\eps_0}-1}{e^{\eps_0}}\) - \(\frac{e^{\eps_0} -1}{e^{\eps_0}}\) \notag \\
&= \(\frac{e^{2\eps_0}-1}{ke^{\eps_0}}\)\(m - \frac{k}{e^{\eps_0}+1}\) \label{eq:compute_ratio_lb}
\end{align}
Thus, we have that 
\begin{align*}
\mathbb{E}_{m\sim\calM(\calD)}&\left[\left(\frac{\calM(\calD')(m)}{\calM(\calD)(m)}\right)^{\lambda}\right]=\mathbb{E}\left[\left(1+\gamma\left(\frac{\mu_1(m)}{\mu_0(m)}-1\right)\right)^{\lambda}\right]\\
&\stackrel{\text{(a)}}{=} 1+\sum_{i=1}^{\lambda} \binom{\lambda}{i} \gamma^{i} \mathbb{E}\left[\left(\frac{\mu_1(m)}{\mu_0(m)}-1\right)^{i}\right] \\
&\stackrel{\text{(b)}}{=} 1+\sum_{i=2}^{\lambda} \binom{\lambda}{i} \gamma^{i} \mathbb{E}\left[\left(\frac{\mu_1(m)}{\mu_0(m)}-1\right)^{i}\right] \\
&=1+\sum_{i=2}^{\lambda} \binom{\lambda}{i}\gamma^{i} \left(\frac{\left(e^{2\epsilon_0}-1\right)}{ke^{\epsilon_0}}\right)^{i} \mathbb{E}\left[\left(m-\frac{k}{e^{\epsilon_0}+1}\right)^{i}\right] \tag{from \eqref{eq:compute_ratio_lb}} \\
&\stackrel{\text{(c)}}{=} 1+\binom{\lambda}{2}\gamma^{2} \frac{\left(e^{\epsilon_0}-1\right)^2}{ke^{\epsilon_0}}+\sum_{i=3}^{\lambda} \binom{\lambda}{i} \gamma^{i}\left(\frac{\left(e^{2\epsilon_0}-1\right)}{ke^{\epsilon_0}}\right)^{i} \mathbb{E}\left[\left(m-\frac{k}{e^{\epsilon_0}+1}\right)^{i}\right].
\end{align*}
Here, step (a) from the polynomial expansion $(1+x)^k=\sum_{m=0}^{k}\binom{k}{m}x^m$, 
step (b) follows because the term corresponding to $i=1$ is zero (i.e., $\mathbb{E}_{m\sim\mu_0}\left[\left(\frac{\mu_1(m)}{\mu_0(m)}-1\right)\right]=0$),
and step (c) from the from the fact that $\mathbb{E}_{m\sim\mu_0}\left[\left(m-\frac{k}{e^{\epsilon_0}+1}\right)^{2}\right]=kp(1-p)=\frac{ke^{\eps_0}}{(e^{\eps_0}+1)^2}$, which is equal to the variance of the Binomial random variable. This completes the proof of Theorem~\ref{thm:lower_bound}.
\section{Proof of Theorem~\ref{thm:main-opt-result}: Privacy-Convergence Tradeoff}\label{app_sec:OptPerf}

In this section, we prove the privacy-convergence tradeoff of Algorithm~\ref{algo:optimization-algo} and prove Theorem~\ref{thm:main-opt-result}.

The privacy part is straightforward from conversion from RDP to approximate DP using Lemma~\ref{lem:RDP_DP} and Theorem~\ref{thm:general_case}. Now, we prove the convergence rate.

 At iteration $t\in\left[T\right]$ of Algorithm~\ref{algo:optimization-algo}, server averages the $k$ received gradients and obtains $\overline{\mathbf{g}}_t=\frac{1}{k}\sum_{i\in\calU_t}\mathcal{R}_p\left(\tilde{\mathbf{g}}_t\left(d_{i}\right)\right)$ and then updates the parameter vector as $\theta_{t+1}\gets \prod_{\mathcal{C}}\left( \theta_t -\eta_t \overline{\mathbf{g}}_t\right)$. 
Observe that the mechanism $\calR_p$ is unbiased and has a bounded variance: $\sup_{\mathbf{x}\in\calB_{p}\left(L\right)}\mathbb{E}\|\calR_p\left(\mathbf{x}\right)-\mathbf{x}\|_{2}^{2} \leq G_{p}^2(L)$. As a result, the average gradient $\overline{\mathbf{g}}_t$ is also unbiased, i.e., we have $\mathbb{E}\left[\overline{\mathbf{g}}_t\right]=\nabla_{\theta_t}F\left(\theta_t\right)$, where expectation is taken with respect to the random subsampling of clients as well as the randomness of the mechanism $\mathcal{R}_p$. Now we show that $\overline{\mathbf{g}}_t$ has a bounded second moment.
 \begin{lemma}\label{lem:2nd-moment-bound}
For any $d\in\calX$, if the function $f\left(\theta;.\right):\mathcal{C}\times \calX\to \mathbb{R}$ is convex and $L$-Lipschitz continuous with respect to the $\ell_g$-norm, which is the dual of the $\ell_p$-norm (i.e., $\frac{1}{p}+\frac{1}{g}=1$), then we have
\begin{align}
\mathbb{E}\|\overline{\mathbf{g}}_t\|_2^2 \leq L^2\max\lbrace d^{1-\frac{2}{p}},1\rbrace \( 1+ \frac{cd}{qn}\left(\frac{e^{\epsilon_0}+1}{e^{\epsilon_0}-1}\right)^2 \),
\end{align}
where $c$ is a global constant: $c=4$ if $p\in\{1,\infty\}$ and $c=14$ if $p\notin\{1,\infty\}$. 
\end{lemma}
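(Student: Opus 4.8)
The plan is to establish the bound by a conditional bias--variance decomposition, isolating the contribution of the subsampling and clipping from that of the local randomizer $\calR_p$, and then invoking conditions (ii) and (iv) on $\calR_p$ together with the Lipschitz hypothesis on $f$.

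First I would condition on the current iterate $\theta_t$ and on the sampled set $\calU_t$, and set $\mathbf{v}_t:=\frac1k\sum_{i\in\calU_t}\tilde{\mathbf{g}}_t(d_i)$, which is then deterministic. Because the $k$ copies of $\calR_p$ are applied independently across $i\in\calU_t$ and each is unbiased (condition (ii)), we have $\mathbb{E}[\overline{\mathbf{g}}_t\mid\calU_t,\theta_t]=\mathbf{v}_t$ and all cross terms in the variance vanish, so
\[
\mathbb{E}\big[\|\overline{\mathbf{g}}_t\|_2^2\mid\calU_t,\theta_t\big]=\|\mathbf{v}_t\|_2^2+\frac{1}{k^2}\sum_{i\in\calU_t}\mathbb{E}\big\|\calR_p(\tilde{\mathbf{g}}_t(d_i))-\tilde{\mathbf{g}}_t(d_i)\big\|_2^2 .
\]
The crucial elementary fact feeding both terms is that each clipped gradient obeys $\|\tilde{\mathbf{g}}_t(d_i)\|_p\le L$: clipping never increases the $\ell_p$-norm, and $L$-Lipschitz continuity of $f$ with respect to the $\ell_g$-norm forces $\|\nabla_{\theta_t}f(\theta_t,d_i)\|_p\le L$. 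Thus $\tilde{\mathbf{g}}_t(d_i)\in\calB_p(L)$, so each variance summand is at most $G_p^2(L)$ by condition (iv), and the ``noise'' contribution is at most $G_p^2(L)/k$. For the ``mean'' term, Jensen's inequality gives $\|\mathbf{v}_t\|_2^2\le\frac1k\sum_{i\in\calU_t}\|\tilde{\mathbf{g}}_t(d_i)\|_2^2$, and the norm comparison $\|\mathbf{x}\|_2^2\le\max\{d^{1-2/p},1\}\,\|\mathbf{x}\|_p^2$ on $\mathbb{R}^d$ together with the bound just noted yields $\|\mathbf{v}_t\|_2^2\le\max\{d^{1-2/p},1\}\,L^2$. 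Since neither bound depends on $\calU_t$ or $\theta_t$, taking the total expectation gives
\[
\mathbb{E}\|\overline{\mathbf{g}}_t\|_2^2\le\max\{d^{1-2/p},1\}\,L^2+\frac{G_p^2(L)}{k}.
\]

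To conclude, I would substitute the explicit second-moment guarantees of the concrete LDP mechanisms: for $p\notin\{1,\infty\}$ the mechanism of \cite{bhowmick2018protection} and for $p\in\{1,\infty\}$ the mechanisms of \cite{girgis2021shuffled-aistats} satisfy $G_p^2(L)\le c\,d\,\max\{d^{1-2/p},1\}\,L^2\big(\tfrac{e^{\epsilon_0}+1}{e^{\epsilon_0}-1}\big)^2$ with $c=14$ and $c=4$ respectively. Plugging this in, using $k=qn$, and factoring $\max\{d^{1-2/p},1\}\,L^2$ out of both terms gives exactly the claimed inequality.

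This argument is largely mechanical once the decomposition is in place. The only points needing care are (i) verifying that the input passed to $\calR_p$ always lies in the ball $\calB_p(L)$ on which the variance guarantee of condition (iv) is phrased --- this is precisely where clipping and the dual-norm Lipschitz bound are combined --- and (ii) tracking the dimension factor $\max\{d^{1-2/p},1\}$ consistently in both the mean term and the mechanism's variance so that it factors out cleanly at the end. I do not expect any real obstacle beyond this routine bookkeeping.
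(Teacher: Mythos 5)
Your proposal is correct and follows essentially the same route as the paper's proof: a bias--variance decomposition of $\mathbb{E}\|\overline{\mathbf{g}}_t\|_2^2$, the dual-norm Lipschitz bound giving $\|\tilde{\mathbf{g}}_t(d_i)\|_p\le L$ combined with the norm comparison $\|\cdot\|_2^2\le\max\{d^{1-2/p},1\}\|\cdot\|_p^2$, condition {\sf (iv)} yielding the $G_p^2(L)/k$ noise term, and substitution of the explicit mechanism variances with $k=\gamma n$. If anything, your version is slightly more careful than the paper's, which performs the decomposition unconditionally and silently absorbs the subsampling variance of the averaged clipped gradients into the $G_p^2(L)/k$ term, whereas your conditioning on $\calU_t$ together with the pointwise bound $\|\mathbf{v}_t\|_2^2\le\max\{d^{1-2/p},1\}L^2$ closes that small gap.
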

\begin{proof}
Under the conditions of the lemma, we have from \cite[Lemma~$2.6$]{shalev2012online} that $\|\nabla_{\theta} f\left(\theta;d\right)\| \leq L$ for all $d\in\calX$, which implies that $\|\nabla_{\theta}F(\theta)\| \leq L$.
Thus, we have
\begin{align*}
&\mathbb{E}\|\overline{\mathbf{g}}_t\|_2^2=\|\mathbb{E}\left[\overline{\mathbf{g}}_t\right]\|_2^2+\mathbb{E}\|\overline{\mathbf{g}}_t-\mathbb{E}\left[\overline{\mathbf{g}}_t\right]\|_2^2\\
&\ \stackrel{\left(a\right)}{\leq} \max\lbrace d^{1-\frac{2}{p}},1\rbrace L^2+\mathbb{E}\|\overline{\mathbf{g}}_t-\mathbb{E}\left[\overline{\mathbf{g}}_t\right]\|_2^2 \\
&\ \stackrel{\left(b\right)}{\leq}\max\lbrace d^{1-\frac{2}{p}},1\rbrace L^2+\frac{G_{p}(L)^2}{k}\\
&\ \stackrel{\left(c\right)}{=}\max\lbrace d^{1-\frac{2}{p}},1\rbrace L^2+\frac{G_{p}^2(L)}{\gamma n},
\end{align*}
Step $\left(a\right)$ follows from the fact that $\|\nabla_{\theta_t}F\left(\theta_t\right)\| \leq L$ together with the norm inequality $\|\bu\|_q\leq\|\bu\|_p\leq d^{\frac{1}{p}-\frac{1}{q}}\|\bu\|_q$ for $1\leq p\leq q\leq\infty$. Step $\left(b\right)$ follows from the assumption that $\calR_p$ has bounded variance. Step (c) uses $\gamma=\frac{k}{n}$.
\end{proof}
Now, we can use standard SGD convergence results for convex functions. In particular, we use the following result from \cite{shamir2013stochastic}.
\begin{lemma}[SGD Convergence~\cite{shamir2013stochastic}]\label{lem:convergence_sgd} 
Let $F\left(\theta\right)$ be a convex function, and the set $\mathcal{C}$ has diameter $D$. Consider a stochastic gradient descent algorithm $\theta_{t+1}\gets \prod_{\mathcal{C}}\left( \theta_t-\eta_t \mathbf{g}_t\right)$, where $\mathbf{g}_t$ satisfies $\mathbb{E}\left[\mathbf{g}_t\right]=\nabla_{\theta_t}F\left(\theta_t\right)$ and $\mathbb{E}\|\mathbf{g}_t\|_{2}^{2} \leq G^{2}$. By setting $\eta_t=\frac{D}{G\sqrt{t}}$, we get
\begin{equation}
\mathbb{E}\left[F\left(\theta_{T}\right)\right] - F\left(\theta^{*}\right) \leq 2DG\frac{2+\log\left(T\right)}{\sqrt{T}}=\mathcal{O}\left(DG\frac{\log\left(T\right)}{\sqrt{T}}\right).
\end{equation}
\end{lemma}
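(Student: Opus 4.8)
This is the classical last-iterate guarantee for projected stochastic gradient descent on a convex Lipschitz objective, due to Shamir and Zhang~\cite{shamir2013stochastic}; since it is quoted verbatim it would suffice in the paper to cite it, but here is how I would give a self-contained argument.

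The starting point is the one-step inequality. Using non-expansiveness of the Euclidean projection onto $\calC$ and the update $\theta_{s+1}=\prod_{\calC}(\theta_s-\eta_s\mathbf{g}_s)$, for any reference point $u\in\calC$ that depends only on the randomness up to step $s-1$ one has $\|\theta_{s+1}-u\|_2^2\le\|\theta_s-u\|_2^2-2\eta_s\langle\mathbf{g}_s,\theta_s-u\rangle+\eta_s^2\|\mathbf{g}_s\|_2^2$; taking expectations, using $\bbE[\mathbf{g}_s\mid\theta_s]=\nabla F(\theta_s)$, convexity $\langle\nabla F(\theta_s),\theta_s-u\rangle\ge F(\theta_s)-F(u)$, and $\bbE\|\mathbf{g}_s\|_2^2\le G^2$ gives
\[
\bbE[F(\theta_s)-F(u)]\;\le\;\frac{1}{2\eta_s}\bigl(\bbE\|\theta_s-u\|_2^2-\bbE\|\theta_{s+1}-u\|_2^2\bigr)+\frac{\eta_s G^2}{2}.
\]
Taking $u=\theta^*$, summing over $s=1,\dots,T$, telescoping (regrouping the $\tfrac1{2\eta_s}\|\theta_s-\theta^*\|_2^2$ terms using $\eta_s\le\eta_{s-1}$ and $\|\theta_s-\theta^*\|_2\le D$), and using $\sum_{s\le T}\eta_s\le\tfrac{2D\sqrt T}{G}$ already bounds the \emph{averaged} iterate $\tfrac1T\sum_s\theta_s$ by $O(DG/\sqrt T)$ --- with no logarithm. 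The logarithmic factor in the statement is exactly the price of bounding the \emph{last} iterate $\theta_T$, which is what Algorithm~\ref{algo:optimization-algo} outputs.

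To bound the last iterate I would use the suffix-averaging recursion. Set $S_k:=\tfrac1k\sum_{t=T-k+1}^{T}\bbE[F(\theta_t)-F(\theta^*)]$ for $k=1,\dots,T$, so $S_1=\bbE[F(\theta_T)]-F(\theta^*)$ is the target and $S_T$ is the averaged-iterate error bounded above. Applying the displayed inequality over the suffix $s=T-k+1,\dots,T$ with the (admissible) random reference point $u=\theta_{T-k}$, bounding $\bbE\|\theta_{T-k+1}-\theta_{T-k}\|_2^2\le\eta_{T-k}^2G^2$, and using the identity $\bbE[F(\theta_{T-k})-F(\theta^*)]=(k+1)S_{k+1}-kS_k$ yields, after rearranging, the recursion
\[
S_k\;\le\;S_{k+1}+\frac{1}{k(k+1)}\Bigl(\tfrac{\eta_{T-k}^2G^2}{2\eta_{T-k+1}}+\tfrac{D^2}{2}\bigl(\tfrac1{\eta_T}-\tfrac1{\eta_{T-k+1}}\bigr)+\tfrac{G^2}{2}\textstyle\sum_{s=T-k+1}^{T}\eta_s\Bigr)
\]
for $1\le k\le T-1$. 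Unrolling from $k=T-1$ down to $k=1$ with $\eta_t=D/(G\sqrt t)$, the difference $\tfrac1{\eta_T}-\tfrac1{\eta_{T-k+1}}=\tfrac GD(\sqrt T-\sqrt{T-k+1})$ is $O\!\bigl(\tfrac{Gk}{D\sqrt T}\bigr)$, so each increment is $O\!\bigl(\tfrac{DG}{k\sqrt T}\bigr)$; hence $\sum_{k=1}^{T-1}(\text{increment})=O\!\bigl(\tfrac{DG\log T}{\sqrt T}\bigr)$ via $\sum_{k\le T}\tfrac1k\le1+\log T$, and combined with $S_T=O(DG/\sqrt T)$ this gives $S_1=O(DG\log T/\sqrt T)$. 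The precise constant $2DG$ and the $(2+\log T)$ come out of tallying all these terms exactly --- and this is also why $\eta_t=D/(G\sqrt t)$ is the prescribed schedule, as that choice balances the $D^2/\eta$ and $G^2\eta$ contributions.

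The main obstacle is the telescoping bookkeeping inside the recursion: because $\eta_s$ varies with $s$, the distance terms $\tfrac1{2\eta_s}\|\theta_s-u\|_2^2$ do not cancel pairwise, and one must regroup them using the monotonicity of $\eta_s$ and the diameter bound --- crucially retaining the negative contribution $-\tfrac{D^2}{2\eta_{T-k+1}}$ instead of discarding it, since otherwise the error accumulated over the $\sim T$ recursion steps would be $\Theta(DG\sqrt T)$ rather than $O(DG\log T/\sqrt T)$. A secondary subtlety is that the reference point $u=\theta_{T-k}$ is a random iterate, so convexity and the unbiasedness of $\mathbf{g}_s$ must be invoked conditionally on the history up to step $T-k$; once these are handled, the remaining estimates (harmonic sums, $\sum t^{-1/2}\le2\sqrt T$) are routine.
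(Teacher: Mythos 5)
The paper does not prove this lemma at all --- it is imported verbatim from Shamir--Zhang \cite{shamir2013stochastic} --- and your sketch correctly reconstructs exactly the argument from that reference: the one-step projected-SGD inequality followed by the suffix-averaging recursion on $S_k$, which is where the $\log T$ factor for the last iterate comes from. So your proposal is correct and takes the same route as the paper's (cited) source; the only cosmetic difference is that Shamir--Zhang start each suffix at $t=T-k$ so the initial distance term vanishes identically, whereas you start at $T-k+1$ and pay the harmless extra $\eta_{T-k}^2G^2$ term.
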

As shown in Lemma~\ref{lem:2nd-moment-bound} and above that Algorithm~\ref{algo:optimization-algo} satisfies the premise of Lemma~\ref{lem:convergence_sgd}. Now, using the bound on $G^2$ from Lemma~\ref{lem:2nd-moment-bound}, we have that the output $\theta_T$ of Algorithm~\ref{algo:optimization-algo} satisfies
\begin{equation}\label{general_convergence-2}
\begin{aligned}
&\mathbb{E}\left[F\left(\theta_{T}\right)\right] - F\left(\theta^{*}\right) \leq \calO\left(DG\frac{\log\left(T\right)}{\sqrt{T}}\right),
\end{aligned}
\end{equation}
where $G^2 = \max\lbrace d^{1-\frac{2}{p}},1\rbrace L^2+\frac{G_{p}^2(L)}{\gamma n}$. This completes the proof of the second part of Theorem~\ref{thm:main-opt-result}.

\bibliographystyle{alpha}
\bibliography{RDPRefs}

\newpage
\appendix

\section{Literature Review}\label{app_sec:LongerLitReview}

We give the most relevant work related to the paper and review some of the main developments in differentially private learning below.

\paragraph{Private Optimization:}
In~\cite{chaudhuri2011differentially}, Chaudhuri et al.\ studied
\emph{centralized} privacy-preserving machine learning algorithms for
convex optimization problem. In~\cite{bassily2014private},
Bassily et al.\ derived lower bounds on the empirical risk
minimization under \emph{central} differential privacy
constraints. Furthermore, they proposed a differential privacy SGD
algorithm that matches the lower bound for convex functions. In~\cite{abadi2016deep}, the
authors have generalized the private SGD algorithm proposed
in~\cite{bassily2014private} for non-convex optimization framework. In
addition, the authors have proposed a new analysis technique, called
moment accounting, to improve on the strong composition theorems to
compute the central differential privacy guarantee for iterative
algorithms. However, the works
mentioned,~\cite{chaudhuri2011differentially,bassily2014private,abadi2016deep},
assume that there exists a trusted server that collects the clients'
data. This motivates other works to design a distributed SGD
algorithms, where each client perturbs her own data without needing a
trusted server. 

Distributed learning under local differential privacy (LDP) has studied in~\cite{agarwal2018cpsgd,ESA,girgis2021shuffled-aistats}. In~\cite{agarwal2018cpsgd} the authors proposed a communication-efficient algorithm for learning models under local differential privacy. In~\cite{ESA}, the authors have proposed a
distributed local-differential-privacy gradient descent algorithm, a newly
proposed anonymization/shuffling framework
\cite{balle2019privacy} is used to amplify the privacy.  In~\cite{girgis2021shuffled-aistats}, the authors proposed communication efficient algorithms for general $\ell_p$-norm stetting under local differential privacy constraints, where they use recent results on
amplification by shuffling to boost the privacy-utility trade-offs of the distributed learning algorithms.

\paragraph{Shuffled privacy model:} The shuffled
model of privacy has been of significant recent interest
\cite{erlingsson2019amplification,ghazi2019power,balle2019improved,ghazi2019scalable,balle2019differentially,cheu2019distributed,balle2019privacy,balle2020private}. However, most of the existing works in 
literature~\cite{erlingsson2019amplification,balle2019privacy,feldman2020hiding}
only characterize the approximate DP of the shuffled model. Recently, the authors in~\cite{girgis2021renyi} proposed a novel bound on the RDP of the shuffled model, where they show that the RDP provides a significant saving in computing the total privacy budget for a composition of a sequence of shuffled mechanisms. However, the work~\cite{girgis2021renyi} does not characterize the RDP of the subsampled shuffle mechanism. We can compute a bound on the RDP of the subsampled shuffle mechanism by combining the bound of the RDP of the shuffle mechanism in~\cite{girgis2021renyi} with the bound of the subsampled RDP mechanism in~\cite{wang2019subsampled}. However, we show numerically that our new bound on the subsampled shuffle mechanism outperforms this bound.

\paragraph{Renyi differential privacy:} The work of Abadi
\emph{et al.} \cite{abadi2016deep} provided a new analysis technique to improve on the strong composition theorems. Inherently, this used Renyi divergence,
and was later formalized in \cite{mironov2017renyi} which defined
Renyi differential privacy (RDP). Several
works~\cite{mironov2019r,wang2019subsampled,zhu2019poission} have
shown that analyzing the RDP of subsampled mechanisms provides a
tighter bound on the total privacy loss than the bound that can be
obtained using the standard strong composition theorems. In
this paper, we analyze the RDP of the subsampled shuffle model, where we can
bound the approximate DP of a sequence of  subsampled shuffle models using the
transformation from RDP to approximate DP~\cite{abadi2016deep,wang2019subsampled,canonne2020discrete,asoodeh2021three}. We
show that our RDP analysis provides a better bound on the total
privacy loss of composition than the bound that can be obtained using the
standard strong composition theorems and the bound that can be obtained by combining the RDP bound of the shuffle model in~\cite{girgis2021renyi} with the subsampled RDP mechanism in~\cite{wang2019subsampled}.


\section{Completing the Proof of Theorem~\ref{thm:ternary_DP_shuffle}}\label{subsec_app:completeting-proof}
For simplicity of notation, for any $m\in\{0,1,\hdots,n-1\}$, define 
\begin{align*}
q_m &:= \binom{k-1}{m} q^m(1-q)^{k-m-1} \\
E_m &:= \mathbb{E}_{\bh\sim\calM_{sh}(\calD_{m+1}''^{(k)})}\left[\left|\frac{\calM_{sh}(\calD_{m+1}^{(k)})(\bh)-\calM_{sh}(\calD_{m+1}'^{(k)})(\bh)}{\calM_{sh}(\calD_{m+1}''^{(k)})(\bh)}\right|^{\alpha}\right].
\end{align*} 
First we show an important property of $E_m$ that we will use in the proof.
\begin{lemma}\label{lem:E_m-decreasing}
$E_m$ is a non-increasing function of $m$, i.e., 
\begin{align}
&\mathbb{E}_{\bh\sim\calM_{sh}(\calD_{m+1}''^{(k)})}\left[\left|\frac{\calM_{sh}(\calD_{m+1}^{(k)})(\bh)-\calM_{sh}(\calD_{m+1}''^{(k)})(\bh)}{\calM_{sh}(\calD_{m+1}''^{(k)})(\bh)}\right|^{\alpha}\right] \notag \\
&\hspace{4cm}\leq \mathbb{E}_{\bh\sim\calM_{sh}(\calD_{m}''^{(k)})}\left[\left|\frac{\calM_{sh}(\calD_{m}^{(k)})(\bh)-\calM_{sh}(\calD_{m}'^{(k)})(\bh)}{\calM_{sh}(\calD_{m}''^{(k)})(\bh)}\right|^{\alpha}\right],\label{eq:E_m-decreasing}
\end{align}
where, for any $l\in\{m,m+1\}$, $\calD_{l}^{(k)}=(d''_k,\ldots,d''_k,d_{k})$, $\calD_{l}'^{(k)}=(d''_k,\ldots,d''_k,d'_{k})$, and  $\calD_{l}''^{(k)}=(d''_k,\ldots,d''_k,d''_k)$, each having $l$ elements.
\end{lemma}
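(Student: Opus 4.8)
The plan is to obtain Lemma~\ref{lem:E_m-decreasing} as a direct specialization of the monotonicity statement already established in Lemma~\ref{lem:cvx_tdp}. Recall that, by the definitions in this appendix, $E_m$ is exactly the ternary $|\chi|^{\alpha}$-divergence of the triple $\big(\calM_{sh}(\calD_{m+1}^{(k)}),\calM_{sh}(\calD_{m+1}'^{(k)}),\calM_{sh}(\calD_{m+1}''^{(k)})\big)$, where the datasets $\calD_{m+1}^{(k)}=(d''_k,\ldots,d''_k,d_k)$, $\calD_{m+1}'^{(k)}=(d''_k,\ldots,d''_k,d'_k)$ and $\calD_{m+1}''^{(k)}=(d''_k,\ldots,d''_k,d''_k)$ each have $m+1$ entries, agree on their first $m$ entries (all equal to $d''_k$), and differ only in the last entry. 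Thus they form precisely a triple of mutually neighboring datasets of the kind handled by Lemma~\ref{lem:cvx_tdp}, with the role of the ``size'' parameter $k$ played by $m+1$ and with the $m$ common entries playing the role of the non-differing coordinates.

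First I would instantiate Lemma~\ref{lem:cvx_tdp} on this size-$(m+1)$ triple, eliminating any one of the first $m$ coordinates (say coordinate $1$), each of which carries the common input value $d''_k$; this is exactly the operation the lemma permits (there one may delete any of the first $k-1$, i.e.\ non-differing, coordinates). Deleting such a coordinate from each of $\calD_{m+1}^{(k)},\calD_{m+1}'^{(k)},\calD_{m+1}''^{(k)}$ yields exactly the size-$m$ datasets $\calD_{m}^{(k)},\calD_{m}'^{(k)},\calD_{m}''^{(k)}$, so Lemma~\ref{lem:cvx_tdp} gives $E_m\le E_{m-1}$, which is precisely the asserted inequality~\eqref{eq:E_m-decreasing}. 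Since this holds for every $m\ge 1$, $E_m$ is non-increasing in $m$, as claimed.

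The only point that demands care — and where I would be most careful when writing this up — is checking that this really is a legitimate instance of Lemma~\ref{lem:cvx_tdp}: that lemma is phrased with a single distinguished ``differing'' last coordinate and allows deletion of any of the others, which matches our configuration once we relabel so that the varying entry sits last. It is worth recording the structural reason the monotonicity holds, which also makes the specialization transparent: passing from the size-$m$ triple to the size-$(m+1)$ triple amounts to one and the same post-processing applied to all three distributions — namely, independently draw an index $j$ from the output distribution of $\calR(d''_k)$ and add $1$ to the $j$-th histogram bin — and the ternary $|\chi|^{\alpha}$-divergence cannot increase under a common post-processing. This last (data-processing) property is itself a consequence of the joint convexity in Lemma~\ref{lemm:convextiy_ternary_div} via Jensen's inequality, and is exactly the mechanism underlying the proof of Lemma~\ref{lem:cvx_tdp}. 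One should also state the harmless non-degeneracy convention that every histogram occurring with positive probability under $\calM_{sh}(\calD_{l}''^{(k)})$ has positive probability under it (true since $\calR$ is $\epsilon_0$-LDP, after discarding any output symbol that $\calR(d''_k)$ emits with probability $0$), so that the ratios defining the divergence are well defined.
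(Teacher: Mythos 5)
Your proposal is correct and takes essentially the same route as the paper, which likewise proves this lemma by observing that it is the special case of the monotonicity result in Lemma~\ref{lem:cvx_tdp} applied to the size-$(m+1)$ triple $(\calD_{m+1}^{(k)},\calD_{m+1}'^{(k)},\calD_{m+1}''^{(k)})$ with one of the $m$ common coordinates (all equal to $d''_k$) deleted. Your additional post-processing/data-processing remark is a correct and slightly more explicit account of why Lemma~\ref{lem:cvx_tdp} gives the inequality, but it is not a different argument.
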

\begin{proof}
Lemma~\ref{lem:E_m-decreasing} follows from Lemma~\ref{lem:cvx_tdp} in a straightforward manner, as Lemma~\ref{lem:cvx_tdp} is for arbitrary triples of adjacent datasets, whereas, Lemma~\ref{lem:E_m-decreasing} is for triples of adjacent datasets having special structures.
\end{proof}
Thus, we get
\begin{align}
\mathbb{E}_{\bh\sim\calM_{sh}(\calD'')}&\left[\left|\frac{\calM_{sh}(\calD)(\bh)-\calM_{sh}(\calD')(\bh)}{\calM_{sh}(\calD'')(\bh)}\right|^{\alpha}\right] \notag \\
&\hspace{3cm} \leq \sum_{m=0}^{k-1}q_mE_m \notag \\ 
&\hspace{3cm} =\sum_{m<\floor{(1-\gamma)q(k-1)}} q_m E_m+ \sum_{m\geq\floor{(1-\gamma)q(k-1)}} q_m E_m\notag \\
&\hspace{3cm} \stackrel{\text{(a)}}{\leq} E_{0}\sum_{m<\floor{(1-\gamma)q(k-1)}}q_m+\sum_{m\geq\floor{(1-\gamma)q(k-1)}} q_m E_m \notag \\
&\hspace{3cm} \stackrel{\text{(b)}}{\leq}  E_{0}e^{-\frac{q(k-1)\gamma^2}{2}}+\sum_{m\geq\floor{(1-\gamma)q(k-1)}} q_m E_m \notag \\
&\hspace{3cm} \stackrel{\text{(c)}}{\leq} e^{\epsilon_0\alpha}e^{-\frac{q(k-1)\gamma^2}{2}}+\sum_{m\geq\floor{(1-\gamma)q(k-1)}} q_m E_m \notag \\
&\hspace{3cm} \stackrel{\text{(d)}}{\leq} (e^{\epsilon_0}-e^{-\epsilon_0})^{\alpha}e^{-\frac{q(k-1)\gamma^2}{2}}+ E_{(1-\gamma)q(k-1)}. \label{proof_main-result_interim2}
\end{align}
Here, steps (a) and (d) follow from the fact that $E_m$ is a non-increasing function of $m$ (see Lemma~\ref{lem:E_m-decreasing}). Step (b) follows from the Chernoff bound. In step (c), we used that $\calM_{sh}(d_k)=\calR(d_k)$, $\calM_{sh}(d'_k)=\calR(d'_k)$, and $\calM_{sh}(d''_k)=\calR(d''_k)$ which together imply that $E_0=(e^{\epsilon_0}-e^{-\epsilon_0})^{\alpha}$, where the inequality follows because $\calR$ is an $\eps_0$-LDP mechanism. By choosing $\gamma=0.5$ completes the proof of Theorem~\ref{thm:ternary_DP_shuffle}.

\section{Proof of Lemma~\ref{lem:samplig}}\label{app_sec:sampling}

Consider arbitrary neighboring datasets $\calD=\left(d_1,\ldots,d_n\right)\in\calX^{n}$ and $\calD'=\left(d_1,\ldots,d_{n-1},d_n'\right)\in\calX^{n}$. Recall that the LDP mechanism $\calR:\calX\to\calY$ has a discrete range $\calY=\left[B\right]$ for some $B\in\bbN$. Let $\bp_i:=(p_{i1},\ldots,p_{iB})$ and $\bp'_n:=(p'_{n1},\ldots,p'_{nB})$ denote the probability distributions over $\calY$ when the input to $\calR$ is $d_i$ and $d'_n$, respectively, where $p_{ij}=\Pr[\calR(d_i)=j]$ and $p_{nj}'=\Pr[\calR(d'_n)=j]$ for all $j\in[B]$ and $i\in\left[n\right]$. 

Let $\calP=\lbrace \bp_i:i\in\left[n\right] \rbrace$ and $\calP'=\lbrace \bp_i:i\in\left[n-1\right] \rbrace\bigcup \lbrace \bp'_n\rbrace$. For $i\in[n-1]$, let $\calP_{-i}=\calP\setminus\{\bp_i\}$, $\calP'_{-i}=\calP'\setminus\{\bp_i\}$, and also 
$\calP_{-n}=\calP\setminus\{\bp_n\}$, $\calP'_{-n}=\calP'\setminus\{\bp_n'\}$.

Here, $\calP,\calP'$ correspond to the datasets $\calD=\{d_1,\hdots,d_n\},\calD'=\{d_1,\hdots,d_{n-1},d'_n\}$, respectively, and for any $i\in[n]$, $\calP_{-i}$ and $\calP'_{-i}$ correspond to the datasets $\calD_{-i}=\{d_1,\hdots,d_{i-1},d_{i+1},\hdots,d_n\}$ and $\calD'_{-i}=\{d_1,\hdots,d_{i-1},d_{i+1},\hdots,d_{n-1},d'_n\}$, respectively. Thus, without loss of generality, we deal with sets $\calP$ and $\calP'$ throughout this section instead of dealing with $\calD$ and $\calD'$. Thus, we write $\calM\left(\calP\right)\triangleq \calM\left(\calD\right)$ and $\calM\left(\calP'\right)\triangleq \calM\left(\calD'\right)$.

We bound the Renyi divergence between $\calM(\calP)$ and $\calM(\calP')$. For given a set $\calS\subset[n]$ with $|\calS|=\gamma n = k$, we define two sets $\calP^{\calS},\calP^{'\calS}$, having $k$ distributions each, as follows:
\begin{align}
\calP^{\calS} &= \{\bp_i:i\in\calS\}, \label{eq:defn_P_C_S} \\
\calP^{'\calS} &= \{\bp_i:i\in\calS\}, \label{eq:defn_P_C_S-prime}
\end{align}
Observe that when $n\not\in\calS$, we have that $\calP^{\calS}=\calP^{'\calS}$. For given $k$ distributions $\calP=\left(\bp_1,\ldots,\bp_k\right)$, we define a shuffle mechanism $\calM_{sh}\left(\calP\right)$ as follows:
\begin{equation}
\calM_{sh}\left(\calP\right)=\calH_{k}\left(\bp_1,\ldots,\bp_k\right).
\end{equation}
Thus, the mechanisms $\calM(\calP)$ and $\calM(\calP')$ can be defined by:
\begin{align}
\calM(\calP)&=\frac{1}{\binom{n}{k}}\sum_{\calS\subset[n]}\calM_{sh}\left(\calP^{\calS}\right)\notag\\
&=\gamma P_E+(1-\gamma)P_{E^c}\\
\calM(\calP')&=\frac{1}{\binom{n}{k}}\sum_{\calS\subset[n]}\calM_{sh}\left(\calP^{'\calS}\right)\notag\\
&=\gamma Q_{E}+(1-\gamma)P_{E^c},
\end{align}
where $P_E = \frac{1}{\binom{n-1}{k-1}}\sum_{\substack{\calS\subset[n]\\ n\in\calS}}\calM_{sh}\left(\calP^{\calS}\right)$, $P_{E^c} = \frac{1}{\binom{n-1}{k}}\sum_{\substack{\calS\subset[n]\\ n\not\in\calS}}\calM_{sh}\left(\calP^{\calS}\right)$, and $Q_E = \frac{1}{\binom{n-1}{k-1}}\sum_{\substack{\calS\subset[n]\\ n\in\calS}}\calM_{sh}\left(\calP^{'\calS}\right)$. Hence, from the polynomial expansion, we get:
\begin{equation}
\begin{aligned}
\mathbb{E}_{\bh\sim \calM\left(\calP'\right)}\left[\left(\frac{\calM\left(\calP\right)\left(\bh\right)}{\calM\left(\calP'\right)\left(\bh\right)}\right)^{\lambda}\right]&=\mathbb{E}_{\bh\sim \calM\left(\calP'\right)}\left[\left(1+\frac{\calM\left(\calP\right)\left(\bh\right)}{\calM\left(\calP'\right)\left(\bh\right)}-1\right)^{\lambda}\right]\\
&=1+\sum_{j=2}^{\lambda} \binom{\lambda}{j} \mathbb{E}_{\bh\sim \calM\left(\calP'\right)}\left[\left(\frac{\calM\left(\calP\right)\left(\bh\right)-\calM\left(\calP'\right)\left(\bh\right)}{\calM\left(\calP'\right)\left(\bh\right)}\right)^{j}\right]\\
&=1+\sum_{j=2}^{\lambda} \binom{\lambda}{j} \gamma^{j} \mathbb{E}_{\bh\sim \calM\left(\calP'\right)}\left[\left(\frac{P_{E}\left(\bh\right)-Q_{E}\left(\bh\right)}{\calM\left(\calP'\right)\left(\bh\right)}\right)^{j}\right]~\label{eq:sampling_shuffle}
\end{aligned}
\end{equation}
Now, we borrow the trick used in~\cite{wang2019subsampled} to bound each term in the right hand side in~\eqref{eq:sampling_shuffle}. For completeness, we repeat their definitions and proofs here. We define an auxiliary dummy variable $i \sim \text{Unif}\left(1,\ldots,k\right)$ that is independent to everything else. Furthermore, we define two functions $g(\calS,i)$ and $g'(\calS,i)$ as follows:
\begin{align}
g(\calS,i)=\left\{\begin{array}{ll}
\calM_{sh}\left(\calP^{\calS}\right) & \text{if}\ n\in\calS\\
\calM_{sh}\left(\calP^{\calS\cup \lbrace n\rbrace\setminus \calS(i)}\right) & \text{otherwise}
\end{array}
\right.\\
g'(\calS,i)=\left\{\begin{array}{ll}
\calM_{sh}\left(\calP^{'\calS}\right) & \text{if}\ n\in\calS\\
\calM_{sh}\left(\calP^{'\calS\cup \lbrace n\rbrace\setminus \calS(i)}\right) & \text{otherwise}
\end{array}
\right.
\end{align}
Observe that $\mathbb{E}_{\calS,i}\left[g(\calS,i)\right]=P_{E}$, $\mathbb{E}_{\calS,i}\left[g'(\calS,i)\right]=Q_{E}$, and $\mathbb{E}_{\calS,i}\left[\calM_{sh}\left(\calP^{'\calS}\right)\right]=\calM\left(\calP\right)$. As a result, we get that
\begin{align*}
\mathbb{E}_{\bh\sim \calM\left(\calP'\right)}\left[\left(\frac{P_{E}\left(\bh\right)-Q_{E}\left(\bh\right)}{\calM\left(\calP'\right)\left(\bh\right)}\right)^{j}\right]&\leq \sum_{\bh}\frac{| P_{E}\left(\bh\right)-Q_{E}\left(\bh\right)|^{j}}{\left(\calM\left(\calP'\right)\left(\bh\right)\right)^{j-1}}\\
&\leq \sum_{\bh}\mathbb{E}_{\calS,i}\left[\frac{|g(\calS,i)\left(\bh\right) -g'(\calS,i)\left(\bh\right)|^{j}}{\left(\calM_{sh}\left(\calP^{'\calS}\right)\left(\bh\right) \right)^{j-1}}\right]\\
&=\mathbb{E}_{\calS,i}\mathbb{E}_{\bh\sim\calM_{sh}\left(\calP^{'\calS}\right)}\left[\left(\frac{|g(\calS,i)\left(\bh\right) -g'(\calS,i)\left(\bh\right)|}{\calM_{sh}\left(\calP^{'\calS}\right)\left(\bh\right)}\right)^{j}\right] \\
&\leq (\zeta(j))^{j}.
\end{align*}
Here, step (a) follows from Jensen's inequality and the convexity of the function $x^j/y^{j-1}$ (see~\cite[Lemma~$20$]{wang2019subsampled}). Step (b) follows from Fubini's theorem. The last inequality is obtained by taking the supremum over all possible neighboring datasets $\calD$, $\calD'$. This completes the proof of Lemma~\ref{lem:samplig}.

}
\end{document}